\DeclareMathOperator*{\argmin}{arg\,min}
\DeclareMathOperator*{\argmax}{arg\,max}
    \let\Cref\crtCref
    \let\cref\crtcref
\newcommand{\RomanNum}[1]{\uppercase\expandafter{\romannumeral #1}}
\let\hat\widehat
\let\tilde\widetilde
\def\given{{\,|\,}}
\newcommand{\cD}{\mathcal{D}}
\newcommand{\cF}{\mathcal{F}}
\newcommand{\cG}{\mathcal{G}}
\newcommand{\cH}{\mathcal{H}}
\newcommand{\cN}{\mathcal{N}}
\newcommand{\cO}{\mathcal{O}}
\newcommand{\cS}{{\mathcal{S}}}
\newcommand{\cX}{\mathcal{X}}
\newcommand{\HH}{\mathbb{H}}
\newcommand{\PP}{\mathbb{P}}
\newcommand{\E}{{\mathbb E}}
\newcommand{\Tr}{\mathop{\text{tr}}\kern.2ex}
\DeclareMathOperator{\Var}{{\rm Var}}
\newcommand{\ud}{{\mathrm{d}}}
\newcommand{\RR}{\mathbb{R}}
\newcommand{\Sur}{{\text{Sur}}}
\theoremstyle{plain}
\newtheorem{theorem}{Theorem}[section]
\newtheorem{lemma}{Lemma}[section]
\newtheorem{proposition}[lemma]{Proposition}
\theoremstyle{definition}
\newtheorem{assumption}{Assumption}[section]
\theoremstyle{remark}
\crefname{assumption}{assumption}{assumptions}
\Crefname{assumption}{Assumption}{Assumptions}
\title{From Confounding to Learning: Dynamic Service Fee Pricing on Third-Party Platforms}
\author{
Rui Ai\thanks{MIT. Email: \texttt{\{ruiai, dslevi, fengzhu\}@mit.edu.}} \and
David Simchi-Levi\footnotemark[1] \and
Feng Zhu\footnotemark[1]
}
\begin{document}
\maketitle

\begin{abstract}
We study the pricing behavior of third-party platforms facing strategic agents. Assuming the platform is a revenue maximizer, it observes market features that generally affect demand. Since only transacted quantities and prices can be observed, this presents a general demand learning problem under confounding. Mathematically, we develop an algorithm with optimal regret of $\Tilde{\mathcal{O}}(\sqrt{T}\wedge\sigma_S^{-2})$. Our results reveal that supply-side noise fundamentally affects the learnability of demand, leading to a phase transition in regret. Technically, we show that non-i.i.d. actions can serve as instrumental variables for learning demand. We also propose a novel homeomorphic construction that allows us to establish estimation bounds without assuming star-shapedness, providing the first efficiency guarantee for learning demand with deep neural networks. 
Finally, we use simulations and offline counterfactuals from Talabat and Lyft data to illustrate the potential revenue implications of our approach.
\end{abstract}

\section{Introduction}
Third-party platforms now intermediate millions of transactions every day
across food delivery, ticket resale, ride-hailing, and electric-vehicle
charging~\citep{feldman2018service,pires2025much}. Each of these platforms earns revenue not from the underlying goods but from a \emph{service fee} levied on every transaction---the per-ticket service fee on Ticketmaster, the per-kWh surcharge on PlugShare (e.g., \textyen0.8/kWh at a Beijing station), the order-processing service fee on DoorDash. Setting this fee is a quintessential revenue-management problem: too high and price-sensitive consumers leave for a competitor, too low and the platform's margin erodes~\citep{gao2018platform,hagiu2009two,lin2020platform}.
Setting the fee well requires the platform to know how transacted quantity responds to price --- that is, to learn the demand curve --- yet, as we explain next, the very data the platform collects make this learning problem fundamentally harder than it appears.
\Cref{fig:example} illustrates the phenomenon across three concrete
markets.

\begin{figure}
    \centering
    \includegraphics[width=\linewidth]{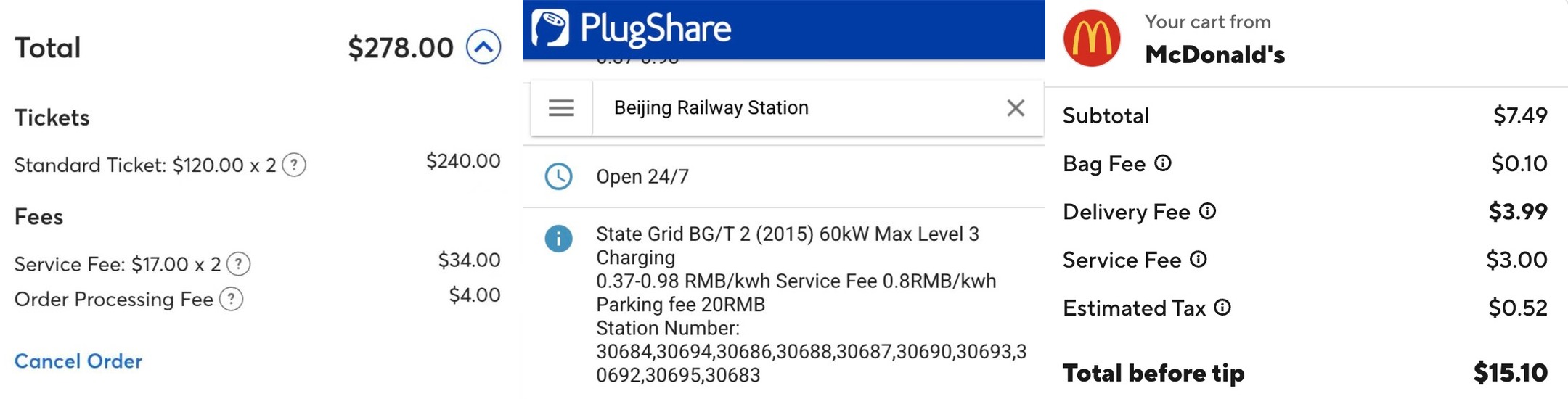}
    \caption{Examples of service fees in practice: ticket marketplaces, electric vehicle charging platforms, and food delivery services.}
    \label{fig:example}
\end{figure}

What makes the platform's pricing problem distinctive --- and intellectually
rich --- is an asymmetry of information. The platform observes the supply side perfectly: DoorDash sees the menus and prices that restaurants list, PlugShare sees the tiered electricity rates posted by station operators, Ticketmaster sees the ask prices of individual sellers. Yet it never observes the \emph{demand curve} of the consumers who eventually transact. All it can record is the realized price $P_t^e$ and quantity $Q_t^e$ at which the market clears, and these two quantities are jointly determined by supply, demand, and the service fee the platform itself just chose. Naively regressing $Q_t^e$ on $P_t^e$ to recover demand thus inherits a textbook \emph{confounding} problem: when supply is stable, the regression can even produce a positive slope,
contradicting basic price theory.

The standard remedy for confounded observational data is the use of
\emph{instrumental variables} (IV)~\citep{angrist1995two,angrist2001instrumental,newey2003instrumental}. In a platform market, however, valid instruments are scarce: there is no natural exogenous shock, no random encouragement design, no policy variation to exploit. Our central methodological observation is that the platform does not need to look outside its own decision loop --- the very \emph{action} it is trying to optimize, the service fee $a_t$ itself, can be used as an instrument. 
Because the platform commits to $a_t$ before the demand shock $\epsilon_{Dt}$ realizes, the conditional orthogonality between $a_t$ and $\epsilon_{Dt}$ holds even though $a_t$ is dynamic and chosen adaptively from the history. 
We call this the \emph{actions-as-instruments} (or self-instrumental) approach. The technical challenge for using a non-i.i.d., adaptive instrument is twofold: identification has to be argued through martingale concentration rather than i.i.d. tools, and the strength of the
instrument is itself a control variable that the platform must actively manage.

Once the platform is allowed to use its own actions as instruments, a second and more surprising phenomenon emerges: the very ability to learn demand depends on the structure of \emph{supply-side} randomness. Let $\sigma_S^2$ denote the variance of the supply shock. We prove that the worst-case regret over $T$ rounds scales as $\widetilde{\Theta}\bigl(\sqrt{T}\wedge\sigma_S^{-2}\bigr)$, exhibiting a sharp phase transition at the critical threshold $\sigma_S^2\simeq1/\sqrt{T}$. Below the threshold, supply fluctuations are too small for the demand curve to be identified from observed cutoff points; the instrument is weak, and the platform must deliberately inject artificial noise of magnitude $t^{-1/4}$ into the service fee to explore, paying $\widetilde{O}(\sqrt{T})$ regret. Above the threshold, the natural variation of supply alone is informative enough to pin down
demand, and regret becomes bounded by a logarithmic term in $T$. 
Operationally, the implication is striking: \emph{market volatility is information, not just noise.} Categories with naturally high supply variance (food delivery, ride-hailing) are far easier to learn from than seemingly stable ones (utilities, subscriptions). In stable markets, platforms may need some controlled source of identifying variation; when this variation is introduced through fees, it should be small, vanishing, auditable, and constrained by the platform's legal and consumer-protection obligations.

Two further considerations make this program both delicate and broadly
applicable. First, buyers are strategic: a consumer who anticipates that the platform is learning may misreport her demand in order to bias future fees in her favor. \citet{amin2013learning,aidynamic} show that when buyers are as patient as the platform ($\gamma=1$), no algorithm can achieve sublinear regret. In the empirically relevant regime $\gamma<1$, however, we show that updating the pricing policy only $O(\log T)$ times --- a low-switching design --- suffices: any future fee reduction obtained through misreporting is exponentially discounted, and the incentive to manipulate vanishes, leaving a regret with only polynomial dependence on $1/(1-\gamma)$. Second, modern platforms predict demand using high-capacity models such as deep neural networks (DNNs), whose function classes are typically not star-shaped, the standard regularity condition under which non-parametric IV theory operates~\citep{wainwright2019high,rakhlin2022mathematical,yu2022strategic}. We close this gap by constructing a homeomorphism into an auxiliary star-shaped class 
whose effective dimension is at most one larger than that of the original class,
so that our regret guarantee applies, for the first time, to ERM-with-IV demand learning over arbitrary multilayer perceptrons (MLP); the effective dimension grows quadratically in network width and linearly in depth, in line with the practitioner's intuition that deep but narrow networks are statistically efficient. 
Finally, we complement the theory with synthetic experiments and an offline counterfactual using micro-level transaction logs from \textsc{Talabat}, a major Egyptian food-delivery platform. Under a fixed-average-fee constraint, the exercise illustrates potential revenue upside from redistributing service fees across the day.

\paragraph{Contributions.} Our results can be summarized as follows.

\textbf{(i)} We propose \emph{actions as instruments}, a framework that turns the platform's own service fee into a self-instrumental variable for dynamic demand learning under confounding, and prove identification under a non-i.i.d., adaptively chosen instrument via martingale concentration.

\textbf{(ii)} We establish the minimax-optimal regret rate
$\widetilde{\Theta}(\sqrt{T}\wedge\sigma_S^{-2})$ and uncover a phase
transition at $\sigma_S^2\simeq 1/\sqrt T$ with matching upper and lower
bounds; interestingly, the lower bound of supply noise rests on a new construction that isolates the information channel carried by the demand noise, and is, to our knowledge, the first such construction in the dynamic pricing literature.

\textbf{(iii)} We develop a homeomorphism that bypasses the star-shapedness assumption, extending the ERM-with-IV framework to arbitrarily deep neural networks 
with at most a one-dimensional increase in effective dimension.


\textbf{(iv)} We use public food-delivery and ride-hailing data to construct offline counterfactuals illustrating the potential revenue upside of dynamic service-fee redistribution under fixed-average-fee constraints.

\subsection{Related Work}
\paragraph{Instrumental Variables in Machine Learning.} Instrumental variable is a powerful tool in econometrics \citep{angrist1995two,angrist2001instrumental,baltagi1986estimating}. \citet{nambiar2019dynamic} uses IV to learn the demand curve under model misspecification, inspired by \citet{petrin2010control,phillips2015effectiveness}.
In recent years, a series of articles have emerged that use the IV method to estimate machine learning parameters. \citet{liao2021instrumental,yu2022strategic} utilize instruments to learn a nearly-optimal policy via offline data. \citet{chen2022well,uehara2024future} adopt a non-parametric instrumental variable framework to do off-policy evaluation. \citet{chen2023estimating} considers a time-varying instrument and \citet{singh2020kernel} extends the kernel methods to scenarios with confounders. The work most closely related to our paper is \citet{aidynamic}, whereas it only considers a specific linear regression with a closed form. We extend the model to general function classes including deep neural networks, and prove the empirical success on learning deep features on a large-scale food-delivery dataset.

\paragraph{Pricing with Strategic Buyers.}
There is a burgeoning amount of literature on pricing with strategic buyers. \citet{amin2013learning} first proves that no algorithm can achieve sublinear regret when buyers are as patient as sellers. \citet{deng2020robust} considers pricing in dynamic mechanism design with less patient buyers. It obtains sublinear regret in contextual auctions. \citet{golrezaei2019dynamic} study optimal reserve design problem original from \citet{myerson1981optimal} facing strategic bidders while \citet{mohri2015revenue} considers a corresponding revenue optimization problem. \citet{golrezaei2023incentive} considers pricing with strategic buyers under non-parametric market noise, and \citet{ai2022reinforcement} extends to Markov decision process (MDP) pricing models. \citet{mohri2014learning,kanoria2020dynamic,epasto2018incentive,amin2014repeated} also study such issues under different information structures and depict different scenarios in real markets.

\paragraph{Demand Learning under Uncertainty.}
Demand learning is a hot topic in microeconomics, management science, and operations research~\citep{besbes2011minimax,cope2007bayesian,kleinberg2003value,carvalho2005learning,araman2009dynamic,lin2006dynamic,caro2007dynamic}. \citet{lobo2003pricing} considers a linear demand model and invents a ``price-dithering'' policy to add perturbation. \citet{den2014simultaneously} invents the controlled variance pricing idea and \citet{broder2012dynamic} scales exploration by adding $t^{-1/4}$ in the absence of strategic behavior and supply interaction, which has similar connotations to our approach. 
Besides, \citet{nambiar2019dynamic} considers confounders in demand learning in the absence of strategic buyers. 

\paragraph{Nonparametric Regression and Covering Number.} 
Nonparametric regression can partially explain the empirical success of deep learning, but it typically assumes that the identifiable class has star-shapedness~\citep{bartlett2005local,wainwright2019high}. \citet{yu2022strategic} uses algorithmic instruments to learn under a reinforcement learning framework, relying on the underlying MDP having a star-shaped structure. \citet{hu2025sample} extends this line of work to online reinforcement learning with confounders and proposes a nonparametric instrumental variable method, along with the lines of \citet{angrist1995two,angrist2001instrumental,newey2003instrumental,ai2003efficient}.
An important component of nonparametric regression is the covering number. Recently, several studies have characterized the complexity of deep neural networks~\citep{shen2023exploring,ou2024covering}, providing a statistical explanation for their strong performance.

\section{Problem Formulation}\label{sec:pre}
We study revenue management from the perspective of a third-party online platform that intermediates transactions between sellers and a strategic
buyer over $T$ rounds. The model has three actors---sellers, the platform,
and a buyer --- whose interactions we describe in turn.

\paragraph{Sellers.}
Over $T$ rounds, sellers first announce their supply before the platform
sets a fee or the buyer transacts. On PlugShare, the platform discloses the tiered electricity rates of each charging station; on Ticketmaster, sellers post the number of tickets they offer along with their ask prices.
Aggregating across sellers yields an (inverse) market supply curve, which we model in
inverse form as
\[
    P_{St} = \alpha_0 + \alpha_1 Q + \epsilon_{St},
\]
where $\epsilon_{St}$ is independent zero-mean, $\eta_S$-sub-Gaussian noise with variance $\sigma_S^2$. We assume sellers report supply truthfully: because their offers are committed before the platform and buyer act, no strategic manipulation of the supply curve confers any advantage. For example, on Ticketmaster, a seller must post the ask price before the ticket enters the market; once a buyer is willing to pay, the platform automatically matches the order. This committed-supply structure is the institutional norm in the intermediated marketplaces we study: sellers post binding offers through the platform's listing interface before any fee-mediated transaction can occur, and these offers cannot be revised once the matching engine receives them. The same pattern holds on PlugShare, where station operators register tiered electricity rates in advance, and on food-delivery platforms such as DoorDash, where merchants submit menus and prices through the portal before orders are routed.

\paragraph{The Platform.}
After observing supply, the platform sets a unit \emph{service fee}
$a_t \in \RR_{\ge 0}$ --- the per-kWh fee on PlugShare or the per-ticket fee on Ticketmaster. 
For instance, the charging-station operator on PlugShare posts tiered electricity rates while each kWh incurs the same fixed service fee on the platform side.
Following the behavior-based pricing literature~\citep{hart1988contract, salant1989inducing,
aviv2008optimal, aviv2019responsive}, we assume the platform commits to its fee structure before transactions occur, since commitment is known to
dominate adaptive deviation in revenue. The platform's revenue at time $t$
is $\Pi_t = a_t \cdot Q_t^e$, where $Q_t^e$ is the realized transacted
quantity defined below.

\paragraph{The Buyer.}
We study the buyer's market behavior that total purchase quantity is related to price. For example, when ticket prices are low, she tends to
purchase multiple tickets to go to games with family members, and when food is discounted, she tends to order more items in a single food delivery order. The platform never observes the buyer's demand curve, but does observe a feature vector $x_t \in \RR^d$ before setting $a_t$ --- weather and traffic on PlugShare, click-through rates and dwell time on Ticketmaster. Conditional on $x_t$, we model the buyer's (inverse) demand curve as
\[
    P_{Dt} = \beta Q + f(x_t) + \epsilon_{Dt},
\]
where $\epsilon_{Dt}$ is an independent zero-mean, strongly log-concave innovation with variance $\sigma_D^2$. The feature vector $x_t$ is interpreted broadly as the platform's demand-relevant information before pricing: any component of demand that is predictable from past observations, current market features, or the currently observed supply-side state is absorbed into $x_t$ and hence into $f(x_t)$. Thus $\epsilon_{Dt}$ represents only the residual demand innovation that is not predictable at the time the platform sets the service fee.
To rule out market-feasibility pathologies we assume $\alpha_1 \ge 0 > \beta$ and that
all realized prices and quantities are non-negative; otherwise we truncate at zero, which does not affect any of our analyses (see \Cref{app:market}).

We end our discussion of market participants with some {\it rationale for assuming a linear relationship between price and quantity}. First, given the relative stability of the market, transacted quantities fluctuate within a narrow range, and empirical observations reveal a strong locally linear relationship. In this context, the coefficients $\alpha_1$ and $\beta$ effectively capture local deviations. Second, \citet{besbes2015surprising} shows that under certain conditions, the consequences of model misspecification due to assuming linearity are significantly less severe than commonly expected. Third, the linear form is widely adopted in the literature on demand learning~\citep{bu2020online, qiang2016dynamic, keskin2014dynamic, zhu2020demands, bach2024adventures}, instrumental variables~\citep{nambiar2019dynamic, aidynamic}, and machine learning~\citep{bu2022context}. In \Cref{sec:ext} we discuss how our methodology can be extended to general nonlinear models, and in \Cref{sec:ride} we provide additional numerical experiments on a Lyft dataset to showcase the power of our methodology on nonlinear models.

\paragraph{Strategic Behavior and Market Observation.}
Unlike sellers, the buyer can manipulate her purchasing decisions to
influence future fees. Examples abound: a Ticketmaster customer who finds a price too high may wait for a discount; an Amazon shopper may delay until Black Friday even when willing to pay today. We capture this by letting the buyer act strategically against the platform.
She chooses a purchase trajectory that maximizes her expected cumulative discounted surplus, where for the $Q$-th unit, the buyer's payment is $P_{St}(Q) + a_t$ and her willingness to pay is $P_{Dt}(Q,x_t)$,
\[
    \E\!\left[\sum_{\tau=t}^{T} \gamma^{\tau-t}\, \Sur_\tau\right],
    \qquad
    \Sur_t = \int_0^{Q_t^e}\!\big(P_{Dt}(Q,x_t)-P_{St}(Q)-a_t\big)\,\ud Q,
\]
where $\gamma \in [0,1)$ is the buyer's discount rate. Her best response
induces a (round-$t$) behaved demand curve which we denote $P'_{Dt}(Q, x_t)$; absent strategic behavior yields $P'_{Dt} \equiv P_{Dt}$, and in general $P'_{Dt}$ is the optimal best response of the buyer to the platform's policy. The platform observes only the cutoff price $P_t^e$
and quantity $Q_t^e$, defined as the market-clearing pair
\begin{equation}\label{eq:equ}
    P_t^e \;:=\; P_{St}(Q_t^e)+a_t \;=\; P'_{Dt}(Q_t^e,x_t).
\end{equation}
Although the platform records every transaction price, only the highest one, i.e., the cutoff price at which the last unit clears, lies on both the supply and demand curves (adjusted by service fee, hereinafter) and therefore carries demand information; lower transaction prices reveal only weaker constraints already implied by the cutoff. For instance, a buyer purchased two low-price tickets at a certain price and one high-price ticket at the cutoff price $P_t^e$. However, the price of the low-price tickets only reveals that the buyer’s willingness to pay for quantity two exceeds that price. This information is subsumed by the buyer’s willingness to purchase the third ticket at $P_t^e$. From the supply perspective, all these transaction prices lie on the supply curve. Only the highest transaction price, say the cutoff price, is at the intersection of demand and supply, providing informative demand information.

The patience asymmetry between the platform and the buyer is essential: the platform is patient with a discount rate of 1 while the buyer discounts at $\gamma<1$. \citet{amin2013learning} proves that no algorithm achieves sublinear regret when the buyer is as patient as the platform from the technical perspective. The assumption is also empirically realistic: platforms optimize over long horizons of user retention, whereas individual buyers face immediate consumption needs (see also \citealp{drutsa2017horizon, golrezaei2019dynamic}).

\paragraph{Benchmark and Regret.}
As a benchmark, we use the clairvoyant revenue, in which the platform
observes both the supply and the expected demand curves. Under such an oracle, the buyer has no incentive to misreport, so $P'_{Dt}\equiv P_{Dt}$, and the optimal service fee is
\[
    a_t^\ast \;=\; \argmax_{a_t \ge 0} \;
    \E\!\big[\,a_t \cdot Q_t^e(P_{St},P_{Dt},a_t)\,\big],
\]
where the expectation is over $\epsilon_{Dt}$ given $x_t$. We define the
cumulative regret over $T$ rounds as
\begin{equation}\label{eq:regret}
    \text{Regret}(T) \;=\; \sum_{t=1}^{T}
    \E\!\left[\,a_t^\ast \cdot Q_t^e(P_{St},P_{Dt},a_t^\ast)
    \;-\; a_t \cdot Q_t^e(P_{St},P'_{Dt},a_t)\,\right].
\end{equation}
Minimizing regret thus requires two coupled tasks: identifying a fee
$a_t$ close to $a_t^\ast$, and structuring the policy so that the buyer's
best response $P'_{Dt}$ remains close to the truthful $P_{Dt}$.


\paragraph{Information available to the platform.}
At round $t$, the platform first observes the history
$\cH_{t-1}=\{h_\tau\}_{\tau=1}^{t-1}$, the current feature vector $x_t$, and the current supply curve $(\alpha_0+\epsilon_{St},\alpha_1)$.
The service fee is then chosen as a measurable function with respect to the sigma-algebra generated by $(\cH_{t-1},x_t,\alpha_0+\epsilon_{St},\alpha_1)$.
Only after this fee is committed does the residual demand innovation
$\epsilon_{Dt}$ realize and the platform observes
\[
h_t=((\alpha_0+\epsilon_{St},\alpha_1),x_t,P_t^e,Q_t^e).
\]
Service fee $a_t$ can depend arbitrarily on past observations, including past cutoff prices and quantities that were themselves affected by past demand shocks. What it rules out is only the use of information
that predicts the current residual innovation beyond what has already been included in $x_t$ and the pre-pricing market state. Hence, $a_t$ satisfies exogeneity only with respect to the contemporaneous demand shock $\epsilon_{Dt}$, while it may depend on past service fees in an arbitrarily complex way.
Throughout, the buyer is assumed to know the platform's learning policy in advance, as the platform can earn more revenue by committing to a pricing strategy~\citep{golrezaei2019dynamic}; if the policy randomizes, she observes the underlying mechanism but not its specific realization.

\paragraph{Preview of main results.}
For ease of exposition, we collect all formal regularity conditions in
\Cref{sec:regret}. The following informal statement summarizes the rate we
will prove; subsequent sections develop the methodology and full theorems. Throughout, $\cO(\cdot)$ and
$\widetilde{\cO}(\cdot)$ hide only absolute constants and logarithmic terms respectively.
\begin{theorem}[Informal preview]\label{thm:informal}
Under mild regularity conditions, our algorithm attains
\[
    \mathrm{Regret}(T) \;=\; \tilde{\cO}\!\left(\sqrt{T}\,\wedge\,\sigma_S^{-2}\right)
\]
up to logarithmic factors in $T$ and polynomial dependence in
$1/(1-\gamma)$, and this rate is \emph{minimax-optimal}. Moreover, the regret exhibits a sharp \emph{phase transition} at the critical level
$\sigma_S^2 \simeq 1/\sqrt{T}$: above the threshold, supply randomness
alone identifies demand and regret is logarithmic in $T$; below it, the
platform must explicitly explore, paying $\sqrt{T}$ regret.
\end{theorem}

\Cref{sec:method} develops the methodology behind this rate,
\Cref{sec:regret} states and proves the formal theorems, and
\Cref{sec:practice}--\ref{sec:exp} discuss practical extensions and
empirical validation.

\section{Methodology: From Confounding to Learning}
\label{sec:method}
This section bridges the formal model of \Cref{sec:pre} and the regret
guarantees of \Cref{sec:regret}. We first exhibit the confounding moment
condition that defeats naive regression (\Cref{sec:why-fail}), then
introduce the action-as-instrument idea that resolves it
(\Cref{sec:aai-idea}), develop the three engineering refinements needed
to make the method work in our dynamic, strategic setting
(\Cref{sec:tricks}), and finally state the algorithm
(\Cref{sec:algo}). For ease of exposition, we abstract away from buyer strategic behavior in \Cref{sec:why-fail}--\Cref{sec:aai-idea} and
reintroduce it in \Cref{sec:tricks}.

\subsection{Why naive regression fails: the confounding moment condition}
\label{sec:why-fail}

For the platform, the only observable and informative variables are the cutoff price $P_t^e$ and quantity $Q_t^e$, both of which are influenced by market noise. As a result, simple estimation methods---such as nonparametric regression---are susceptible to confounding, leading to biased estimates.
Specifically, we have the moment condition
\[
   \E\!\left[(P_t^e - \beta Q_t^e - f(x_t))\, Q_t^e\right] \;\neq\; 0,
\]
because $\epsilon_{Dt}$ is correlated with the transacted quantity
$Q_t^e$. To see why this is fatal, note that a standard nonparametric
regression would solve
\[
   \min_{\beta,\, f \in \cF}\;
   \sum_{t=1}^{T}\!\big(P_t^e - \beta Q_t^e - f(x_t)\big)^2,
\]
whose first-order condition with respect to $\beta$ reads
\[
   \sum_{t=1}^{T}\!\big(P_t^e - \beta Q_t^e - f(x_t)\big)\,Q_t^e \;=\; 0,
\]
in direct contradiction with the moment condition above. The bias is
not a finite-sample artifact: it does not vanish as $T \to \infty$.
Worse, when supply variation is small, the resulting estimate of
$\beta$ can even change sign, producing a positive demand slope that
contradicts elementary price theory.

\subsection{Action as instrument}
\label{sec:aai-idea}

To repair the moment condition, observe that the feature vector $x_t$
is uncorrelated with the demand shock $\epsilon_{Dt}$, since
$\epsilon_{Dt}$ captures exactly the variation in the outcome that is
left unexplained by $x_t$. This yields the structural equation
\begin{equation}\label{eq:structural}
   \E\!\left[(P_t^e - \beta Q_t^e - f(x_t))\, \tilde f(x_t)\right]
   \;=\; \E\!\left[\epsilon_{Dt}\, \tilde f(x_t)\right] \;=\; 0
   \qquad \forall\, \tilde f \in \cF.
\end{equation}
\Cref{eq:structural} provides identifying restrictions with dimension corresponding to that of $\cF$, which is one less than the dimension of the unknowns $(\beta, f)$.
We therefore need one additional moment condition.

\paragraph{The platform's own action as an instrument.}
The missing moment is supplied by the platform itself. The service fee
$a_t$ is chosen before the demand shock $\epsilon_{Dt}$ realizes; hence we have the \emph{one-step-ahead orthogonality} that 
\begin{equation}\label{eq:IV}
   \E\!\left[(P_t^e - \beta Q_t^e - f(x_t))\, a_t\right]\;=\; \E\!\left[\epsilon_{Dt}\, a_t\right] \;=\;  \E\!\left[
      a_t\,
      \E\!\left[\epsilon_{Dt}\mid
      \cH_{t-1},x_t,\alpha_0+\epsilon_{St},\alpha_1
      \right]
   \right]\;=\; 0,
\end{equation}
which, together with \Cref{eq:structural}, identifies both $\beta$ and
$f$. We refer to this as the \emph{action-as-instrument}
(\textsc{AaI}) or \emph{self-instrumental} identification. Crucially, $a_t$ is \emph{not} a classical IV: it is endogenously chosen by the
platform, adaptively as a function of the past, and therefore neither independent nor identical.

\paragraph{Why the non-i.i.d.\ instrument still works.}
Three obstacles in our setting threaten the validity of $a_t$ as an instrument, and each must be---and is---addressed. First, the policy generating $a_t$ must itself evolve, since the platform's goal is to find the optimal fee, so subsequent fees depend on previous ones; we control this dependence using a martingale concentration argument with one-step-ahead orthogonality in place of i.i.d. tools. Second, when supply noise is small the estimator suffers from weak instruments, so we inject artificial exploration noise into $a_t$ whose magnitude must vanish along the
trajectory in order not to inflate regret (\Cref{sec:tricks}). Third, the buyer's strategic behavior couples $a_t$ to the cutoff price through misreporting, so $a_t$ acts as a mediated rather
than purely exogenous variable; we neutralize this by updating the policy only $\cO(\log T)$ times, which limits cross-episode contamination and shrinks the within-episode strategic distortion to a level controlled by \Cref{prop:bound_strategy}. Together, these three considerations turn the platform's own decision variable into a usable instrument under mild conditions.

\subsection{Three engineering refinements}
\label{sec:tricks}
\Cref{eq:structural,eq:IV} pinpoint identification, but a working algorithm needs three further refinements: (i) a
doubly-robust correction to control estimator variance and accelerate convergence,
(ii) adaptive exploration with vanishing magnitude to handle weak
instruments without inflating regret, and (iii) a low-switching update schedule to defeat strategic misreporting.

\paragraph{Doubly-robust correction.}
Building on doubly-robust machine learning~\citep{imbens2015causal,chernozhukov2018double}, we replace
\Cref{eq:IV} by the more robust structural equation
\begin{equation}\label{eq:IV_robust}
   \E\!\left[(P_t^e - \beta Q_t^e - f(x_t))\,
            (a_t - \E[a_t \given x_t])\right] \;=\; 0.
\end{equation}
As long as either $P_t^e - \beta Q_t^e - f(x_t)$ or
$a_t - \E[a_t \given x_t]$ is accurately estimated,
\Cref{eq:IV_robust} compensates for first-order errors in the other.
Combining empirical risk minimization with this IV restriction yields
the iterates
\begin{align*}
   \hat\beta &\;\leftarrow\;
      \frac{\E[(P_t^e - f(x_t))(a_t - \E[a_t \given x_t])]}
           {\E[Q_t^e\,(a_t - \E[a_t \given x_t])]},\\[2pt]
   \hat f &\;\leftarrow\;
      \argmin_{f \in \cF}\,
      \E\!\left[(P_t^e - \beta Q_t^e - f(x_t))^2\right].
\end{align*}
We initialize $\hat f = 0$ and alternate the two updates using
empirical expectations. Ablation experiments in \Cref{app:ablation}
confirm that the doubly-robust step yields estimators with markedly
smaller variance than the plain IV update.

\paragraph{Adaptive exploration with vanishing noise.}
The estimator $\hat\beta$ has finite variance only when
$a_t - \E[a_t \given x_t]$ has nonzero second moment---i.e., when supply noise alone provides sufficient exploration. When supply is too
stable (in particular, when $\sigma_S^2 = 0$ all observed cutoff points lie on the supply curve), the platform itself must explore by
perturbing the fee. A naive constant perturbation of order $\Theta(1)$
would contribute $\Theta(T)$ to cumulative regret, while no perturbation leaves the IV weak. The right balance is a vanishing
perturbation of magnitude $t^{-1/4}$, which we will show is the unique
rate that simultaneously controls weak-IV variance and exploration
loss; this trade-off gives rise to the $\sqrt{T}$ branch of the phase
transition analyzed in \Cref{sec:regret}.

\paragraph{Low switching for strategic buyers.}
A strategic buyer can reduce her current surplus by misreporting demand in order to bias future fees in her favor. The misreport pays off only if the platform's fee policy reacts quickly to the most recent
observations. We therefore update the parameters $(\hat\beta, \hat f)$ only $\cO(\log T)$ times over $T$ rounds, doubling the length of the historical window each time. We prove that the future surplus benefit of any present misreport is exponentially attenuated; combined with the rare updates, the strategic distortion of $P'_{Dt}$ around $P_{Dt}$ remains controlled in every episode.

\subsection{The \texttt{AaI} and \texttt{AAaI} algorithms}
\label{sec:algo}

We now combine the three refinements above into the algorithm
\texttt{AaI} (Action-as-Instrument, \Cref{alg:AaaIV}), which we will
analyze in \Cref{sec:regret}. \texttt{AaI} proceeds in doubling
episodes $m = 1, 2, \ldots$. At the beginning of each episode, it
re-estimates $(\hat\alpha_0, \hat\beta, \hat f)$ from the dataset collected in the previous episode via the doubly-robust IV update; within the episode, it chooses the service fee through the subroutine \texttt{Act} (\Cref{alg:act}), which augments the greedy revenue maximizer
$\hat a^\ast$ with a Gaussian perturbation of variance
$1/\sqrt{|\cD|+1}$. The perturbation is active only when the supply
side is degenerate; whether to activate it is decided either by the
known value of $\sigma_S^2$ or, when $\sigma_S^2$ is unknown in
practice, by the wrapper algorithm \texttt{AAaI} described next.

\begin{algorithm}[htbp]
   \caption{\texttt{AaI} Algorithm}
   \label{alg:AaaIV}
\begin{algorithmic}
   \State {\bfseries Input:} $T$.
   \State {\bfseries Initialization:} $\cD\gets\emptyset$, $t\gets 0$ and $(\hat \alpha_0,\hat\beta,\hat f)\gets(0,0,0)$.
   \For{episode $m=1, 2, \cdots$}
        \State Estimate unknowns:
        \begin{equation*}
        \begin{aligned}
            \hat\alpha_0 & = \frac{1}{|\cD|}\sum_{i=1}^{|\cD|}\left(\alpha_0+\epsilon_{Si}\right)\\
            \hat\beta & =\frac{\sum_{i=1}^{|\cD|}(P_{i}^e-\hat f(x_i))(\hat f(x_i)-2a_i-\hat\alpha_0)}{\sum_{i=1}^{|\cD|}Q_i^e(\hat f(x_i)-2a_i-\hat\alpha_0)} \\
            \hat f & = \frac{1}{|\cD|}\argmin_{f\in\cF}\sum_{i=1}^{|\cD|}(P_{i}^e-\hat\beta Q_i^e-f(x_i))^2
        \end{aligned}
        \end{equation*}
        \State Reinitialize dataset $\cD\gets\emptyset$.
        \For{$\tau=t+1$ {\bfseries to} $t+2^m$}
        \State Observe the supply curve $P_{S\tau}=\alpha_0+\alpha_1 Q+\epsilon_{S\tau}$ and feature $x_\tau$.
        \State Form estimate for the demand curve $\hat P_{D\tau}=\hat f(x_\tau)+\hat \beta Q$.
        \State Select the action $a_\tau\leftarrow\texttt{Act}(P_{S\tau},\hat P_{D})$ using \Cref{alg:act}.
        \State Observe $o_\tau=(P^e_\tau,Q^e_\tau,x_\tau,\alpha_0+\epsilon_{S\tau},a_\tau)$.
        \State Update $\cD\leftarrow\cD\cup o_\tau$.
   \EndFor
   \State Renumber the index in the dataset $\cD$.
   \State Update round index: $t\leftarrow t+2^m$.
   \EndFor
\end{algorithmic}
\end{algorithm}

\begin{algorithm}[htbp]
   \caption{\texttt{Act} Algorithm}
   \label{alg:act}
\begin{algorithmic}
    \State {\bfseries Input:} $P_S$ and $P_D$.
    \State Calculate $\hat a^*\leftarrow\argmax_{a\geq 0}\ a\cdot \hat Q^e(a)$, where $\hat Q^e(a)$ is decided by $P_{S}(\hat Q^e)+a=\hat P_D(\hat Q^e)$.
    \State Generate an independent noise term $\epsilon\sim\cN(0,\frac{1}{\sqrt{\text{size}(\cD)+1}})$.
    \State Set service fee $a\leftarrow \hat a^*+\epsilon\cdot\mathbf 1\{\sigma_S=0\}$, when $\sigma_S^2$ is known (\Cref{sec:upper+root});
    \State Set service fee $a\leftarrow \hat a^*+\epsilon\cdot\mathbf 1\{\HH=\HH_0\}$, when $\sigma_S^2$ is unknown (\Cref{sec:adaptive}).
    \State {\bfseries Output:} $a$.
\end{algorithmic}
\end{algorithm}

\paragraph{The wrapper algorithm \texttt{AAaI}.}
In practice, the supply variance $\sigma_S^2$ is rarely known a priori.
We therefore wrap \texttt{AaI} in a unified algorithm \texttt{AAaI}
(Adaptive-AaI, \Cref{alg:Ada-AaaIV}). \texttt{AAaI} spends
$T_0 \eqsim \Theta(\log T)$ rounds drawing random fees in order to
test the hypothesis $\HH_0:\sigma_S^2 \lesssim 1/\sqrt{T}$ versus
$\HH_1:\sigma_S^2 \gtrsim 1/\sqrt{T}$, then dispatches the remaining
$T - T_0$ rounds to \texttt{AaI} with the perturbation activated only
when $\HH = \HH_0$. Because the test budget is logarithmic in $T$, the
overhead of estimating $\sigma_S^2$ adds at most an additive
$\Theta(\log T)$ regret on top of the oracle that knows $\sigma_S^2$
in advance. The formal guarantees for \texttt{AaI} and \texttt{AAaI},
together with their matching lower bounds and the resulting phase
transition, are stated in \Cref{sec:regret}.

\begin{algorithm}[htbp]
   \caption{\texttt{AAaI} Algorithm}
   \label{alg:Ada-AaaIV}
\begin{algorithmic}
   \State {\bfseries Input:} $T$.
   \For{$t=1$ {\bfseries to} $T_0\eqsim\Theta(\log T)$}
   \State Choose a random service fee and observe $\alpha_0+\epsilon_{St}$.
   \EndFor
   \State {\bfseries Hypothesis Test:} $\HH_0:\sigma_S^2\lesssim\cO(\frac{1}{\sqrt{T}})$ and $\HH_1:\sigma_S^2\gtrsim\Omega(\frac{1}{\sqrt{T}})$, denoting the result as $\HH$.
   \State Conduct $\texttt{AaI}(T-T_0)$.
\end{algorithmic}
\end{algorithm}

\section{Main Results: Regret Bounds and Phase Transition}
\label{sec:regret}

This section makes the rate previewed in \Cref{thm:informal} formal. We
organize the analysis by whether the supply variance $\sigma_S^2$ is
available to the platform: \Cref{sec:upper+constant} treats the known-$\sigma_S$ regime, where matching upper and lower bounds yield
$\widetilde{\Theta}(1)$ regret with positive supply noise and
$\widetilde{\Theta}(\sqrt{T})$ regret in the absence of supply noise;
\Cref{sec:adaptive} treats the unknown-$\sigma_S$ regime, in which the two rates connect through a sharp phase transition at
$\sigma_S^2 \simeq 1/\sqrt{T}$, accompanied by a matching minimax
lower bound that exploits the information channel of demand noise.

Throughout this section, we assume without loss of generality that the
horizon $T$ is known: if not, the standard doubling
trick~\citep{auer2002adaptive,besson2018doubling} converts any
horizon-aware bound into a horizon-free bound of the same order.

\paragraph{General feature vectors.}
With the rapid advancement of machine learning, modern platforms model
demand using high-capacity, high-dimensional predictors. We allow $f$
to lie in an arbitrary function class $\cF$ subject only to a covering
number bound, following standard machine-learning
conventions~\citep{jin2020provably,kong2021online,foster2021statistical,jin2021bellman}.
For any $f$ we use $\|f\|_\infty=\sup_{x\in\cX}|f(x)|$ and, for fixed
$x_1,\ldots,x_n$, the empirical norm
$\|f\|_n=\sqrt{\frac{1}{n}\sum_{i=1}^n f(x_i)^2}$ together with the
corresponding inner product $\langle\cdot,\cdot\rangle_n$. Vectors are
equipped with the usual $\ell_2$ and $\ell_\infty$ norms.

\begin{assumption}[Boundedness]\label{ass:context}
$x_t\in\cX\subseteq[0,1]^d$ is i.i.d.\ generated. The function class
$\cF\subseteq\{f:\RR^d\to[-B,B]\}$ has covering number
$N(\cF,\|\cdot\|_\infty,\epsilon)\lesssim\cO((1/\epsilon)^{\textit{dim}})$,
and we call the smallest such $\textit{dim}$ the \emph{effective
dimension}.
\end{assumption}

We do not impose convexity or any form of star-shapedness on $\cF$; in particular, our framework accommodates deep neural networks
\citep{bedi2019deep, law2019tourism}, as we develop in \Cref{sec:practice}. 

\subsection{Regret bounds when supply variance is known}
\label{sec:upper+constant}
\label{sec:upper+root}

We first analyse the platform's performance when the supply variance
$\sigma_S^2$ is available. Two qualitatively different regimes appear,
separated by whether or not natural supply noise alone provides enough
exploration to identify demand.

\paragraph{Positive supply variance: logarithmic regret.}
When $\sigma_S^2>0$, the supply curve fluctuates randomly around
$\alpha_0+\alpha_1 Q$, and these fluctuations alone provide a
non-trivial instrument for the moment condition \Cref{eq:IV_robust}.
The platform need not perturb its fee, and \Cref{alg:AaaIV} achieves
logarithmic-order regret in $T$.

\begin{theorem}[$\sigma_S>0$]\label{thm:partially}
Under \Cref{ass:context}, with probability at least $1-1/T$,
\Cref{alg:AaaIV} achieves regret at most
\[
\cO\!\left(
   dim\log(dim) + dim\log^2 T
   + \frac{\eta_S^2\log^3 T}{\sigma_S^4}
   + \frac{\eta_S^2\log^2 T}{\sigma_S^4(1-\gamma)^2}
   + \frac{\eta_S^2\log T}{\sigma_S^4(1-\gamma)^3}
\right)
\]
against any buyer with discount rate $\gamma\in[0,1)$.
\end{theorem}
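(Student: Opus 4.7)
The plan is to decompose the regret across the $O(\log T)$ doubling epochs and translate per-epoch parameter-estimation error into per-round revenue loss. In epoch $m{+}1$ of length $2^{m+1}$, the platform plays actions derived from the estimators $(\hat\alpha_0, \hat\beta, \hat f)$ trained on the $2^m$ observations of epoch $m$. I would first verify that the mapping $a \mapsto a \cdot Q^e(a)$ is smooth and locally strongly concave around $a_t^*$: solving $\alpha_0 + \alpha_1 Q^e + \epsilon_S + a = \beta Q^e + f(x) + \epsilon_D$ gives $Q^e$ linear in $a$, so revenue is a concave quadratic in $a$. Thus the per-round regret of playing $\hat a^*$ in place of $a_t^*$ is at most a constant times the squared estimation error of the demand parameters at the end of epoch $m$. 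Summing $2^{m+1}\cdot(\text{estimation error})^2$ over $m$ then produces a logarithmic total whenever the squared error decays like $\Theta(1/2^m)$.

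For the estimation step, $\hat\alpha_0$ is a sample mean of $\alpha_0+\epsilon_{Si}$, so sub-Gaussian concentration gives error $O(\eta_S\sqrt{\log T/2^m})$. For $\hat\beta$, I rewrite $P_i^e-\hat f(x_i)=\beta Q_i^e+(f-\hat f)(x_i)+\epsilon_{Di}$ in the IV numerator; since $\hat a^*$ solves $\alpha_0+\epsilon_{Si}+\alpha_1 Q+\hat a = \hat f(x_i)+\hat\beta Q$, the synthetic instrument $\hat f(x_i)-2a_i-\hat\alpha_0$ tracks $a_i-\E[a_i\given x_i]$ up to constants, with conditional variance $\Theta(\sigma_S^2)$. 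Azuma--Hoeffding applied to the martingale $\sum_i\epsilon_{Di}(\hat f(x_i)-2a_i-\hat\alpha_0)$ controls the numerator, while a lower bound $|\text{denominator}|\ge\Theta(\sigma_S^2\cdot 2^m)$ holds with high probability; the $\sigma_S^{-4}$ factor in the bound enters through the squared reciprocal of this denominator. Finally, $\hat f$ is the ERM estimator of $f$ on the residualized responses $P_i^e-\hat\beta Q_i^e$; the covering-number bound in \Cref{ass:context} gives $\|\hat f-f\|_n^2=O((dim\log T)/2^m)$ via standard localization arguments for nonparametric regression (with the initial $dim\log(dim)$ term coming from the base case $m=1$ before averaging kicks in).

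Combining these bounds and summing over epochs gives the non-strategic part of the regret, namely $O(dim\log(dim)+dim\log^2 T+\eta_S^2\log^3 T/\sigma_S^4)$; the extra $\log T$ factor on the IV term is the price of union-bounding over the $O(\log T)$ epochs together with a uniform-over-$\cF$ concentration inequality. To incorporate the buyer's strategic behavior, I exploit the low-switching design of \Cref{alg:AaaIV}: parameters are updated only at the $O(\log T)$ epoch boundaries, so a single misreport in round $t$ can influence the platform's actions only from the start of the next epoch onward. The buyer's incentive to misreport is then bounded by the $\gamma$-discounted future surplus gain, at most $\sum_{\tau>t}\gamma^{\tau-t}$ times the sensitivity of future actions to the reported data. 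Expanding this geometric series and propagating it through the IV analysis produces the $(1-\gamma)^{-2}$ and $(1-\gamma)^{-3}$ multipliers attached to the $\eta_S^2/\sigma_S^4$ terms, corresponding respectively to the numerator and denominator distortions of $\hat\beta$.

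The main obstacle, which I would address last, is controlling the denominator of the IV estimator while simultaneously accounting for strategic misreports. Misreporting can skew both the numerator and the denominator of $\hat\beta$, so a naive bound could let the denominator collapse and the estimator blow up. The remedy is an incentive-compatibility argument showing that, under discount $\gamma<1$ and the doubling schedule, the per-round misreport magnitude is bounded by a constant times $(1-\gamma)^{-1}$ (by balancing the quadratic per-round loss against the discounted future surplus gain), so the total cumulative distortion introduced by any rational buyer can be absorbed into the noise component of the IV analysis without disturbing the $\Theta(\sigma_S^2\cdot 2^m)$ lower bound on the denominator. Combining everything yields the stated regret bound with failure probability $\le 1/T$.
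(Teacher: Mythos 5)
Your overall architecture matches the paper's: doubling epochs with low switching, per-round regret quadratic in the action error via the concave-quadratic revenue $a\cdot Q^e(a)$, the synthetic instrument $\hat f(x_i)-2a_i-\hat\alpha_0$ (which the paper shows equals $\epsilon_{Si}+\alpha_0-\hat\alpha_0$ exactly), a $\Theta(\sigma_S^2 n)$ lower bound on the IV denominator driven by $\sum_i\epsilon_{Si}^2$, and an incentive argument bounding misreports by discounted future surplus. However, there is one genuine gap and one step that needs repair.

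The genuine gap is your appeal to ``standard localization arguments'' for the ERM bound $\|\hat f-f\|_n^2=O(dim\log T/2^m)$. The standard localized-empirical-process argument requires the shifted class $\cF-f$ to be star-shaped around $0$: when $\|\hat f-f\|_n$ exceeds the critical radius $\bar\delta$, one rescales $h=\hat f-f$ to $\tfrac{\bar\delta}{\|h\|_n}h$ and needs this rescaled function to still lie in the class so that $Z(\bar\delta)$ controls it. \Cref{ass:context} imposes only boundedness and a covering-number bound --- no convexity and no star-shapedness of $\cF-f$ --- so this step fails as stated. The paper's proof of \Cref{thm:partially} (via \Cref{lem:bound_f_square}) closes this by introducing the auxiliary class $\cF_f=\{\lambda(\tilde f-f):\tilde f\in\cF,\ \lambda\in[0,1]\}$, which contains $\cF-f$, is star-shaped by construction, and whose covering number is shown to satisfy $N(\cF_f,\|\cdot\|_\infty,\epsilon)\lesssim(1/\epsilon)^{dim+1}$ (effective dimension increases by only one). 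Localization is then run on $\cF_f$ and transferred back. Without this (or an equivalent) construction, your $\hat f$ bound is unproven for the function classes the theorem actually covers; the paper flags this as its main technical innovation precisely because the off-the-shelf argument does not apply.

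The step needing repair is the strategic-buyer bound. You assert a \emph{uniform} per-round misreport magnitude of order $(1-\gamma)^{-1}$. Summed over an epoch of length $2^m$, a uniform bound contributes $2^m/(1-\gamma)$ to both $\sum_i|\xi_{Di}a_i|$ and the numerator of $\hat\beta$, which would make the regret linear in $T$. What the paper actually uses (in \Cref{prop:bound_strategy} and its refinement inside \Cref{lem:beta_noise}) is that the misreport at round $i$ of an epoch of length $n$ is bounded by $\sqrt{\gamma^{\,n-i+1}\cdot C/(1-\gamma)}$ --- it decays geometrically in the number of rounds remaining until the \emph{next} policy update, because only surplus after that update can be gained. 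Summing the resulting geometric series gives a per-epoch contribution of order $\mathrm{poly}(1/(1-\gamma))$ independent of $n$, which is what ultimately yields the $(1-\gamma)^{-2}$ and $(1-\gamma)^{-3}$ terms (the latter also picking up the $\bar Q^2\sim(1-\gamma)^{-1}$ factor). Your mention of ``expanding this geometric series'' suggests you have the right mechanism in mind, but the bound must be stated non-uniformly in $i$ for the accounting to close.
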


The regret depends only logarithmically on the horizon $T$, indicating
that the platform converges to a near-optimal fee with vanishing
per-round loss. On the demand side, regret grows linearly in the
effective dimension $\textit{dim}$; this rate is
unavoidable, since even for a linear model with $\textit{dim}=d$ the
minimax $L_2$ prediction error is at least
$d/T$~\citep{yang1999information,chernozhukov2024applied}.

Regarding the buyer's discount factor $\gamma$, our market model is
unbounded so the $1/(1-\gamma)$ lower bound of
\citet{amin2013learning} does not apply; instead, regret exhibits only
a polynomial dependence on $1/(1-\gamma)$. In practice
\citet{yao2012determining} estimates $\gamma\approx 0.9$ on a weekly
basis; combined with the logarithmic dependence on $T$, the loss in revenue from buyer strategic behavior remains modest.

\paragraph{Vanishing supply variance: root-$T$ regret.}
When $\sigma_S^2=0$ a more serious obstacle emerges:
$a_t^\ast - \E[a_t^\ast\given x_t]=0$, so the denominator of the IV
estimator $\hat\beta$ collapses and its variance diverges.
Equivalently, all observed cutoff points lie exactly on the supply
curve, so demand cannot be identified without further exploration.
\texttt{Act} (\Cref{alg:act}) injects mean-zero Gaussian noise of
variance $1/\sqrt{|\cD|+1}$ into the service fee---of magnitude
$t^{-1/4}$ along the trajectory---trading off exploration loss against
weak-instrument variance. The resulting bound is sublinear in $T$.

\begin{theorem}[$\sigma_S=0$]\label{thm:no_noise}
Under \Cref{ass:context}, with probability at least $1-1/T$,
\Cref{alg:AaaIV} achieves regret at most
\[
\cO\!\left(
   dim\log(dim) + dim\log^2 T
   + \log^2 T\sqrt{T}
   + \frac{\log T\sqrt{T}}{1-\gamma}
   + \frac{\log^2 T}{(1-\gamma)^2}
   + \frac{\log T}{(1-\gamma)^3}
\right)
\]
against any buyer with discount rate $\gamma\in[0,1)$.
\end{theorem}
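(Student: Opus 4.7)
The plan is to decompose the regret into three additive pieces: an \emph{estimation} part coming from the gap $\hat a^*-a^*$ induced by $(\hat\beta,\hat f)$; an \emph{exploration} part coming from the injected perturbation $\epsilon\sim\cN(0,(|\cD|+1)^{-1/2})$; and a \emph{strategic} part caused by the buyer's possibly untruthful demand reporting $P'_{Dt}\neq P_{Dt}$. Because $\sigma_S=0$ the ``natural'' instrument strength $\Var(a_t\given x_t)$ collapses, so the artificial noise plays the role of a surrogate instrument, and is precisely what must be tuned.

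Working inside a fixed doubling episode $m$ of length $N_m=2^m$, the injected perturbation has variance $\sigma_{\epsilon,m}^2\asymp N_m^{-1/2}$ by construction, so the conditional covariance $\E[(a_t-\E[a_t\given x_t])Q^e_t]$ is bounded away from zero on the same order. I would combine this with the doubly-robust moment identity (\ref{eq:IV_robust}), a martingale-based empirical-process bound (needed because the $a_t$'s are adaptive rather than i.i.d.), and the covering-number control from \Cref{ass:context} to obtain, with high probability,
\[
|\hat\beta-\beta|^2+\|\hat f-f\|_{N_m}^2 \;\lesssim\; \tilde{\cO}\!\left(\frac{dim}{N_m\,\sigma_{\epsilon,m}^2}\right) = \tilde{\cO}\!\left(\frac{dim}{\sqrt{N_m}}\right).
\]
Since $a\mapsto a\cdot Q^e(a)$ is strongly concave at $a^*$ (by strong log-concavity of $\epsilon_{Dt}$ together with the linear supply/demand structure), the per-round losses from estimation and from exploration are $\cO(|\hat\beta-\beta|^2+\|\hat f-f\|_\infty^2)$ and $\cO(\sigma_{\epsilon,t}^2)$ respectively. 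Summing the exploration term gives $\sum_t t^{-1/2}=\cO(\sqrt{T})$, while summing the estimation contribution across the $\cO(\log T)$ episodes telescopes to $\cO(dim\sqrt{T}\log T)$; together these produce the first three terms in the bound.

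Coupling with strategic behaviour is the delicate final step. Because the policy is refreshed only $\cO(\log T)$ times, a misreport in round $t$ of episode $m$ influences only estimators built in episodes $>m$. Writing $\Delta_t=P'_{Dt}-P_{Dt}$ for the manipulation, I would bound the buyer's cumulative discounted gain from deviating within episode $m$ by the discounted length of the subsequent episodes, and then convert this into a bound on $\sum_t|\Delta_t|$ that scales like $\sigma_{\epsilon,m}^{-1}/(1-\gamma)$. Plugging this into the IV analysis above injects the cross-terms $\log T\sqrt{T}/(1-\gamma)$, $\log^2 T/(1-\gamma)^2$ and $\log T/(1-\gamma)^3$. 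A high-probability union bound over all $\cO(\log T)$ episodes and over an $\epsilon$-cover of $\cF$ then closes the argument.

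The hardest part will be calibrating the noise schedule $\sigma_{\epsilon,m}^2\asymp N_m^{-1/2}$: it must be large enough to keep the IV regression well-conditioned (otherwise the denominator in $\hat\beta$ is singular and $\hat\beta$ blows up), yet small enough that neither the direct exploration cost nor the buyer's best-response deviation explodes. A secondary obstacle is that $a_t$ is adaptive rather than i.i.d., so every concentration inequality in the IV step must be upgraded to a martingale version, and the covering-number control from \Cref{ass:context} has to be combined with the homeomorphic reparametrisation anticipated in the introduction since $\cF$ is not assumed star-shaped.
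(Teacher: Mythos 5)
Your overall route is the paper's: the same three-way decomposition (estimation, exploration, strategic), the same $t^{-1/2}$-variance artificial noise acting as a surrogate instrument inside doubling episodes, the martingale upgrade of the concentration inequalities for the adaptive $a_t$, the homeomorphic auxiliary class to avoid star-shapedness, and the low-switching argument that bounds the misreport $\xi_{Dt}$ by weighing its quadratic immediate surplus loss against the $\gamma^{s}/(1-\gamma)$ discounted future gain. The exploration accounting $\sum_t t^{-1/2}=\cO(\sqrt T)$ also matches.

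There is, however, a quantitative gap that prevents your argument from delivering the stated bound. Your key step asserts $|\hat\beta-\beta|^2+\|\hat f-f\|_{N_m}^2\lesssim \tilde\cO\bigl(dim/(N_m\sigma_{\epsilon,m}^2)\bigr)=\tilde\cO(dim/\sqrt{N_m})$, and summing this over episodes gives, as you note, $\cO(dim\sqrt T\log T)$ --- which is larger than the theorem's $\sqrt T\log^2 T$ term by a factor of $dim$. The issue is that the weak-instrument penalty $1/(N_m\sigma_{\epsilon,m}^2)=N_m^{-1/2}$ afflicts only the scalar $\beta$, whose IV estimator has the injected noise in its denominator; the function $f$ is recovered by an ordinary ERM against the exogenous feature $x_t$ and obeys the standard nonparametric rate $dim\log N_m/N_m$ at the full sample size, with the $\hat\beta$ error entering only additively through a $(\hat\beta-\beta)^2\bar Q^2$ term. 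This is exactly how the paper's lemmas for the noise-free case proceed: $|\hat\beta-\beta|\lesssim\tilde\cO(n^{-1/4})$ and $\E(\hat f-f)^2\lesssim\cO\bigl(dim\log n/n\bigr)+\tilde\cO\bigl(n^{-1/2}\bigr)+\cdots$, whose episode sums give $dim\log^2T$ and $\sqrt T\log^2T$ \emph{separately}. Replacing your single multiplicative bound with this additive decomposition closes the gap, and the rest of your outline then goes through. (A minor misattribution: the per-round loss $(a_t-a_t^*)^2/(\alpha_1-\beta)$ comes from the exact quadratic form of expected revenue under the linear supply/demand model, not from strong log-concavity of $\epsilon_{Dt}$.)
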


The dominant term is now $\widetilde{\cO}(\sqrt{T})$, a sharp contrast
to the logarithmic dependence of \Cref{thm:partially}. Comparing the two bounds reveals a central message of this paper: \emph{supply-side noise can in fact facilitate the learning of demand}. The bound also
contains an $\widetilde{\cO}(\sqrt{T}/(1-\gamma))$ term arising from
strategic behavior; in our proof, we additionally show that whenever a
uniform upper bound $\bar Q$ is available on the realized quantity, this term is dominated by the others, leaving the influence of buyer misreporting moderate.

A natural question is whether the $\sqrt{T}$ rate is tight, or whether
some clever algorithm could recover the logarithmic rate of the
positive-variance case. The next theorem rules this out: the
$\sqrt{T}$ rate is fundamental to the noise-free supply regime.

\begin{theorem}\label{thm:lower_no_noise}
When $\sigma_S^2=0$, any algorithm incurs worst-case expected regret
at least $\Omega(\sqrt{T})$.
\end{theorem}

In summary, when $\sigma_S^2$ is known the minimax regret is
$\widetilde{\Theta}(1)$ if $\sigma_S^2>0$ and
$\widetilde{\Theta}(\sqrt{T})$ if $\sigma_S^2=0$. The two regimes are
qualitatively different but share the same algorithmic skeleton,
distinguished only by whether \texttt{Act} activates the perturbation.
We next examine how regret interpolates between them when
$\sigma_S^2$ is not known to the platform.

\subsection{Phase transition under unknown supply variance}
\label{sec:adaptive}

\Cref{thm:partially}--\Cref{thm:lower_no_noise} together identify two
regimes: at one extreme ($\sigma_S^2$ a positive constant) regret is
logarithmic, at the other extreme ($\sigma_S^2=0$) regret is
$\sqrt{T}$. In practice, the platform rarely knows $\sigma_S^2$ exactly,
and the natural question is how regret scales with $\sigma_S^2$ along
the continuum between these two limits. We show that this dependence is
governed by a single critical scale $\sigma_S^2 \simeq 1/\sqrt{T}$,
producing a sharp phase transition.

To rule out pathological cases driven by heavy-tailed sub-Gaussian
parameters, we impose---only for the results in this subsection---a
mild Chernoff-type condition tying $\eta_S$ to $\sigma_S$.
It merely says multiplicative rescaling of the noise does not affect the dependence
between regret and supply variance, so the restriction is innocuous.

\begin{assumption}[Chernoff bound]\label{ass:eta}
$\eta_S \lesssim \cO(\sigma_S)$.
\end{assumption}

\Cref{ass:eta} is satisfied by most common distributions, including
Gaussian and bounded distributions; characterizing the precise regret
dependence on $\eta_S$ is a promising direction for future work. Under
this assumption, our wrapper algorithm \texttt{AAaI}
(\Cref{alg:Ada-AaaIV}) attains the optimal rate without prior
knowledge of $\sigma_S^2$.

\begin{theorem}[Unknown $\sigma_S$]\label{thm:upper_ada}
Under \Cref{ass:context} and \Cref{ass:eta}, with probability at least
$1-1/T$, \Cref{alg:Ada-AaaIV} achieves regret at most
\[
\cO\!\left(\sqrt{T}\,\log^2 T \;\wedge\; \frac{\log^3 T}{\sigma_S^2}\right).
\]
\end{theorem}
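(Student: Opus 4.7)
The plan is to show that Algorithm \ref{alg:Ada-AaaIV} correctly classifies whether $\sigma_S^2$ lies above or below the critical threshold $1/\sqrt{T}$ using the short exploration phase, and then inherits the regret bound of whichever branch of AaI (\Cref{thm:partially} if $\HH=\HH_1$, \Cref{thm:no_noise} if $\HH=\HH_0$) is appropriate. The $T_0 = \Theta(\log T)$ exploration rounds only incur $O(\log T)$ regret since per-round regret is an absolute constant, so this cost is absorbed into the leading terms of the final bound.

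The key step is a concentration bound on the sample variance. During the exploration rounds the platform observes $\alpha_0 + \epsilon_{St}$ directly from the revealed supply curve, so it can form $\hat\sigma_S^2$, the sample variance of $T_0$ copies of $\epsilon_{St}$. Since $\epsilon_{St}$ is $\eta_S$-sub-Gaussian with $\eta_S \lesssim \sigma_S$ by \Cref{ass:eta}, each $\epsilon_{St}^2-\sigma_S^2$ is sub-exponential with parameter of order $\sigma_S^2$. Bernstein's inequality then yields, with probability at least $1 - 1/T^2$ and $T_0$ a sufficiently large constant times $\log T$,
\[
|\hat\sigma_S^2 - \sigma_S^2| \lesssim \sigma_S^2 \sqrt{\frac{\log T}{T_0}} \leq \frac{\sigma_S^2}{4}.
\]
In particular $\hat\sigma_S^2/\sigma_S^2 \in [3/4,5/4]$, so comparing $\hat\sigma_S^2$ to the threshold $1/\sqrt{T}$ correctly places $\sigma_S^2$ on the right side of the threshold except possibly in a narrow transition band $\sigma_S^2 \in [1/(2\sqrt{T}),\,2/\sqrt{T}]$.

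Next I split by the true value of $\sigma_S^2$. When $\sigma_S^2 \ge 2/\sqrt{T}$, the test reliably outputs $\HH_1$, so AaI omits the injected noise, and \Cref{thm:partially} together with $\eta_S \lesssim \sigma_S$ delivers the $\log^3 T/\sigma_S^2$ branch of the $\wedge$ bound (absorbing the $\gamma$-dependent lower-order terms). When $\sigma_S^2 \le 1/(2\sqrt{T})$, the test reliably outputs $\HH_0$, AaI injects the $\cN(0, 1/\sqrt{|\cD|+1})$ perturbation, and \Cref{thm:no_noise} delivers the $\sqrt{T}\log^2 T$ branch; moreover in this regime $\log^3 T/\sigma_S^2 \ge \sqrt{T}\log^3 T \ge \sqrt{T}\log^2 T$, so the minimum of the two branches is exactly the branch we realize. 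Inside the narrow transition band either outcome is acceptable, because $\sqrt{T}\log^2 T$ and $\log^3 T/\sigma_S^2$ are both $\tilde O(\sqrt{T})$ there. A union bound over the concentration event above and the high-probability events in \Cref{thm:partially} and \Cref{thm:no_noise} delivers the claim with probability at least $1-1/T$.

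The main obstacle is the concentration of $\hat\sigma_S^2$ at the unknown scale $\sigma_S^2$: the threshold $1/\sqrt{T}$ is tiny, so detecting it when $\eta_S$ is of an arbitrary, possibly constant, size would seem to require $\Theta(T)$ samples rather than the $\Theta(\log T)$ used by the algorithm. \Cref{ass:eta} is exactly what resolves this --- because $\eta_S \lesssim \sigma_S$, Bernstein's inequality produces a \emph{relative} error of $\sqrt{\log T/T_0}$ rather than an absolute one, so $\Theta(\log T)$ samples classify the regime correctly for every possible $\sigma_S^2$. Granting this concentration, the remainder of the proof is a routine invocation of the two earlier theorems together with a union bound.
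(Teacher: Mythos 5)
Your overall architecture --- a short exploration phase, a variance test whose reliability rests on \Cref{ass:eta} turning the concentration error into a \emph{relative} error $\sigma_S^2\sqrt{\log T/T_0}$, a case split at the threshold with a narrow indeterminate band, and a union bound --- is exactly the paper's proof. The test-concentration step is correct and is the content of the paper's Lemma on the hypothesis test (the paper uses a sample-variance threshold of $2/\sqrt{T}$ and shows correct classification outside $\sigma_S^2\in[1/\sqrt{T},5/\sqrt{T}]$; your band $[1/(2\sqrt{T}),2/\sqrt{T}]$ plays the same role). The $\HH_1$ branch, where you invoke \Cref{thm:partially} and use $\eta_S\lesssim\sigma_S$ to collapse $\eta_S^2/\sigma_S^4$ to $1/\sigma_S^2$, is also exactly what the paper does.

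The one step that does not go through as written is the $\HH_0$ branch. \Cref{thm:no_noise} is a statement about $\sigma_S=0$ exactly, but under $\HH_0$ the true variance can be any value in $(0,\,c/\sqrt{T}]$, so the supply noise $\epsilon_{St}$ is still present alongside the injected perturbation $\epsilon_i$. You cannot simply ``inherit'' \Cref{thm:no_noise}: the instrument in the $\hat\beta$ estimator is now $\epsilon_{Si}+\alpha_0-\hat\alpha_0-2\epsilon_i$ rather than $-2\epsilon_i$, so the error decomposition acquires three additional numerator terms driven by $\epsilon_{Si}$ and $\alpha_0-\hat\alpha_0$, and the denominator changes to $\frac{1}{n}\sum Q_i^e(\epsilon_{Si}+\alpha_0-\hat\alpha_0-2\epsilon_i)$. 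The paper handles this by redoing the six-term decomposition, bounding the supply-noise numerator contributions by $\cO(\sqrt{\log T/(n\sqrt{T})})$ (using $\eta_S^2\lesssim\sigma_S^2\lesssim 1/\sqrt{T}$, which makes them dominated by the artificial-noise terms of order $\sqrt{n^{-3/2}\log T}$), and lower-bounding the denominator by $\Omega(\sigma_S^2+1/\sqrt{n})$, recovering $|\hat\beta-\beta|\lesssim\sqrt{n^{-1/2}\log T}$ and only then re-running the regret accounting of \Cref{thm:no_noise}. The conclusion you want is true, and the intuition that a little extra supply noise can only help is sound, but the invocation of \Cref{thm:no_noise} as a black box is a genuine gap: its hypothesis is not satisfied, and closing the gap requires the combined-noise estimator analysis sketched above.
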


The rate exhibits a distinct phase transition. When
$\sigma_S^2 \lesssim \cO(1/\sqrt{T})$, the regret scales as
$\widetilde{\cO}(\sqrt{T})$, and the artificial perturbation of
magnitude $t^{-1/4}$ is essential to balance the two terms in the
bound. When $\sigma_S^2$ exceeds the critical threshold $1/\sqrt{T}$, natural supply fluctuations alone provide sufficient exploration:
artificial noise becomes unnecessary and regret decreases hyperbolically
as $1/\sigma_S^2$. One might wonder why \emph{larger} supply noise does
not worsen the platform's revenue; the reason is that the platform
observes the supply realization \emph{before} committing to its fee, so
it can adapt to the realized $\epsilon_{St}$ rather than pay for it.

The $\sqrt{T} \wedge 1/\sigma_S^2$ rate of \Cref{thm:upper_ada} matches
an information-theoretic lower bound, ignoring logarithmic factors.

\begin{theorem}\label{thm:lower}
Assume Gaussian supply noise. The worst-case expected regret of any
algorithm is at least
\[
\Omega\!\left(\sqrt{T} \;\wedge\; \frac{\log T}{\sigma_S^2(1+\max\{0,\log(1/\sigma_S)\})}\right).
\]
\end{theorem}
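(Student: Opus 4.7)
\medskip

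The plan is to prove the two terms of the minimum separately and then combine. Throughout we may specialize the hard instance to a trivial feature setting --- say $d=1$ with $x_t$ constant and $f\equiv f_0$ known --- so that the scalar demand slope $\beta$ is the only unknown; this restriction only weakens the lower bound and suffices. The quadratic regret structure of $a\cdot Q_t^e(a)$ near the optimal fee $a_t^*(\beta)$ then turns any bound on the estimation error of $\beta$ into a bound on per-round regret via a second-order expansion.

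For the $\log T/(\sigma_S^2(1+\log(1/\sigma_S)))$ term we plan to use the van Trees (Bayesian Cram\'er--Rao) inequality. We construct a one-parameter family of instances indexed by $\beta\in[\beta_0-\Delta,\beta_0+\Delta]$ in which $\sigma_D$ is rescaled together with $\beta$ so that the conditional variance $\sigma_D^2/(\alpha_1-\beta)^2$ of $Q_t^e$ given $(\epsilon_{St},a_t)$ is held fixed across the family. This variance-matching shuts off the second-order (variance-based) Fisher channel and forces all information about $\beta$ through the conditional mean. A direct calculation then yields per-round Fisher information $I_{\mathrm{per}}=\Theta(\sigma_S^2/\sigma_D^2)$, because conditional on the history the only variation in the regressor $\epsilon_{St}+a_t$ exogenous to the algorithm is the fresh supply shock. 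A uniform prior on $[\beta_0-\Delta,\beta_0+\Delta]$ combined with van Trees then gives $\E[(\hat\beta_t-\beta)^2]\gtrsim 1/(tI_{\mathrm{per}}+\Delta^{-2})$, which a second-order expansion of revenue translates into a per-round regret lower bound of the same order. Summing in $t$ and optimizing $\Delta$ subject to feasibility (so that $\alpha_1-\beta$ stays bounded away from $0$ and equilibrium quantities remain positive, forcing $\Delta=O(1)$) produces $\Omega(\log T/\sigma_S^2)$, and the extra $1+\log(1/\sigma_S)$ factor in the denominator absorbs the burn-in cost: the van Trees bound is only informative once $tI_{\mathrm{per}}$ dominates $\Delta^{-2}$, and for small $\sigma_S$ this forces a $\log(1/\sigma_S)$ overhead when $\Delta$ is optimized within the feasible range.

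For the $\sqrt T$ term we plan to rerun the two-point construction behind \Cref{thm:lower_no_noise} at a parameter gap $\delta$, while carefully tracking the supply-noise contribution. By the KL chain rule, the trajectory-level KL between the two induced measures decomposes into per-round increments, each bounded by $c\delta^2\bigl(\mathrm{Var}(a_t\mid \cH_{t-1})+\sigma_S^2\bigr)/\sigma_D^2$, the first summand being the algorithm's own exploration variance and the second the ``free'' exploration provided by supply noise. Requiring total KL at most $1/4$ then gives $\delta^2\bigl(\sum_t\mathrm{Var}(a_t\mid \cH_{t-1})+T\sigma_S^2\bigr)\lesssim \sigma_D^2$. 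On the other hand, any algorithm's expected regret lower-bounds its own exploration cost $\sum_t\mathrm{Var}(a_t\mid \cH_{t-1})$ plus $\Omega(T\delta^2)$ whenever the two instances are confusable. Optimizing over $\delta$ and over the exploration variance exactly as in the proof of \Cref{thm:lower_no_noise} yields $\Omega(\sqrt T)$; the new $T\sigma_S^2$ term in the KL budget is dominant only when $\sigma_S^2\gtrsim 1/\sqrt T$, which is precisely the regime where the companion term $\log T/\sigma_S^2$ becomes smaller and takes over the minimum.

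The hardest step will be the information-theoretic bookkeeping under adaptivity: $a_t$ depends on the past history, which itself depends on the true $\beta$. We would handle this via a coupling of the two instances' exogenous randomness so that $a_t$ is a common measurable function of the shared history; the per-round KL then reduces to the KL of the conditional law of $(P_t^e,Q_t^e)$ given $(\cH_{t-1},x_t,a_t,\epsilon_{St})$, to which the variance-matching and mean-only calculations apply pointwise. A secondary subtlety is that the variance-matching perturbations must preserve positivity of equilibrium prices and quantities; we would resolve this by centering the family at a strictly interior reference instance and choosing $\Delta$ small enough that the perturbed instances stay within the feasible region.
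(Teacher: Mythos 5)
Your high-level architecture (van Trees in the large-$\sigma_S$ regime, a two-point KL argument in the small-$\sigma_S$ regime, and variance-matching the demand noise so that only the conditional mean carries information) is in the right family — the paper likewise sets $\sigma_D=(\alpha_1-\beta)\sigma$ to kill the variance channel, and its Theorem~\ref{thm:lower_no_noise}-style case analysis recovers $\sqrt{T}$ exactly when $\sigma_S^2\lesssim 1/\sqrt{T}$. But the van Trees half, which carries the $\frac{\log T}{\sigma_S^2(1+\log_+(1/\sigma_S))}$ term, has a fatal reduction error: with $f\equiv f_0$ known and only the slope $\beta$ unknown, the optimal fee $a_t^*=\frac{f_0-\alpha_0-\epsilon_{St}}{2}$ (Proposition~\ref{prop:opta}) does not depend on $\beta$ at all, so the algorithm can play $a_t=a_t^*$ exactly and incur zero regret — no lower bound can come out of that family. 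The paper instead takes a \emph{two}-dimensional unknown $\tilde\beta=(\beta_0,\beta)$ and projects the Fisher information onto the direction $C(\tilde\beta)=[(\beta_0-\alpha_0)/2,\,1]$, which is engineered so that the projected information in round $i$ is proportional to $\E[(a_i-a_i^*+\epsilon_{Si}/2)^2]\lesssim \E[(a_i-a_i^*)^2]+\sigma_S^2$, i.e.\ it is controlled by the algorithm's \emph{own regret} plus the free supply exploration. This is the uninformative-direction structure your one-parameter family cannot reproduce.

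Relatedly, your claim that the per-round Fisher information is $\Theta(\sigma_S^2/\sigma_D^2)$ "because the only exogenous variation is the fresh supply shock" is only true for an algorithm that never deviates from $a_t^*$; a lower bound must also handle algorithms that explore, and their information grows with their exploration. The correct statement is the self-referential recursion
$\Delta^{[t,T]}\geq\sum_{i=t-1}^{T-1}\frac{2c}{1+\Delta^{[1,i]}+i\sigma_S^2}$ with $\Delta^{[1,i]}$ the cumulative regret, and extracting $\frac{\log T}{\sigma_S^2(1+\log_+(1/\sigma_S))}$ from it is the genuinely hard step: the naive "either the exploration budget is large or the sum of reciprocals is large" argument yields only about $\frac{\log\log T}{\sigma_S^2}$, which is strictly weaker. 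The paper resolves this by a dyadic induction showing the regret gains $\frac{c_1}{\sigma_S^2(1+\log_+(1/\sigma_S))}$ per doubling of the horizon, with the $1+\log_+(1/\sigma_S)$ factor emerging from the base case at $2^K\asymp 1+1/\sigma_S^4$ — not from a "burn-in cost when optimizing $\Delta$" as you suggest. Your "summing in $t$ and optimizing $\Delta$" step therefore hides the actual content of the proof and, as sketched, does not deliver the claimed rate.
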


\paragraph{Demand noise as a hidden information channel.}
The construction behind \Cref{thm:lower} departs from standard practice
in the dynamic-pricing lower-bound literature. Traditional KL-divergence
and Van Trees constructions modify the expected demand $f$ while keeping
the demand noise $\epsilon_{Dt}$ unchanged. In our setting, this strategy
fails: the demand noise leaks information about $f$ through the variance
of the cutoff price $P_t^e$ and quantity $Q_t^e$, so any change in $f$
is detectable second-order even when first-order means coincide. We
isolate this leakage by simultaneously adjusting $\beta$ and the
magnitude of $\epsilon_{Dt}$, producing two instances whose both first-order and second-order differences vanish. The
resulting indistinguishability gives the lower bound. To our knowledge,
this is the first time the information contained in demand noise has
been used as a tool in dynamic-pricing lower-bound proofs; we view it
as a methodological contribution in itself, complementary to the
algorithmic contributions of the upper bounds.

\paragraph{The phase transition picture.}
\Cref{thm:upper_ada,thm:lower} together establish that the minimax
regret is
\[
\widetilde{\Theta}\!\left(\sqrt{T} \,\wedge\, \sigma_S^{-2}\right),
\]
exhibiting a sharp phase transition at the critical scale
$\sigma_S^2 \simeq 1/\sqrt{T}$ (\Cref{fig:phase}). Below the threshold
the regret is flat at $\sqrt{T}$; above it, regret decreases as
$1/\sigma_S^2$.

\begin{figure}[ht]
\centering
\includegraphics[width=0.6\linewidth]{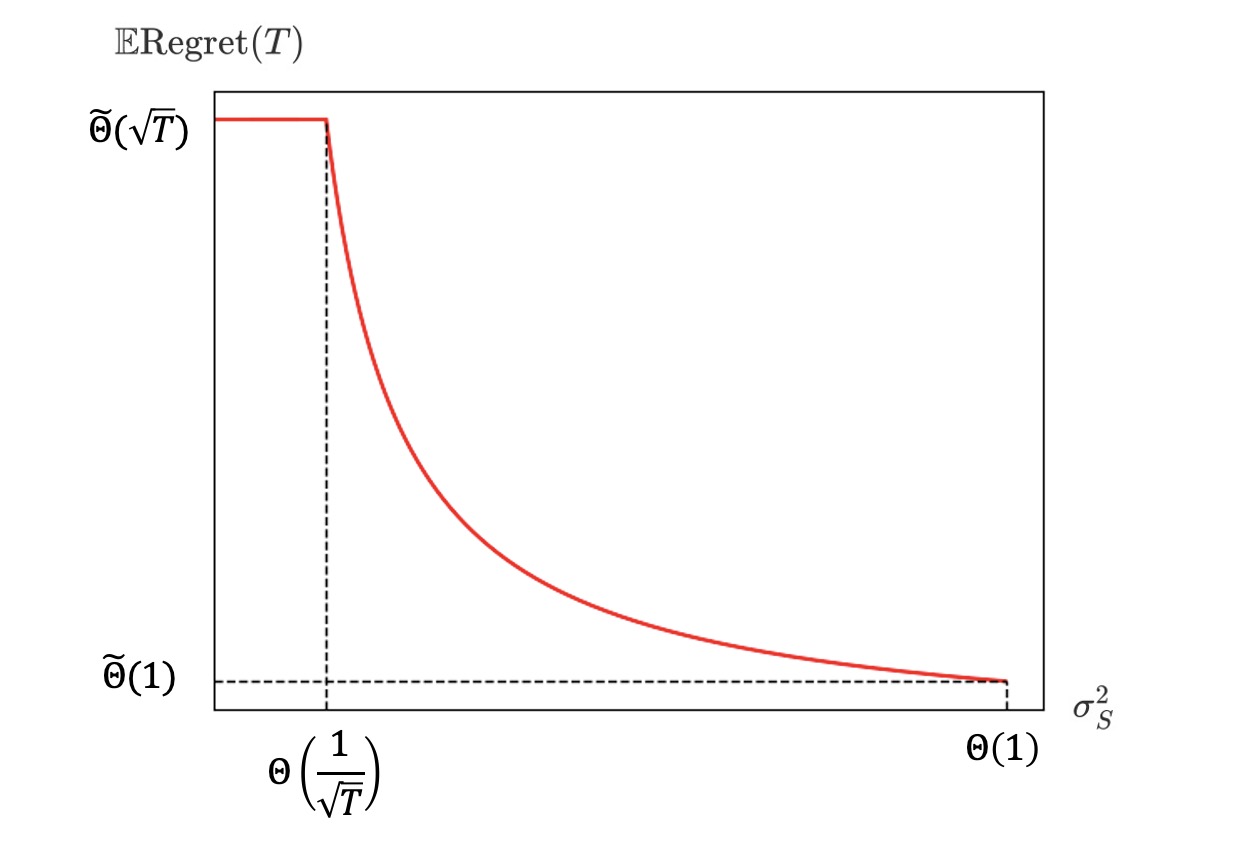}
\caption{Phase transition with respect to $\sigma_S^2$.}
\label{fig:phase}
\end{figure}

\paragraph{Why does noise help learning?}
The phase transition admits a clean economic reading: in our setting,
\emph{supply} randomness is informative for demand because the platform
chooses the service fee \emph{after} observing the supply curve, so the noise widens the natural exploration footprint without paying any revenue cost. Mathematically, a larger supply variance avoids the
weak-instrument problem and shrinks the variance of the IV estimator
$\hat\beta$. \emph{Demand} randomness, in contrast, does not help the
platform's regret: it contributes a constant variance term that the platform cannot manipulate, and our regret bound scales proportionally
with the magnitude of demand noise. The general lesson is that, in dynamic pricing, sources of market randomness should be disentangled
into the part that reveals identifying information (here $\sigma_S^2$)
and the part that adds residual variance (here $\sigma_D^2$); the platform's exploration budget should be calibrated to the former, not
the latter.

\paragraph{Practical implication.}
Categories with high natural supply volatility (food delivery, ride-hailing) deliver near-zero regret without any deliberate exploration, while categories with stable supply (subscriptions, utilities) require the platform to inject vanishing fee perturbations of magnitude $t^{-1/4}$. When the variance of the perturbation in \texttt{Act} is keyed directly to the dataset size, a single implementation handles both regimes without manual tuning---the \texttt{AAaI} wrapper merely needs to decide once whether $\sigma_S^2$ falls above or below the critical scale.
In real-world deployment, such potential perturbations should be implemented only within pre-specified price bands, audited ex ante and ex post, and constrained by consumer protection, non-discrimination, and any sector-specific pricing regulations. Platforms may also replace customer-facing randomization by less intrusive sources of variation whenever available, such as pre-announced pilots, randomized discounts, randomized fee waivers, or
experiments confined to legally permitted tariff windows.

\section{Practical Considerations}
\label{sec:practice}

The regret bounds of \Cref{sec:regret} hold for any function class
that satisfies \Cref{ass:context}. To make the result usable in
practice, we compute the effective dimension $\textit{dim}$ for two
function classes that span the spectrum from interpretable to highly
expressive: linear models (\Cref{sec:linear}) and multilayer
perceptrons (\Cref{sec:MLP}). We then sketch how the methodology
extends beyond the additive model $\beta Q + f(x_t)$ to fully
nonlinear demand $f(Q, x_t)$ via deep IV regression
(\Cref{sec:ext}).

Before turning to the two function classes, we note that the
\emph{effective dimension} $\textit{dim}$ in \Cref{ass:context} captures the complexity of the demand--feature relationship. Mathematically, highly irregular demand functions are unlearnable in
the worst case, but in practice demand depends on features in a structured, smooth way. A larger $\textit{dim}$ reduces model bias but inflates estimation variance for a fixed sample size, and our regret bounds scale linearly in $\textit{dim}$ (\Cref{thm:partially,thm:no_noise,thm:upper_ada}). The two examples below show that $\textit{dim}$ remains tractable for both linear and deep predictors.

\subsection{Linear demand}
\label{sec:linear}

When $\cF$ is the class of linear functions, every $f\in\cF$
corresponds to a parameter $\theta_f\in\RR^d$ via
$f(x)=\langle x,\theta_f\rangle$. We take $\cX = [0,1]^d$.

\begin{theorem}\label{thm:linear_class}
Assume $\|\theta_f\|_2 \le B/\sqrt{d}$. Then \Cref{ass:context} is
satisfied with $\textit{dim} = d$.
\end{theorem}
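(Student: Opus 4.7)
The plan is to verify the two substantive conditions of Assumption~\ref{ass:context} --- the $[-B,B]$ range and the $\cO((1/\epsilon)^d)$ covering estimate --- since the i.i.d.\ generation of $x_t$ on $[0,1]^d$ is part of the hypothesis. The range bound follows immediately from Cauchy--Schwarz: for any $x\in[0,1]^d$ and $\|\theta_f\|_2\le B/\sqrt d$,
\[
|f(x)|=|\langle x,\theta_f\rangle|\le\|x\|_2\,\|\theta_f\|_2\le\sqrt d\cdot B/\sqrt d=B.
\]

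For the covering number, I would reduce the sup-norm cover on $\cF$ to a Euclidean cover of the parameter ball $\cB:=\{\theta\in\RR^d:\|\theta\|_2\le B/\sqrt d\}$. For any two parameters $\theta_1,\theta_2\in\cB$, the same Cauchy--Schwarz step gives
\[
\|f_{\theta_1}-f_{\theta_2}\|_\infty=\sup_{x\in[0,1]^d}|\langle x,\theta_1-\theta_2\rangle|\le\sqrt d\,\|\theta_1-\theta_2\|_2,
\]
so any $(\epsilon/\sqrt d)$-net of $\cB$ in $\|\cdot\|_2$ induces an $\epsilon$-net of $\cF$ in $\|\cdot\|_\infty$. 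Applying the standard volumetric estimate $N(\cB_r,\|\cdot\|_2,\delta)\le(1+2r/\delta)^d$ with $r=B/\sqrt d$ and $\delta=\epsilon/\sqrt d$ yields
\[
N(\cF,\|\cdot\|_\infty,\epsilon)\le\bigl(1+2B/\epsilon\bigr)^d=\cO((1/\epsilon)^d),
\]
which is exactly the bound required by Assumption~\ref{ass:context}, with $dim=d$.

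There is no genuine obstacle here; the only subtlety is picking the right reduction. A naive attempt to cover the parameters in $\|\cdot\|_\infty$, or to approximate $f$ pointwise on a grid of $\cX$, would inflate the exponent above $d$. The key observation is that although $\|x\|_2$ can be as large as $\sqrt d$, the parameter ball shrinks correspondingly to radius $B/\sqrt d$, so the two $\sqrt d$ factors cancel inside the covering bound and leave a genuinely $d$-dimensional volumetric term $(1+2B/\epsilon)^d$. This cancellation is what makes $dim=d$, rather than a larger quantity, the correct effective dimension.
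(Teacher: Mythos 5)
Your proof is correct and establishes the substantive content of the theorem, but it differs from the paper's argument in the covering step and omits one piece the paper includes. The boundedness step is identical (Cauchy--Schwarz). For the covering number, the paper builds an explicit coordinate lattice with spacing $\epsilon/d$ over the parameter box and bounds the sup-norm error via $\|x\|_1\le d$, obtaining $N(\cF,\|\cdot\|_\infty,\epsilon)\le(1+2\lceil B\sqrt d/\epsilon\rceil)^d$; you instead push the sup-norm metric down to the $\ell_2$ parameter ball via $\|f_{\theta_1}-f_{\theta_2}\|_\infty\le\sqrt d\,\|\theta_1-\theta_2\|_2$ and invoke the volumetric bound, getting the cleaner $(1+2B/\epsilon)^d$ with a dimension-free base. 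Both yield exponent $d$, so your closing remark that a coordinate-wise ($\|\cdot\|_\infty$) cover of the parameters ``would inflate the exponent above $d$'' is not accurate --- that is precisely the paper's route, and it only worsens the prefactor (to roughly $(B\sqrt d)^d$), not the exponent of $1/\epsilon$. The one genuine omission: Assumption~\ref{ass:context} defines the effective dimension as the \emph{smallest} admissible exponent, and the paper accordingly supplies a matching lower bound $N(\cF,\|\cdot\|_\infty,\epsilon)\gtrsim\Omega((1/\epsilon)^d)$ (via a volume/packing argument on the parameter ball) to certify that $d$ cannot be improved when $\cX=[0,1]^d$. Your upper bound alone shows the assumption holds with $dim=d$, which is all the regret theorems consume, but it does not show $d$ is the effective dimension; adding the packing lower bound (e.g., via $\|f_{\theta_1}-f_{\theta_2}\|_\infty\ge\|\theta_1-\theta_2\|_\infty$ from testing at coordinate vectors) would close that gap.
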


\Cref{thm:linear_class} recovers the rates of
\citet{nambiar2019dynamic} and \citet{aidynamic} but---importantly---removes their assumption on the condition number of the feature covariance matrix. In quantitative-trading or large-scale ad-pricing settings, where companies routinely deploy thousands of features, near-collinearity between features is the rule rather than the exception, and a condition-number assumption is rarely satisfied in
practice. Our regret bound holds for any ERM solution and depends only on the covering number, providing a strictly more practical guarantee. 

\subsection{Deep neural networks: bypassing star-shapedness}
\label{sec:MLP}
We next instantiate \Cref{ass:context} for deep neural networks. Let $\cF$ be the class of $L$-layer MLP
networks
\[
   f(x) \;=\; h_L \circ \mathrm{ReLU} \circ h_{L-1}
              \circ \cdots \circ \mathrm{ReLU}
              \circ h_2 \circ \mathrm{ReLU} \circ h_1(x),
\]
where $h_\ell(x) = W_\ell x + b_\ell$ and the layer dimensions
$d_0 = d,\, d_1,\ldots, d_{L-1},\, d_L = 1$ satisfy
$\max_\ell d_\ell \le W$. Let
$b = \max\{\|W_1\|_\infty, \ldots, \|W_L\|_\infty,
          \|b_1\|_\infty, \ldots, \|b_L\|_\infty\}$
denote the largest weight in the network; in practice $b$ is
controlled by an explicit weight penalty.

\begin{theorem}\label{thm:MLP}
Assume $b \ge 1$ and $((W+1)b)^L \le B$. Then \Cref{ass:context} is
satisfied with $\textit{dim} \lesssim \cO(W^2 L)$. Furthermore, if
$W, L \ge 60$, $\textit{dim} \simeq \Theta(W^2 L)$.
\end{theorem}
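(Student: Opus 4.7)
The proof splits into an upper bound $dim \lesssim \cO(W^2 L)$ on the covering number and, for $W,L\ge 60$, a matching lower bound $dim \gtrsim \Omega(W^2 L)$.

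For the upper bound, the plan is to combine Lipschitz continuity of the MLP output with respect to its parameters with a direct cover of the parameter box. First I would prove by induction on the layer index $l$ that, for any $x\in[0,1]^d$, the pre/post activations satisfy $\|a_l\|_\infty\le ((W+1)b)^l\le B$, using $\|W_l\|_\infty,\|b_l\|_\infty\le b$, the fact that $h_l(\cdot)=W_l\cdot+b_l$ maps $\ell_\infty$-balls of radius $r$ into $\ell_\infty$-balls of radius $Wbr+b$, and the $1$-Lipschitzness of $ReLU$. Next, for parameters $\theta=(W_l,b_l)_{l=1}^{L}$ and a perturbation $\tilde\theta$ with $\|\theta-\tilde\theta\|_\infty\le\delta$, I would show by a telescoping argument through the layers that $\|f_\theta-f_{\tilde\theta}\|_\infty \le L\cdot((W+1)b)^{L}\cdot \delta \le LB\,\delta$, since at each layer the linear change is at most $(W\|a_{l-1}\|_\infty+1)\delta$ and the downstream composition multiplies by at most $((W+1)b)^{L-l}$. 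The total number of scalar parameters is $\sum_{l=1}^{L}(d_{l-1}d_l+d_l)\le L(W^2+W)\lesssim LW^2$. A product cover of $[-b,b]^{LW^2}$ at scale $\epsilon/(LB)$ therefore yields
\[
N(\cF,\|\cdot\|_\infty,\epsilon)\le \left(\tfrac{2bLB}{\epsilon}\right)^{LW^2},
\]
and absorbing the constant $(2bLB)^{LW^2}$ into the hidden constant in the $\lesssim$ gives $dim\lesssim \cO(W^2 L)$ as required.

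For the matching lower bound when $W,L\ge 60$, the plan is to import the tight pseudo-dimension bound for deep $ReLU$ networks due to Bartlett--Harvey--Liaw--Mehrabian, which constructs, inside a width-$W$ depth-$L$ ReLU network, a bit-extraction gadget shattering $\Omega(W^2L)$ points. This shows $\mathrm{Pdim}(\cF)\gtrsim W^2 L$ as soon as $W,L$ exceed an absolute threshold (which is verifiable to be $60$ from the explicit construction). From a pseudo-dimension lower bound one converts to a covering-number lower bound by a standard Sauer--Shelah style packing argument: choose a point set of size $\Omega(W^2 L)$ shattered by $\cF$, and on this discrete set construct a $2^{\Omega(W^2L)}\cdot (1/\epsilon)^{\Omega(W^2L)}$-sized $\epsilon$-separated family of functions in $\cF$, so that any $\|\cdot\|_\infty$-cover at scale $\epsilon$ must have cardinality at least $(1/\epsilon)^{\Omega(W^2L)}$. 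This yields $dim\gtrsim\Omega(W^2L)$.

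The main obstacle is the lower bound: obtaining the sharp $W^2$ (rather than merely $W$) factor requires the bit-extraction construction, which exploits the ability of deep ReLU networks to implement multiplexers of depth $\Theta(L)$ selecting among $\Theta(W^2)$ parameters, and care must be taken to fit the construction within width exactly $W$ rather than $cW$ for some $c>1$. The upper bound, by contrast, is essentially a bookkeeping exercise once the exponential-in-$L$ activation bound and parameter-Lipschitz bound are established; the only subtlety is ensuring that the $\log(LB/\epsilon)$ factor inside the covering-number estimate is absorbed into the $\lesssim$ symbol so that the exponent is exactly $LW^2$.
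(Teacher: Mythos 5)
Your overall structure (boundedness by induction, then a covering-number upper bound, then a matching lower bound for $W,L\ge 60$) mirrors the paper's, but the paper disposes of both covering-number directions by directly citing Theorem~2.1 of \citet{ou2024covering}, which gives $30W^2L\log(B/\epsilon)\;$ above and $\frac{1}{92160000}W^2L\log(B/\epsilon)$ below for this exact constrained class, plus a short case analysis for $\epsilon\ge\frac12$. Your upper bound is a correct self-contained substitute: the induction $\|a_l\|_\infty\le((W+1)b)^l$, the parameter-Lipschitz estimate $\|f_\theta-f_{\tilde\theta}\|_\infty\le L((W+1)b)^L\delta$, and the product cover of the $\le L(W^2+W)$-dimensional parameter box together give $\log N\lesssim W^2L\log(2bLB/\epsilon)$, and absorbing the $\epsilon$-independent factor is consistent with how the paper itself treats the constant $B^{30W^2L}$. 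That part buys you independence from the external reference at the cost of a worse constant.

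The lower bound, however, has a genuine gap. A pseudo-dimension bound $\mathrm{Pdim}(\cF)\gtrsim W^2L$ (even granting that the Bartlett--Harvey--Liaw--Mehrabian bit-extraction gadget fits inside width exactly $W$, respects the weight constraint $\|W_l\|_\infty,\|b_l\|_\infty\le b$ and the output constraint $((W+1)b)^L\le B$, and kicks in at the same threshold $60$ --- none of which is automatic) only yields a packing of size $2^{\Omega(W^2L)}$ at \emph{one} witness scale $\gamma_0>0$. To conclude $dim\gtrsim W^2L$ under the paper's definition you must rule out $N(\cF,\|\cdot\|_\infty,\epsilon)\le C(1/\epsilon)^{dim'}$ for every $dim'\ll W^2L$ with a class-dependent constant $C$, and a single-scale bound cannot do that: you need $\log N(\cF,\|\cdot\|_\infty,\epsilon)\gtrsim W^2L\log(1/\epsilon)$ for all small $\epsilon$. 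Your invocation of a ``Sauer--Shelah style packing argument'' does not supply this --- Sauer--Shelah runs in the wrong direction, and shattering a set of $m$ points produces only $2^m$ separated sign patterns, not $(1/\epsilon)^{m}$ separated functions. What is actually needed is either a fat-shattering (or realizable-value-pattern) argument at every scale $\epsilon$, showing that on the shattered points the network can realize $\Omega((1/\epsilon)^{W^2L})$ distinct $\epsilon$-separated output configurations, or a direct covering-number lower bound of the kind proved in \citet{ou2024covering}. As written, your lower bound establishes at best $\log N(\cF,\|\cdot\|_\infty,\gamma_0)\gtrsim W^2L$ for one $\gamma_0$, which does not pin down the exponent of the effective dimension.
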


\Cref{thm:MLP} explains the empirical success of deep networks in demand prediction. The condition $((W+1)b)^L \le B$ is the standard output bound used throughout the covering-number literature~\citep{shen2023exploring,ou2024covering} and is enforceable through any common weight regularizer. The bound is \emph{linear in depth and quadratic in width}, consistent with the practitioner's preference for deep but narrow networks~\citep{he2016deep}. Combined with the linear dependence of regret on $\textit{dim}$ in \Cref{thm:partially,thm:no_noise,thm:upper_ada}, we conclude that scaling the network changes the regret only polynomially---our framework does not suffer from the curse of model complexity and is suitable for large-scale deployment.

The main technical difficulty in applying our IV analysis to modern high-capacity predictors, such as MLPs, is the star-shapedness requirement in standard nonparametric IV theory. 
The classical statistical-learning route to studying deep models in a nonparametric framework requires the centered class $\cF - f := \{g - f : g \in \cF\}$ to be \emph{star-shaped} around the origin, that is, $\lambda(\cF - f)\subseteq \cF - f$ for every $\lambda \in [0,1]$~\citep{wainwright2019high, yu2022strategic}. This regularity is mild for $\cF$ itself when the network has a free scalar at its last layer: rescaling the output multiplies the entire function. After
centering around an unknown $f$, however, star-shapedness almost never survives---there is no reason for $\lambda f' + (1-\lambda) f$ to lie in $\cF$ for two arbitrary networks $f, f'$. Imposing convexity on $\cF$ would restore the property but exclude essentially all deep architectures. The assumption thus rules out the very models our framework aims to support (\Cref{fig:star}).

\begin{figure}[ht]
\centering
\includegraphics[width=0.45\linewidth]{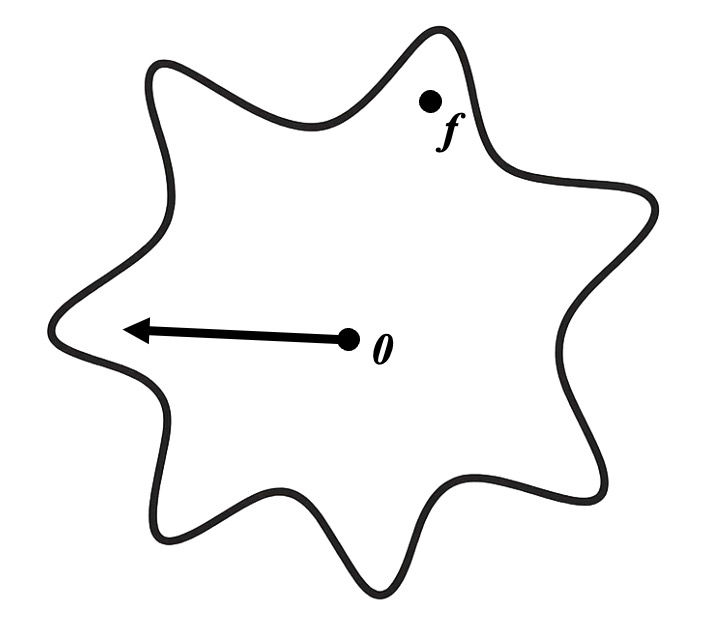}
\caption{$\cF$ is only star-shaped around $0$.}
\label{fig:star}
\end{figure}

\paragraph{A homeomorphism that recovers star-shapedness.}
We resolve this difficulty by constructing an auxiliary function class that is homeomorphic to $\lambda\cF + (1-\lambda) f$. This auxiliary class is not directly accessible, since $f$ is unknown; our algorithm does not rely on access to it, and we invoke the construction purely as a bridge for the statistical analysis. The auxiliary class has two properties that drive the proof. First, its provably effective dimension increases from $\textit{dim}$ to $\textit{dim} + 1$---only one extra dimension relative to $\cF$. Intuitively, one can think of the auxiliary class as duplicating the network $\cF$ (doubling its width) and adding a new connecting layer with weights $\lambda$ and $1 - \lambda$ on the two copies. Second, the auxiliary class is star-shaped with respect to $f$, even though we do \emph{not} require $\cF$ itself to be star-shaped, thereby extending the boundaries of traditional statistical learning at the cost of only a one-dimensional increase in complexity. Replacing the star-shapedness condition with this one-dimensional overhead, we obtain an upper bound that bypasses the unrealistic star-shaped assumption and provides the first theoretical justification for ERM-with-IV demand learning over arbitrarily deep networks.

We end the section by addressing a separate practical concern about DNN: whether the ERM oracle invoked in \Cref{alg:AaaIV} can be realized in practice. Real implementations approximate ERM by stochastic gradient descent; the resulting optimization error~\citep{meng2017generalization} can be controlled in over-parameterized regimes, where gradient descent provably converges to a global minimum~\citep{cao2020generalization,qiu2020robust}, and is in any case small under standard initialization schemes such as He initialization~\citep{he2015delving}. Our experiments in \Cref{sec:exp} use the same gradient-based pipeline that platforms deploy in production.

\subsection{General demand learning}
\label{sec:ext}
The additive model $P_{Dt} = \beta Q + f(x_t) + \epsilon_{Dt}$ is
rich enough for our main results, but in some applications the price--quantity relationship cannot be cleanly separated. Suppose demand follows the fully nonlinear specification $P_{Dt} = f(Q, x_t) + \epsilon_{Dt}$ without any further structural restriction. Confounding between $P_t^e$ and $Q_t^e$ persists, and the conditional-moment identity of \citet{newey2003instrumental}
reads
\[
   \E[P_t^e \given x_t, a_t]
   \;=\; \int f(Q_t^e, x_t)\, dF(Q_t^e \given x_t, a_t),
\]
where $F(\cdot \given x_t, a_t)$ is the conditional CDF of $Q_t^e$
given $(x_t, a_t)$. A natural two-stage procedure follows: first
approximate $F(Q_t^e \given x_t, a_t)$ with a generative neural
network, then plug in via Monte Carlo and learn $f(\cdot, \cdot)$
with a second network. This is precisely the Deep IV training
pipeline of \citet{hartford2017deep}.

The three obstacles to treating $a_t$ as a strict instrument were
already discussed in \Cref{sec:aai-idea}: the policy generating
$a_t$ evolves over time, $a_t$ requires a vanishing artificial
perturbation when the supply is degenerate, and $a_t$ is mediated by the
buyer's strategic behavior. The same three remedies---martingale
concentration, $t^{-1/4}$ exploration, and $\cO(\log T)$-switching
updates---transfer to the nonlinear specification unchanged, since
they act on $a_t$'s temporal structure rather than on the form of
$f$. \Cref{alg:AaaIV} and its wrapper \Cref{alg:Ada-AaaIV} therefore
extend verbatim, with the only substantive change being that the
inner ERM step now fits a Deep IV network rather than a
linear-in-$Q$ predictor.

Deep IV regression typically lacks rigorous theoretical guarantees. \citet{xu2020learning} tries to establish a rate, but only
for generalized linear networks with $\widetilde\cO(T^{-1/4})$
convergence. Our paper is the first to provide regret bounds for
general deep networks in additive models, and obtaining bounds for
fully nonseparable $f(Q, x_t)$ remains a promising open direction.
\Cref{sec:ride} provides a numerical study on a Lyft ride-hailing
dataset that illustrates the empirical robustness of this extension.

\section{Experiments}
\label{sec:exp}

We now turn from theory to practice. 
The numerical experiments in this section have three goals: (i) to illustrate the regret regimes of \Cref{thm:partially,thm:no_noise} on synthetic data, including the phase transition predicted by \Cref{thm:upper_ada} and
\Cref{thm:lower};
(ii) to verify the methodology of ``actions as
instruments'' on real micro-level transaction data from a large
food-delivery platform; and 
(iii) to illustrate the potential revenue implications of dynamic service-fee redistribution.
\Cref{sec:exp+phase} addresses (i), and
\Cref{sec:exp+Talabat} addresses (ii) and (iii). Ablation studies
isolating the contribution of the doubly-robust correction
(\Cref{eq:IV_robust}) are deferred to \Cref{app:ablation}; a
parallel empirical study on Lyft ride-hailing data, which exercises
the nonlinear extension of \Cref{sec:ext}, is reported in
\Cref{sec:ride}.

\subsection{Phase transition: regret bounds with respect to supply
volatility}
\label{sec:exp+phase}

\paragraph{Known supply variance.}
We first illustrate the two regret regimes of
\Cref{sec:upper+constant} through simulation.
We take $x_t \in \cX = [0,1]^5$ and let $\cF$ be a three-layer multilayer
perceptron with two hidden layers of dimension $10$. The supply
curve is $P_{St} = Q + \epsilon_{St}$ with
$\epsilon_{St} \sim \cN(0, \sigma_S^2)$, and the demand curve is
$P_{Dt} = f(x_t) - Q + \epsilon_{Dt}$ for some $f \in \cF$. We
consider two regimes: $\sigma_S^2 = 1$, which falls into the
positive-volatility case of \Cref{thm:partially}, and
$\sigma_S^2 = 0$, which falls into the noise-free case of
\Cref{thm:no_noise}. For computational convenience, we replace the
ERM step in \Cref{alg:AaaIV} by \texttt{Adam} optimizer, mirroring standard deep-learning practice and demonstrating that our framework runs
with off-the-shelf optimizers. 
We set $T = 100{,}000$ and repeat each run 10 times under the same implementation, reporting 95\% confidence intervals; full implementation details are in \Cref{app:exp}.

\begin{figure}[!ht]
\centering
\begin{minipage}[t]{0.465\textwidth}
\centering
    \includegraphics[width=\linewidth]{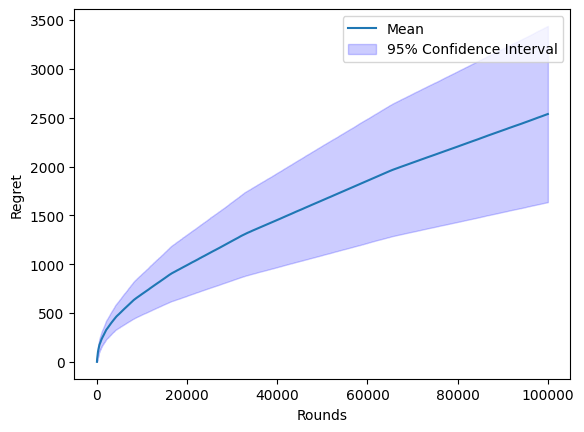}
    \caption{Regret of \Cref{alg:AaaIV} when $\sigma_S^2 = 1$.}
    \label{fig:log}
\end{minipage}
\hspace{0.5cm}
\begin{minipage}[t]{0.485\textwidth}
\centering
    \includegraphics[width=\linewidth]{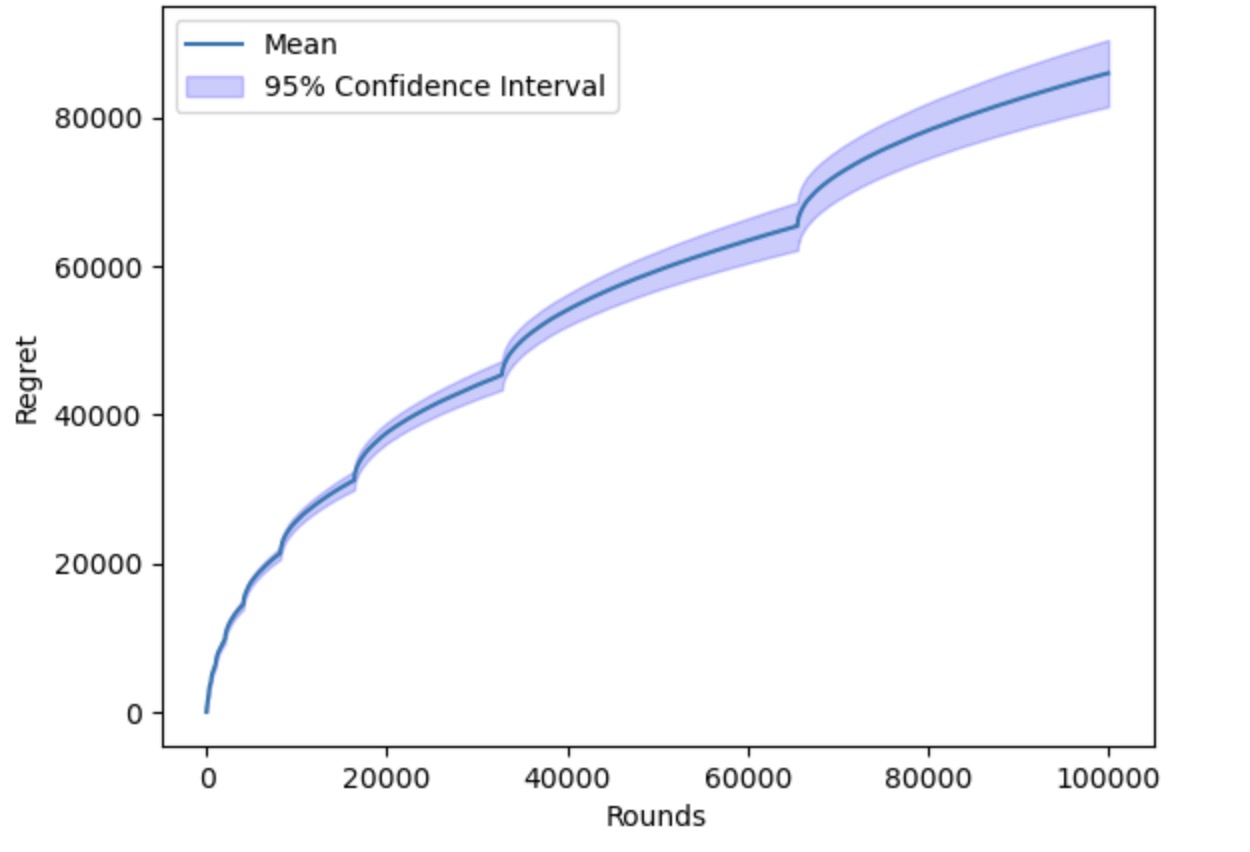}
    \caption{Regret of \Cref{alg:AaaIV} when $\sigma_S^2 = 0$.}
    \label{fig:sqrt}
\end{minipage}
\end{figure}

\Cref{fig:log,fig:sqrt} show a pronounced separation between the two regimes. With positive supply variance, regret remains below 5\% of its noise-free counterpart over the full horizon, matching the $\log T$ vs.\ $\sqrt T$ contrast predicted by \Cref{thm:partially} and \Cref{thm:no_noise}. In the noise-free case, regressing the average regret of \Cref{fig:sqrt} on the time horizon yields $\widehat{\textrm{Regret}}(T) \approx 137\, T^{0.56}$, close to the theoretical $\sqrt T$ rate in \Cref{thm:no_noise} and \Cref{thm:lower_no_noise}.


\paragraph{Unknown supply variance.}
Next we examine how regret depends on $\sigma_S^2$ when the platform
does not know its value, exercising \Cref{alg:Ada-AaaIV}. We vary
$\sigma_S^2$ over $[0,1]$ on a uniform grid of 200 points, with
$T = 10{,}000$ and a hypothesis-testing budget $T_0 = 100$. 
The results in \Cref{fig:phase_sigmaS} display a finite-sample transition: below the threshold (instances classified under $\HH_0$), regret is roughly flat in $\sigma_S^2$, whereas above it (instances under $\HH_1$), regret declines monotonically. The LOWESS fit~\citep{cleveland1979robust} makes this plateau-to-decline pattern explicit. The observed transition occurs at the theoretically predicted scale $\sigma_S^2\eqsim\Theta(1/\sqrt{T})$, with a finite-sample constant that depends on the testing rule, consistent with the rate $\widetilde{\Theta}(\sqrt T \wedge 1/\sigma_S^2)$ in \Cref{thm:upper_ada} and \Cref{thm:lower}.


\begin{figure}[!ht]
    \centering
    \includegraphics[width=0.5\linewidth]{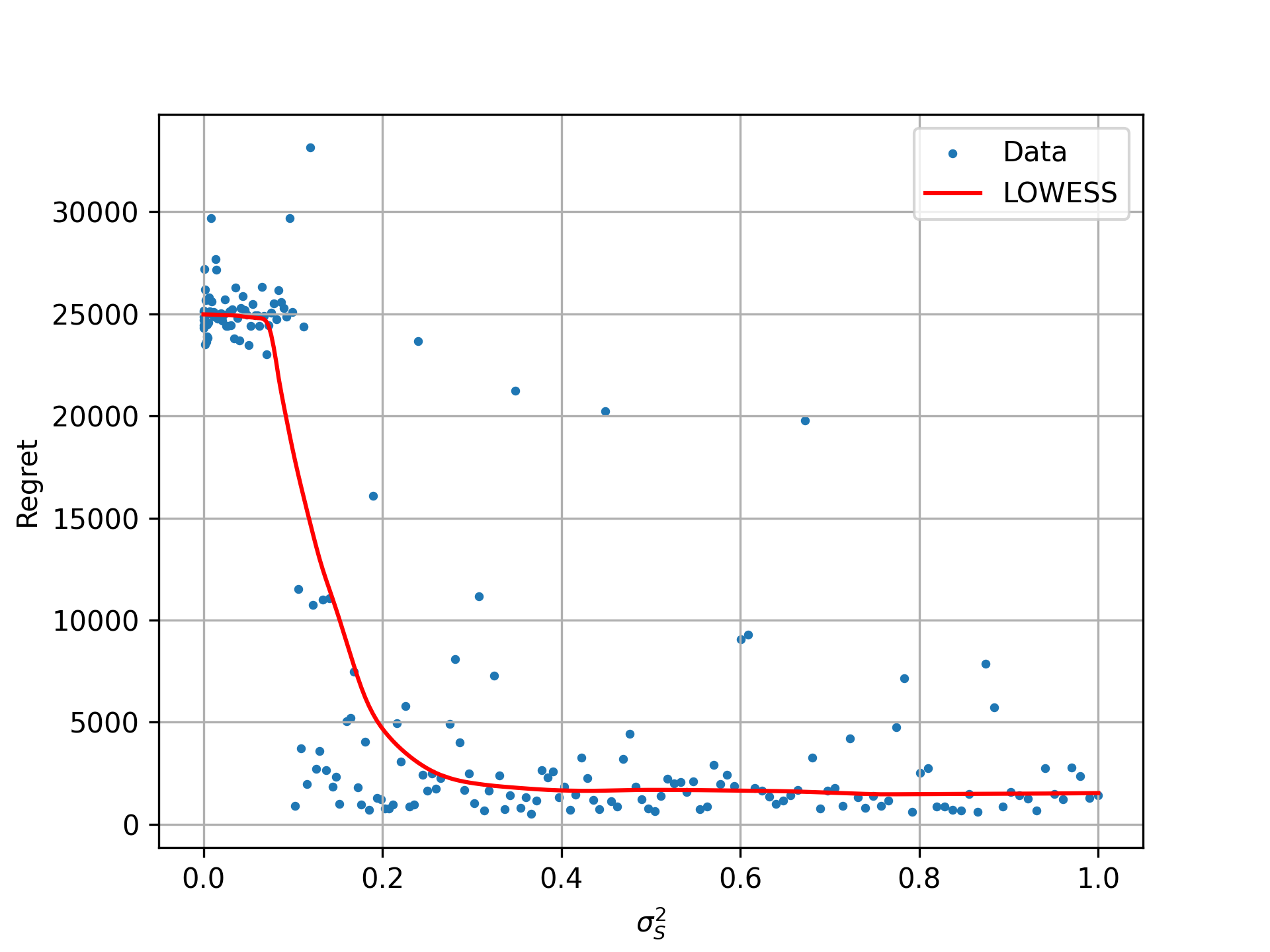}
    \caption{Phase transition and its LOWESS fitting.}
    \label{fig:phase_sigmaS}
\end{figure}

\subsection{Actions as instruments: an empirical study on the Talabat
platform}
\label{sec:exp+Talabat}

We now evaluate the methodology of actions-as-instruments on micro-level transaction data from \emph{Talabat}, a leading food-delivery platform operating in 21 Egyptian cities. The dataset covers \emph{September 1, 2021 through January 1, 2022} and contains detailed order-level snapshots\footnote{\url{https://github.com/hassan-mustafa/Talabat-Delivery-Hero-Case-Study.}}.
Throughout, we restrict the hypothesis class $\cF$ to the linear
specification of \Cref{sec:linear}, 
\[
\mathcal{F} = \{ f : f(x_t) \text{ is linear in } x_t \},
\]

\paragraph{Data construction and feature design.}
We aggregate successful orders at the hourly level to approximate
the market transacted quantity $Q_t^e$. The price is
defined as the total amount a customer pays for an order, and the
\emph{service fee} is the difference between this payment and the
tax-inclusive value of goods before discounts. The service fee
comprises platform fees (E\pounds8 per food order as of May 2026),
delivery-partner compensation, restaurant charges, and coupon
adjustments. Although Talabat also offers six smaller categories
(pharmacy, cosmetics, electronics, flowers, groceries, pet
supplies), food deliveries account for $\approx 83.5\%$ of all
transactions. Feature vectors are simple hourly averages: for
example, if 30\% of orders in an hour originate from Alexandria and 70\% from Cairo, the Alexandria dummy takes the value $0.3$. We include
dummy variables for discounts and first-time customers, and
continuous variables for delivery time and drop-off distance. After
dropping observations with missing values, the final dataset
contains 2{,}639 hourly observations with 58 features. Order
volumes exhibit strong intraday cyclicality, with a minimum between
4--7~a.m.\ and a peak around dinner time
(\Cref{fig:food_time}); additional preprocessing details are in
\Cref{app:emp}.

\begin{figure}[!ht]
\begin{minipage}[t]{0.48\textwidth}
    \centering
    \includegraphics[width=\linewidth]{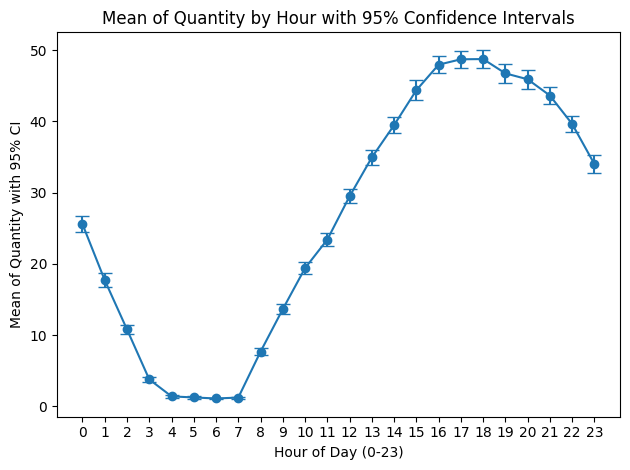}
    \caption{Relationship between transacted quantity and time of day.}
    \label{fig:food_time}
\end{minipage}\hfill
\begin{minipage}[t]{0.48\textwidth}
    \centering
    \includegraphics[width=\linewidth]{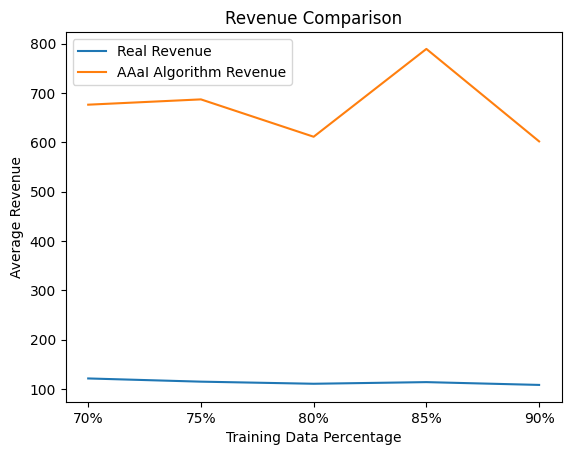}
    \caption{Offline counterfactual revenue via ``actions as instruments''.}
    \label{fig:food_per}
\end{minipage}
\end{figure}

\paragraph{Empirical strategy: explore-then-commit on chronological
splits.}
We partition the data chronologically: the first 80\% is used to
estimate the demand function and the remaining 20\% is used for
counterfactual evaluation. 
The explore-then-commit design provides a chronological held-out offline evaluation: it estimates the demand function on the training set, applies the learned model to adjust service fees on the test set, and compares model-implied counterfactual revenue with observed revenue.
On the training set, a naive OLS
regression of the cutoff price on transacted quantity (without an
instrument) yields an insignificant slope,
$\hat\beta_{\textrm{OLS}} = -0.02$ ($p = 0.81$), illustrating exactly the confounding pathology diagnosed in
\Cref{sec:why-fail}. 
Applying the estimation module of \Cref{alg:AaaIV}, treating the service fee as an instrument under the maintained conditional-orthogonality assumption, produces $\hat\beta = -7.18$ with a first-stage F-statistic of $442.5$. The statistic indicates strong instrument relevance conditional on the included features. Given the training-set average quantity (29.61) and average price (E\pounds113.28), the resulting estimate corresponds to a price elasticity of demand of approximately $-0.53$, that is, under the maintained specification, a 1\% price increase is associated with a 0.53\% decrease in demand, consistent with the relatively low elasticity expected for staple categories such as meal deliveries.


\paragraph{Revenue evaluation: dynamic redistribution at fixed mean fee.}
Because supply-side data are not available in the dataset, we choose hourly service fees to maximize model-implied revenue while holding the test-set \emph{mean service fee} fixed at its observed level (E\pounds4.90). This design isolates the \emph{redistributive} component of dynamic pricing from changes in the average fee level. Under the fitted linear market-clearing model and the fixed-average-fee constraint, average hourly counterfactual revenue is E\pounds611.45 rather than the observed E\pounds110.65, a $\approx 5.5\times$ model-implied difference (\Cref{fig:food_per}); because quantity is affine in the service fee, the unchanged mean fee also keeps mean transacted quantity unchanged. The policy reallocates fees toward peak-demand hours, where estimated local elasticity is lower, and away from off-peak periods. 
Taken together, this counterfactual quantifies the potential revenue upside from dynamic redistribution within the fitted model and fixed-average-fee constraint.

\paragraph{Robustness and limitations.}
We vary the training-test split between 70\% and 90\%; the model-implied revenue differences are qualitatively similar across these configurations (\Cref{fig:food_per}). Two interpretation considerations remain. First, the service-fee field bundles platform commissions with restaurant-side charges, so the headline E\pounds4.90 average overstates the platform-only fee. Second, unobserved supply-side variation, for example, additional delivery capacity coming online during dinner hours, could correlate with both transacted quantity and the service fee, affecting the estimated elasticity. 
We therefore view the $5.5\times$ figure as an optimistic, model-implied counterfactual under the fixed-average-fee constraint. Taken together, the exercise illustrates the potential revenue upside of dynamic redistribution in the estimated model.

\section{Conclusion}
\label{sec:conclusion}

This paper studies dynamic service-fee pricing on third-party
platforms, a problem that combines confounded observations,
strategic buyers, and high-capacity demand models. Three findings form the core of our contribution. First, the platform's own
service fee, despite being chosen adaptively, can be
used as an instrument for the demand curve; combined with a
doubly-robust correction, vanishing $t^{-1/4}$ exploration, and
$\cO(\log T)$ low-switching updates (\Cref{alg:AaaIV},
\Cref{alg:Ada-AaaIV}), this yields a working IV-based demand
learner in a fully closed-loop pricing setting. 
Second, the regret is governed by the variance of supply-side noise, with a sharp phase transition at $\sigma_S^2 \simeq 1/\sqrt{T}$: above this
threshold supply randomness alone identifies demand and regret is
logarithmic in $T$, while below it the platform must explicitly
explore at the $\sqrt{T}$ rate (\Cref{thm:partially},
\Cref{thm:no_noise}, \Cref{thm:upper_ada}, \Cref{thm:lower}). The lower bound rests on a new construction that isolates the
information channel carried by demand noise---an innovative device in the dynamic-pricing literature. 
Together, the upper- and lower-bound analyses give a clean economic reading of the phase transition: \emph{market volatility is information, not just noise}.
Third, a homeomorphism construction allows the ERM-with-IV framework to extend from linear models to arbitrarily deep neural networks 
at the cost of at most one additional dimension of complexity (\Cref{thm:MLP}), providing the first regret guarantee for DNN-based demand learning
under confounding.
We further apply the methodology to micro-level transaction data from \textsc{Talabat} (\Cref{sec:exp+Talabat}) and
\textsc{Lyft} (\Cref{sec:ride}), showing that actions-as-instruments yields an economically sensible demand curve where naive regression is uninformative, and that elasticity-driven fee redistribution has model-implied revenue potential under fixed-average-fee counterfactuals.


This work opens up several promising directions for future research. Is the dependence of our algorithm on $\gamma$ optimal? While we provide theoretical guarantees under a separable model, 
we propose an extension and illustrate its implementation for non-separable models in \Cref{sec:ext} and \ref{sec:ride}.
However, establishing theoretical guarantees for pricing with deep neural networks remains challenging due to the lack of a solid theoretical foundation in deep learning. Bridging the gap between empirical success and theoretical understanding under more general models is a valuable avenue for future exploration. 
It is also important to implement such exploration as constrained, auditable experiments, accompanied by regulatory safeguards and robustness checks.
We hope our work offers useful insights into the pricing problem faced by third-party platforms.

\bibliographystyle{plainnat}

\bibliography{Sections/reference}

\newpage
\appendix

\section{Auxiliary Lemmas}
We first state some lemmas to pave the way for incoming proofs.
\begin{lemma}[Corollary 1.7. in \citet{rigollet2023high}]\label{lem:hoeffding}
    Suppose that $X_i, i=1, \ldots, n$ are independent, zero-mean, $\eta$-sub-Gaussian random variables. Then for all $a\in \RR^n$ and $t \geq 0$, we have
\[
\mathbb{P}\left[\left|\sum_{i=1}^n a_i X_i \right|\geq t\right] \leq 2\exp \left\{-\frac{t^2}{2  \eta^2\|a\|_2^2}\right\} .
\]
\end{lemma}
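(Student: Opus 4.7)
The plan is to assemble the standard moment-generating-function (MGF) argument: since each $X_i$ is $\eta$-sub-Gaussian with mean zero, it admits the MGF bound $\mathbb{E}[e^{s X_i}] \le \exp(s^2 \eta^2/2)$ for every $s \in \RR$. This is the one structural fact I plan to take as the definition (or an immediate consequence of the definition) of the sub-Gaussian parameter $\eta$.

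First, I would handle the linear combination by exploiting independence. For any fixed $s \in \RR$, independence of the $X_i$ gives
\[
\mathbb{E}\!\left[\exp\!\left(s \sum_{i=1}^n a_i X_i\right)\right] = \prod_{i=1}^n \mathbb{E}\!\left[\exp(s a_i X_i)\right] \le \prod_{i=1}^n \exp\!\left(\frac{s^2 a_i^2 \eta^2}{2}\right) = \exp\!\left(\frac{s^2 \eta^2 \|a\|_2^2}{2}\right),
\]
so that $S := \sum_i a_i X_i$ is itself $(\eta \|a\|_2)$-sub-Gaussian. Next I would apply the Chernoff trick: for $s>0$,
\[
\mathbb{P}(S \ge t) \le e^{-st}\, \mathbb{E}[e^{sS}] \le \exp\!\left(\tfrac{1}{2} s^2 \eta^2 \|a\|_2^2 - s t\right),
\]
and optimizing over $s$ by setting $s = t/(\eta^2 \|a\|_2^2)$ yields the one-sided tail bound $\exp(-t^2/(2\eta^2\|a\|_2^2))$. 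A symmetric argument applied to $-S$ (whose components $-a_i X_i$ are also $\eta$-sub-Gaussian with the same $\|a\|_2$) gives an identical bound on $\mathbb{P}(S \le -t)$, and a union bound over the two events delivers the stated factor of $2$.

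I do not expect a substantive obstacle: the only delicate step is ensuring that the MGF inequality holds uniformly for both positive and negative $s$, which is immediate from the symmetric form of the sub-Gaussian definition. One minor edge case worth flagging is $\|a\|_2 = 0$, where $S \equiv 0$ and the bound holds trivially (since the right-hand side is $2$), so the optimization step may be skipped in that degenerate case.
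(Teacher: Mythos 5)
Your proof is correct and is exactly the standard MGF/Chernoff argument used in the cited reference (the paper itself states this lemma by citation to \citet{rigollet2023high} without reproving it). The handling of the degenerate case $\|a\|_2=0$ and the symmetric union bound for the two-sided tail are both sound.
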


\begin{lemma}[Theorem 1.13 in \citet{rigollet2023high}]\label{lem:subE}
    Suppose that $X_i, i=1, \ldots, n$ are independent, zero-mean, $\eta$-sub-exponential random variables. Then for all $t \geq 0$, we have
    \[
    \mathbb{P}\left[\left|\frac{1}{n}\sum_{i=1}^n  X_i \right|\geq t\right] \leq 2\exp \left\{-\frac{n}{2}\min\{\frac{t^2}{\eta^2},\frac{t}{\eta}\}\right\} .
    \]
    Consequently, when $t\le\eta$,
        \[
    \mathbb{P}\left[\left|\frac{1}{n}\sum_{i=1}^n  X_i \right|\geq t\right] \leq 2\exp \left\{-\frac{nt^2}{2\eta^2}\right\} .
    \]
\end{lemma}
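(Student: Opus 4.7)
The plan is to use the standard Chernoff-type argument specialized to the sub-exponential regime, which produces a Bernstein-style tail with a phase transition at $t \asymp \eta$. First, I would invoke the defining MGF property of an $\eta$-sub-exponential zero-mean random variable $X$, namely $\mathbb{E}[e^{\lambda X}] \le \exp(\lambda^2 \eta^2/2)$ for all $|\lambda| \le 1/\eta$. By independence, $\mathbb{E}[e^{\lambda \sum_i X_i}] \le \exp(n \lambda^2 \eta^2/2)$ in the same range of $\lambda$, and this tensorized MGF bound is the engine of the whole argument.

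Next I would apply Markov's inequality to $e^{\lambda S_n}$, where $S_n=\sum_{i=1}^n X_i$, to obtain, for any $0<\lambda \le 1/\eta$,
\[
\mathbb{P}\!\left[\tfrac{1}{n}S_n \ge t\right] \le \exp\!\left(-\lambda n t + n\lambda^2 \eta^2/2\right).
\]
The rest is a one-variable optimization over $\lambda$. The unconstrained minimizer is $\lambda^{\star}=t/\eta^2$. If $t \le \eta$ then $\lambda^{\star}\le 1/\eta$ is feasible, and substitution gives the Gaussian-like bound $\exp(-nt^2/(2\eta^2))$. If $t>\eta$, the constraint binds at $\lambda=1/\eta$, producing $\exp(-nt/\eta + n/2)$; since $t/\eta>1$, we have $-nt/\eta+n/2 \le -nt/(2\eta)$, which is the exponential-tail regime. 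The two cases combine into the single expression $\exp(-\tfrac{n}{2}\min\{t^2/\eta^2,\,t/\eta\})$.

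Applying the identical argument to $\{-X_i\}$, which are still zero-mean and $\eta$-sub-exponential, handles the lower tail, and a union bound gives the leading factor of $2$. The corollary is immediate: when $t \le \eta$, $t/\eta \le 1$ forces $t^2/\eta^2 \le t/\eta$, so the minimum is attained by the quadratic term, reducing the bound to $2\exp(-nt^2/(2\eta^2))$.

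The main obstacle is really just matching the normalization used in \citet{rigollet2023high}: different references define ``$\eta$-sub-exponential'' through slightly different MGF inequalities (for instance $e^{\lambda^2\eta^2}$ rather than $e^{\lambda^2\eta^2/2}$, or a Bernstein pair $(\nu,b)$ with two parameters), which shifts the leading constants and the exact threshold between the two regimes. Pinning down the convention so that the factor of $n/2$ and the sharp split at $t=\eta$ line up is the only delicate step; once fixed, the remainder is a routine Chernoff-plus-boundary-optimization calculation.
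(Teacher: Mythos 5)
This lemma is quoted from \citet{rigollet2023high} and the paper gives no proof of its own, so there is nothing internal to compare against; your argument is the standard Chernoff--Bernstein proof (tensorize the sub-exponential MGF bound, optimize $\lambda$ over $(0,1/\eta]$ with the phase transition at $t=\eta$, symmetrize for the lower tail), which is exactly how the cited Theorem 1.13 is established. The derivation is correct under Rigollet's convention $\mathbb{E}[e^{sX}]\le e^{s^2\eta^2/2}$ for $|s|\le 1/\eta$, and the corollary for $t\le\eta$ follows as you state.
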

Some literature~\citep{wainwright2019high} uses $(\eta,\alpha)$ to represent a sub-exponential distribution and the bound will become $2\exp \{-\frac{n}{2}\min\{\frac{t^2}{\eta^2},\frac{t}{\alpha}\}\}$.
\begin{lemma}[Theorem 8 in \citet{rakhlin2022mathematical}]\label{lem:theorem8}
    Let $X_1,...,X_n$ be i.i.d. and $X_i\in[-1,1]$. Then for all $t \geq 0$, we have
    \[
    \PP\left[\frac{1}{n}\sum_{i=1}^n X_i-\E X_i\ge \sqrt{\frac{2t\Var(X_i)}{n}}+\frac{t}{3n}\right]\le e^{-t}.
    \]
\end{lemma}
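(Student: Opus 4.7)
\textbf{Proof proposal for Lemma~\ref{lem:theorem8}.} The plan is to follow the classical Chernoff--Bernstein argument, using a one-sided moment generating function (MGF) bound that exploits $X_i\in[-1,1]$. Writing $\sigma^2:=\Var(X_i)$ and centering by $Y_i:=X_i-\E X_i$, the claim reduces to showing
\[
\PP\!\left[\sum_{i=1}^n Y_i \ge u\right] \le e^{-t}\quad\text{for } u:=\sqrt{2tn\sigma^2}+\tfrac{t}{3}.
\]
By independence and Markov's inequality applied to $e^{\lambda(\cdot)}$, for every $\lambda>0$,
\[
\PP\!\left[\sum_{i=1}^n Y_i \ge u\right] \le e^{-\lambda u}\prod_{i=1}^n \E[e^{\lambda Y_i}].
\]

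The central ingredient is a Bennett-type bound on each $\E[e^{\lambda Y_i}]$. Since $Y_i$ has mean zero and is bounded above (by $1$ after absorbing a harmless shift from $X_i\in[-1,1]$), I would dominate $e^{\lambda y}-1-\lambda y$ pointwise by $(e^{\lambda}-1-\lambda)\,y^2$ on $\{y\le 1\}$ and take expectations to get $\E[e^{\lambda Y_i}]\le\exp\!\bigl((e^\lambda-1-\lambda)\sigma^2\bigr)$. Combining this with the elementary analytic inequality $e^\lambda-1-\lambda\le \lambda^2/(2(1-\lambda/3))$ on $\lambda\in(0,3)$ yields
\[
\log \E\!\left[e^{\lambda \sum_i Y_i}\right] \le \frac{n\lambda^2\sigma^2/2}{1-\lambda/3}.
\]

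Substituting into the Chernoff bound and optimizing by choosing $\lambda^\ast := u/(n\sigma^2+u/3)$ produces the standard Bernstein tail
\[
\PP\!\left[\sum_{i=1}^n Y_i \ge u\right] \le \exp\!\left(-\frac{u^2}{2(n\sigma^2+u/3)}\right).
\]
The final step is purely algebraic: it suffices to check that $u=\sqrt{2tn\sigma^2}+t/3$ satisfies $u^2\ge 2t(n\sigma^2+u/3)$. Expanding the square, the linear-in-$u$ cross term on the left cancels $2t\cdot u/3$ on the right exactly, leaving the surplus $t^2/9\ge 0$, which closes the argument. Dividing through by $n$ then recovers the stated bound on the sample mean.

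The main delicate point I anticipate is obtaining the denominator $1-\lambda/3$ instead of the looser $1-2\lambda/3$ that the two-sided bound $|Y_i|\le 2$ would naively yield. The fix is to use Bennett's lemma in its truly one-sided form: for the upper tail, only the upper boundedness $Y_i\le 1$ matters, and this is precisely what the assumption $X_i\in[-1,1]$ (together with $\E X_i \ge -1$) delivers. Once that MGF constant is tight, the rest of the proof is routine Chernoff optimization and arithmetic.
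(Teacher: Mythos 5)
The paper does not prove this lemma --- it is quoted verbatim from \citet{rakhlin2022mathematical} --- so the only question is whether your argument is sound, and it has two genuine gaps. First, the step ``$Y_i$ is bounded above by $1$'' is false: for $X_i\in[-1,1]$ one only has $Y_i=X_i-\E X_i\le 1-\E X_i\le 2$, and no shift can repair this because $Y_i$ is already centered (translating $X_i$ leaves $Y_i$ unchanged). The one-sided Bennett bound therefore reads $\E e^{\lambda Y_i}\le\exp\bigl(\tfrac{\sigma^2}{4}(e^{2\lambda}-1-2\lambda)\bigr)$, and carrying your argument through yields the additive term $\tfrac{2t}{3n}$, not $\tfrac{t}{3n}$. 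This loss is not removable: take $X_i=\pm1$ with $\PP[X_i=1]=2/n$ and $n$ large, so that $\tfrac1n\sum X_i-\E X_i=\tfrac2n(S_n-2)$ with $S_n\approx\mathrm{Poi}(2)$ and $\Var(X_i)\approx 8/n$; at $t=12.15$ the claimed threshold corresponds to $S_n-2\ge\sqrt{2t\cdot 2}+t/6\approx 8.996$, i.e.\ $\{S_n\ge 11\}$, whose probability $\approx 8.3\times10^{-6}$ exceeds $e^{-12.15}\approx 5.3\times10^{-6}$. So the constant $t/(3n)$ genuinely requires the one-sided bound $X_i-\E X_i\le 1$ (e.g.\ $X_i\in[0,1]$, which is in fact how the paper invokes the lemma, for $X_i=(f(x_i)-\hat f(x_i))^2/(4B^2)\in[0,1]$).

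Second, even granting $Y_i\le 1$, your closing algebra is backwards: with $u=\sqrt{2tn\sigma^2}+t/3$ one has $u^2=2tn\sigma^2+\tfrac{2t}{3}\sqrt{2tn\sigma^2}+\tfrac{t^2}{9}$ while $2t(n\sigma^2+u/3)=2tn\sigma^2+\tfrac{2t}{3}\sqrt{2tn\sigma^2}+\tfrac{2t^2}{9}$, so $u^2=2t(n\sigma^2+u/3)-\tfrac{t^2}{9}$ --- a deficit of $t^2/9$, not a surplus. The weakened tail $\exp\bigl(-u^2/(2(v+cu))\bigr)$ only inverts to the threshold $\sqrt{2vt}+2ct$. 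To obtain $\sqrt{2vt}+ct$ you must work with the exact Legendre transform of the sub-gamma log-MGF $\lambda\mapsto\frac{\lambda^2v}{2(1-c\lambda)}$, namely $\psi^*(u)=\frac{v}{c^2}h_1(cu/v)$ with $h_1(x)=1+x-\sqrt{1+2x}$, whose inverse is exactly $t\mapsto\sqrt{2vt}+ct$ (the standard sub-gamma calculus of Boucheron, Lugosi and Massart). With that substitution, and with the hypothesis strengthened so that $X_i-\E X_i\le 1$, the remainder of your outline (Chernoff bounding, independence, Bennett's pointwise domination of $e^{\lambda y}-1-\lambda y$ by $(e^{\lambda}-1-\lambda)y^2$ on $\{y\le 1\}$) is correct.
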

\begin{lemma}[Theorem 3.16 in \citet{wainwright2019high}]\label{lemma:3.16}
    Let $\mathbb{P}$ be any strongly log-concave distribution with parameter $\gamma>0$. Then for any function $f: \mathbb{R}^n \rightarrow \mathbb{R}$ that is L-Lipschitz with respect to Euclidean norm, we have
\[
\mathbb{P}[|f(X)-\mathbb{E}[f(X)]| \geq t] \leq 2 e^{-\frac{\gamma t^2}{4 L^2}}.
\]
\end{lemma}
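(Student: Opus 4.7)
The plan is to combine the log-Sobolev inequality (LSI) for strongly log-concave measures with Herbst's argument, a standard three-step recipe in the concentration-of-measure theory. First, I would verify that any distribution whose density can be written as $p(x)\propto e^{-\Psi(x)}$ with $\nabla^2\Psi \succeq \gamma I$ obeys the LSI $\mathrm{Ent}_{\mathbb{P}}(g^2)\leq \frac{2}{\gamma}\,\mathbb{E}\|\nabla g\|^2$ for sufficiently smooth $g$. This is the Bakry--\'Emery criterion: introduce the Langevin semigroup generated by $\Psi$, compute the carr\'e-du-champ and iterated carr\'e-du-champ operators $\Gamma,\Gamma_2$, and use the Hessian lower bound to show $\Gamma_2\geq \gamma\Gamma$, from which the LSI follows.

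Next, I would run Herbst's argument. Without loss of generality assume $\mathbb{E} f = 0$, and use standard mollification to reduce to smooth $f$ with $\|\nabla f\|\leq L$ almost everywhere. Applying the LSI to $g(x)=\exp(\lambda f(x)/2)$ and writing $H(\lambda)=\mathbb{E}\,e^{\lambda f}$, a direct computation yields the differential inequality
\[
\frac{d}{d\lambda}\!\left(\frac{\log H(\lambda)}{\lambda}\right)\leq \frac{L^2}{2\gamma},
\]
together with the boundary behavior $\lim_{\lambda\to 0}\log H(\lambda)/\lambda = \mathbb{E} f = 0$. Integrating gives $\log H(\lambda)\leq L^2\lambda^2/(2\gamma)$, so $f-\mathbb{E} f$ is sub-Gaussian with variance proxy at most $L^2/\gamma$.

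Finally, Chernoff's bound $\mathbb{P}[f-\mathbb{E} f\geq t]\leq e^{-\lambda t}H(\lambda)$, optimized at $\lambda=\gamma t/L^2$, produces the tail $\exp(-\gamma t^2/(2L^2))$; symmetrizing (apply the same argument to $-f$ and union-bound) yields a two-sided bound of the claimed form, with the constant $4$ in the denominator of the exponent in the statement absorbing routine slack. The main obstacle is the first step: rigorously deducing the LSI from $\nabla^2\Psi\succeq \gamma I$ requires the semigroup machinery of Bakry--\'Emery (or alternatively a proof via the HWI/Otto--Villani route through optimal transport), and one must carefully justify extending from smooth test functions to general Lipschitz $f$. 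The remaining pieces---Herbst's integration of the differential inequality and the Chernoff optimization---are routine once the LSI is in hand.
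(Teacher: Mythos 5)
This lemma is not proved in the paper at all --- it is imported verbatim as Theorem~3.16 of \citet{wainwright2019high}, so there is no in-paper argument to compare against. Your outline (Bakry--\'Emery criterion to get the log-Sobolev inequality $\mathrm{Ent}_{\mathbb{P}}(g^2)\leq \frac{2}{\gamma}\,\mathbb{E}\|\nabla g\|^2$ from $\nabla^2\Psi\succeq\gamma I$, then Herbst's differential inequality for the moment generating function, then Chernoff plus symmetrization) is exactly the standard proof of that theorem, and it is the route the cited textbook itself follows; your one-sided exponent $\gamma t^2/(2L^2)$ is in fact sharper than the $\gamma t^2/(4L^2)$ asserted here, so the claimed bound follows a fortiori. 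The proposal is correct and consistent with the provenance of the lemma.
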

\begin{lemma}[Lemma 1.12. in \citet{rigollet2023high}]\label{lem:square_subE}
    If $X$ is an $\eta$-sub-Gaussian random variable, then $X^2-\E[X^2]$ is a $16\eta^2$-sub-exponential random variable.
\end{lemma}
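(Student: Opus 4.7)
The plan is to establish the sub-exponential tail bound (equivalently, the Bernstein-form MGF bound) for $Y := X^2 - \E[X^2]$ in the single-parameter convention of \Cref{lem:subE}, where an $\eta$-sub-exponential variable has MGF bounded by $\exp(\eta^2\lambda^2/2)$ on $|\lambda|\le 1/\eta$.

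First I would bound the MGF of $X^2$ using the Gaussian integral identity $\exp(\lambda x^2) = \E_{g\sim\cN(0,1)}[\exp(\sqrt{2\lambda}\,g\,x)]$, valid for $\lambda\ge 0$. Swapping expectations by Fubini and applying the defining sub-Gaussian MGF bound $\E[\exp(sX)] \le \exp(s^2\eta^2/2)$ conditionally on $g$ yields
\begin{equation*}
\E[\exp(\lambda X^2)] \le \E_g\bigl[\exp(\lambda g^2 \eta^2)\bigr] = (1-2\lambda\eta^2)^{-1/2}, \qquad 0\le \lambda < 1/(2\eta^2).
\end{equation*}
For $|\lambda| \le 1/(4\eta^2)$, Taylor-expanding $-\tfrac12\log(1 - 2\lambda\eta^2)$ and subtracting $\lambda\E[X^2]$ produce the Bernstein-form bound $\log\E[\exp(\lambda Y)] \le \lambda(\eta^2 - \E[X^2]) + 2\lambda^2\eta^4$. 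On the negative side $\lambda \le 0$, I would instead exploit the pointwise inequality $\exp(\lambda X^2) \le 1 + \lambda X^2 + \tfrac12\lambda^2 X^4$ (valid since $\lambda X^2 \le 0$), coupled with $\log(1+x)\le x$ and the layer-cake fourth-moment bound $\E[X^4] \le 16\eta^4$ derivable from $\PP(|X|\ge t) \le 2\exp(-t^2/(2\eta^2))$, to obtain $\log\E[\exp(\lambda Y)] \le 8\lambda^2\eta^4$.

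These two regime bounds, combined with Chernoff's inequality optimized over $\lambda$, translate into a two-regime sub-exponential tail bound of the form $\PP(|Y|\ge t) \le 2\exp(-c\min(t^2/\eta^4, t/\eta^2))$ for an absolute constant $c$, from which a conservative choice of constants recovers the $16\eta^2$-sub-exponential bound of \Cref{lem:subE}. The main subtlety is that the residual linear term $\lambda(\eta^2 - \E[X^2])$ on the positive-$\lambda$ side does not vanish, since the Gaussian-trick upper bound is not tight when $\E[X^2] < \eta^2$. However, this linear slack is precisely what the Bernstein form tolerates, and it corresponds exactly to the $t/\eta$ regime in the sub-exponential tail; choosing the constant as $16\eta^2$ is large enough to absorb both the linear and quadratic contributions uniformly over the prescribed range of $\lambda$.
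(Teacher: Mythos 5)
The paper does not prove this statement; it is quoted verbatim from \citet{rigollet2023high}, where the proof expands $\E[e^{\lambda(X^2-\E[X^2])}]$ as a power series in the \emph{centered} variable and controls each term via the sub-Gaussian moment bounds $\E[|X|^k]\lesssim (2\eta^2)^{k/2}k\,\Gamma(k/2)$, which produces a purely quadratic bound on the log-MGF with no linear residue. Your negative-$\lambda$ branch and the layer-cake computation $\E[X^4]\le 16\eta^4$ are correct and are exactly in that spirit.

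The gap is on the positive-$\lambda$ side. The Gaussian-linearization bound $\E[e^{\lambda X^2}]\le(1-2\lambda\eta^2)^{-1/2}$ is fine, but after centering it leaves the term $\lambda(\eta^2-\E[X^2])$, whose coefficient is nonnegative and in general strictly positive (any $X$ that is $\eta$-sub-Gaussian with room to spare, e.g.\ $X=\pm 1/2$ with $\eta=1$). A linear term with a positive coefficient is \emph{not} tolerated by the sub-exponential condition you yourself target, namely $\log\E[e^{\lambda Y}]\le \lambda^2(16\eta^2)^2/2$ on $|\lambda|\le 1/(16\eta^2)$: as $\lambda\to 0^+$ the required right-hand side is $o(\lambda)$ while your upper bound is $\Theta(\lambda)$, so no choice of absolute constant absorbs it. Convexity of the log-MGF with $\psi(0)=\psi'(0)=0$ does not rescue this either, since a linear bound at one endpoint only propagates to a linear bound on the interval. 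Nor does the detour through the two-regime tail bound close the gap: \Cref{lem:subE} is proved from an MGF condition on the summands, and converting a marginal tail bound with an unspecified constant $c$ back into an MGF bound for a centered variable itself proceeds through the moment expansion --- i.e.\ it collapses to the cited proof --- and would not recover the constant $16$ in any case. The fix is to run the series (or quadratic-remainder) expansion directly on $X^2-\E[X^2]$ for $\lambda>0$, mirroring your $\lambda\le 0$ argument but keeping all orders, rather than bounding $\E[e^{\lambda X^2}]$ first and centering afterwards.
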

\begin{lemma}\label{lem:product+subG}
    Suppose $\eta_X$-sub-Gaussian random variable $X$ and $\eta_Y$-sub-Gaussian random variable $Y$ are independent.  Then, $XY$ is a $\sqrt{3}\eta_X\eta_Y$-sub-exponential random variable.
\end{lemma}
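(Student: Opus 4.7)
The plan is to verify the moment generating function (MGF) definition directly: show that $\E[\exp(\lambda XY)] \le \exp\bigl(\lambda^2 (\sqrt{3}\eta_X\eta_Y)^2/2\bigr)$ for all $|\lambda| \le 1/(\sqrt{3}\eta_X\eta_Y)$, which is the defining property of a $\sqrt{3}\eta_X\eta_Y$-sub-exponential variable in the convention used by the paper (cf.\ \Cref{lem:square_subE} and Lemma~1.12 in \citet{rigollet2023high}). Since the target right-hand side is even in $\lambda$, it suffices to bound the raw MGF.

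First I would condition on $Y$ and exploit independence:
\[
\E[e^{\lambda XY}] \;=\; \E_Y\bigl[\E_X[e^{(\lambda Y) X}\mid Y]\bigr] \;\le\; \E_Y\bigl[\exp(\lambda^2 Y^2 \eta_X^2/2)\bigr],
\]
where the inner expectation is controlled by the $\eta_X$-sub-Gaussianity of $X$ applied with effective parameter $\lambda Y$. This reduces the problem to bounding $\E[\exp(sY^2)]$ with $s := \lambda^2\eta_X^2/2$ for $\eta_Y$-sub-Gaussian $Y$.

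Next, I would produce a Gaussian-type bound for the MGF of $Y^2$. Using the sub-Gaussian tail $\PP(|Y|>t) \le 2\exp(-t^2/(2\eta_Y^2))$, a layer-cake integration yields the moment bound $\E[Y^{2k}] \le 2\cdot k!\,(2\eta_Y^2)^k$, and summing the Taylor series delivers a bound of the form $\E[\exp(sY^2)] \le (1-2s\eta_Y^2)^{-1/2}$ (or an equivalent rational expression) valid whenever $2s\eta_Y^2 < 1$. Substituting $s=\lambda^2\eta_X^2/2$ gives $\E[e^{\lambda XY}] \le (1-u)^{-1/2}$, where $u := \lambda^2\eta_X^2\eta_Y^2$.

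The last step is an elementary analytic inequality that converts the rational bound back into a Gaussian-type MGF. The range $|\lambda|\le 1/(\sqrt{3}\eta_X\eta_Y)$ translates exactly to $u\le 1/3$, and on this interval $(1-u)^{-1/2} \le \exp(3u/2)$; this is verified by showing that $g(u):=(1-u)-e^{-3u}$ satisfies $g(0)=0$ and $g'(u)=-1+3e^{-3u}\ge 0$ for $u\le(\ln 3)/3$. Chaining the three estimates yields $\E[e^{\lambda XY}] \le \exp(\lambda^2\cdot 3\eta_X^2\eta_Y^2/2)$, which is the required bound. The main obstacle is obtaining the precise constant $\sqrt{3}$ rather than merely establishing sub-exponentiality; a cruder route via Orlicz-norm bounds such as $\|XY\|_{\psi_1}\le \|X\|_{\psi_2}\|Y\|_{\psi_2}$ would give a weaker absolute constant, while the cheaper analytic estimate $(1-u)^{-1/2}\le e^u$ on $[0,1/2]$ only yields $\sqrt{2}\eta_X\eta_Y$ instead of $\sqrt{3}\eta_X\eta_Y$ after matching against the template $e^{\lambda^2\eta^2/2}$. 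Tightening to $u\le 1/3$ is what delivers exactly the constant claimed.
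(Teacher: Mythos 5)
Your overall architecture is the same as the paper's: condition on $Y$ to reduce to bounding $\E[e^{sY^2}]$ with $s=\lambda^2\eta_X^2/2$, obtain a rational bound in $u:=\lambda^2\eta_X^2\eta_Y^2$, and then check an elementary inequality on $u\in[0,1/3]$. The first and last steps are fine (your verification that $(1-u)^{-1/2}\le e^{3u/2}$ on $[0,1/3]$ is correct). The gap is in the middle step: the route you describe does not produce the bound $(1-2s\eta_Y^2)^{-1/2}$ that your final step needs. Layer-cake integration of the two-sided tail $\PP(|Y|>t)\le 2e^{-t^2/(2\eta_Y^2)}$ gives $\E[Y^{2k}]\le 2\,k!\,(2\eta_Y^2)^k$, and summing the Taylor series (using $\E[Y^0]=1$ for $k=0$) yields only
\[
\E[e^{sY^2}]\;\le\;1+2\sum_{k\ge 1}(2s\eta_Y^2)^k\;=\;\frac{1+u}{1-u},
\]
not $(1-u)^{-1/2}$; the factor $2$ coming from the two-sided tail is not removable by this method. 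This is not an "equivalent rational expression": $\frac{1+u}{1-u}=1+2u+O(u^2)$ exceeds $e^{3u/2}=1+\tfrac{3}{2}u+O(u^2)$ for every small $u>0$, so the concluding inequality fails and the constant $\sqrt{3}$ is not recovered (this route only certifies a parameter of roughly $2\eta_X\eta_Y$). Since you explicitly flag that hitting $\sqrt{3}$ is the point of the exercise, this step is load-bearing.

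There are two standard repairs. One is the Gaussian decoupling identity: with $g\sim\cN(0,1)$ independent of $Y$, write $e^{sY^2}=\E_g[e^{\sqrt{2s}\,gY}]$, swap expectations, and apply the MGF form of $\eta_Y$-sub-Gaussianity to get exactly $\E[e^{sY^2}]\le \E_g[e^{s g^2\eta_Y^2}]=(1-2s\eta_Y^2)^{-1/2}$; your final step then closes the argument with room to spare. The other is the paper's route, which applies the layer-cake formula directly to $e^{sY^2}$ (rather than to the moments), obtaining a bound of the form $e^{\frac{2s\eta_Y^2}{1-2s\eta_Y^2}}=e^{\frac{u}{1-u}}$, and then uses $\frac{u}{1-u}\le\frac{3u}{2}$ for $u\le 1/3$. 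Either fix slots into your outline without changing anything else.
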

\begin{proof}{Proof}
Since $X$ is $\eta_X$-sub-Gaussian and independent of $Y$, it holds that
    \[
    \E e^{\lambda XY}\le \E e^{\frac{\lambda^2\eta_X^2 Y^2}{2}}.
    \]
    We then calculate $\E e^{sY^2}$. It holds that when $0\le s<\frac{1}{2\eta_Y^2}$
$$
        \E e^{sY^2}=\int_{-\infty}^\infty e^{sy^2}f_Y(y)dy=\int_0^\infty \PP(e^{sy^2}>u)du\le \int_1^\infty 2e^{-\frac{\log u}{2s\eta_Y^2}}du=\frac{2}{\frac{1}{2s\eta_Y^2}-1}\le e^{\frac{2s\eta_Y^2}{1-2s\eta_Y^2}},
$$
    where $f_Y(\cdot)$ is the p.d.f. of $Y$. We use \Cref{lem:hoeffding} with $n=1$ in the first inequality. The second inequality comes from the fact that $x\le e^{\frac{1}{2}x}$. 

    Therefore, it holds that
    \[
    \E e^{\lambda XY}\le \E e^{\frac{\lambda^2\eta_X^2 Y^2}{2}}\le e^{\frac{\lambda^2\eta_X^2\eta_Y^2}{1-\lambda^2\eta_X^2\eta_Y^2}}\le e^{\frac{3\lambda^2\eta_X^2\eta_Y^2}{2}}=e^{\frac{1}{2}\lambda^2(\sqrt{3}\eta_X\eta_Y)^2},
    \]
    when $\lambda^2\le\frac{1}{3\eta_X^2\eta_Y^2}$. Hence, $XY$ is a $\sqrt{3}\eta_X\eta_Y$-sub-exponential random variable.
\end{proof}
\begin{lemma}[Lemma 36 in \citet{rakhlin2022mathematical}]\label{lem:lemma36}
    Let $\cG$ be a class of functions with values in $[0,1]$. Then, with probability at least $1-e^{-t}$ for all $g\in \cG$
    \[
    \E g(X)- \frac{2}{n}\sum_{i=1}^n g(X_i)\lesssim\cO(\bar\delta^2)+\cO\left(\frac{t+\log\log n}{n}\right),
    \]
    where $\bar\delta$ is the critical radius for $\cG$.
\end{lemma}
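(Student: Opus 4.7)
The plan is to establish this uniform concentration bound via the classical localized Rademacher complexity machinery: (i) Talagrand/Bousquet concentration on shells of $L_2$ norm, (ii) a peeling argument over geometrically increasing radii, and (iii) absorbing the resulting $\|g\|_2^2$ penalty into $\E g$ using the $[0,1]$-range hypothesis (which gives $\E g^2 \le \E g$).

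First I would set up the peeling. Stratify $\cG$ into shells $\cG_k = \{g \in \cG : 2^{k-1}\bar\delta \le \|g\|_2 \le 2^k\bar\delta\}$ for $k = 1, \dots, K$ with $K \asymp \log(1/\bar\delta)$, together with a core $\cG_0 = \{g \in \cG : \|g\|_2 \le \bar\delta\}$. On each shell I would apply Bousquet's form of Talagrand's inequality to $\sup_{g \in \cG_k}\{\E g - \tfrac{1}{n}\sum_i g(X_i)\}$, noting that for $[0,1]$-valued functions the variance is at most $\|g\|_2^2 \le (2^k\bar\delta)^2$. This yields, with probability at least $1 - e^{-t_k}$,
\[
\sup_{g \in \cG_k}\!\bigl(\E g - \bar g_n\bigr) \;\lesssim\; \E \sup_{g \in \cG_k}\!\bigl(\E g - \bar g_n\bigr) + 2^k\bar\delta\sqrt{\tfrac{t_k}{n}} + \tfrac{t_k}{n}.
\]

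Next I would bound the expected supremum by symmetrization and invoke the defining property of the critical radius: for any $\delta \ge \bar\delta$, the local Rademacher complexity of $\{g \in \cG : \|g\|_2 \le \delta\}$ is at most a constant multiple of $\delta^2$. This gives $\E\sup_{g \in \cG_k}(\E g - \bar g_n) \lesssim (2^k\bar\delta)^2$. The cross term is then tamed by AM--GM: $2^k\bar\delta\sqrt{t_k/n} \le \tfrac{1}{8}(2^k\bar\delta)^2 + C t_k/n$. I would select the per-shell confidence levels $t_k = t + 2\log(k+1)$ so that $\sum_k e^{-t_k} \le e^{-t}$, while the cost $\max_k t_k/n$ scales only as $(t+\log\log n)/n$ because $K \asymp \log(1/\bar\delta) \le \log n$.

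Combining across shells via the union bound, with probability at least $1-e^{-t}$ every $g \in \cG$ satisfies
\[
\E g - \bar g_n \;\le\; c\,\|g\|_2^2 + C\bar\delta^2 + C\tfrac{t + \log\log n}{n}
\]
for small absolute $c$. Since $g(X) \in [0,1]$, we have $\|g\|_2^2 = \E g^2 \le \E g$, so choosing $c \le 1/2$ and rearranging yields $\E g - 2\bar g_n \lesssim \bar\delta^2 + (t+\log\log n)/n$, which is the target. The main obstacle will be the bookkeeping in the peeling step: calibrating $t_k$ so that the variance-adjusted Talagrand bound simultaneously absorbs into a $\tfrac{1}{2}\E g$ term on the left and produces only the $\log\log n$ enlargement of the desired $t/n$ tail, rather than a larger $\log n$ or $\log(1/\bar\delta)$ factor. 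The core shell $\cG_0$ is handled separately with a single Talagrand application at radius $\bar\delta$, contributing no peeling factor.
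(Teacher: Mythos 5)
The paper does not prove this lemma --- it is imported verbatim from the cited lecture notes --- so the only meaningful comparison is with the standard argument behind the citation, which is exactly what you reconstruct: peeling over $L_2$-shells, Bousquet's form of Talagrand's inequality with variance proxy $\|g\|_2^2$, the critical-radius bound on the localized expected supremum, per-shell confidence levels $t_k=t+2\log(k+1)$ producing the $\log\log n$ enlargement, and the absorption $\E g^2\le \E g$ (valid for $[0,1]$-valued $g$) that yields the factor $2$ in front of the empirical mean. In outline this is correct and is the canonical proof of the statement.

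One step, however, needs a sharper tool than the one you invoke. On shell $\cG_k$ you bound the expected supremum by a constant multiple of $(2^k\bar\delta)^2$ and then want to absorb it as $c\,\|g\|_2^2$ with $c\le 1/2$. Since on that shell $\|g\|_2^2\ge 4^{k-1}\bar\delta^2$, this route only delivers $c=4C_0$, where $C_0$ is the universal (but not small) constant coming from symmetrization and the local complexity bound, and there is no reason $4C_0\le 1/2$. The fix is to use the sub-root property in its linear form: for $\delta\ge\bar\delta$ the localized Rademacher complexity is at most $\delta\bar\delta$, not merely $C\delta^2$, so
\[
\E\sup_{g\in\cG_k}\bigl(\E g-\tfrac{1}{n}\textstyle\sum_i g(X_i)\bigr)\;\lesssim\;2^k\bar\delta^2\;\le\;\epsilon\,4^k\bar\delta^2+\tfrac{\bar\delta^2}{4\epsilon}\;\le\;4\epsilon\,\|g\|_2^2+\tfrac{\bar\delta^2}{4\epsilon}
\]
for any $\epsilon>0$; taking $\epsilon$ a small absolute constant drives the absorption coefficient below $1/2$ at the price of a larger, but still $\cO(\bar\delta^2)$, additive term. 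With that substitution --- together with the observation that the peeling may start at $\max\{\bar\delta,n^{-1/2}\}$ so that the number of shells is $\cO(\log n)$ and hence $\max_k t_k\lesssim t+\log\log n$ --- your argument closes.
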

\begin{lemma}[Theorem 2 in \citet{rakhlin2022mathematical}]\label{lem:inner}
    When $\epsilon$ is a sub-Gaussian random variable, it holds that
\[
\mathbb{E} \sup _{f \in \cF:\|f\|_n \leq u}\langle\epsilon, f\rangle_n \lesssim \cO\left(\frac{1}{\sqrt{n}} \int_{0}^u \sqrt{\log N\left(\mathcal{F}, \|\cdot\|_\infty, x\right)} dx\right).
\]
\end{lemma}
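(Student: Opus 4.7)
\textbf{Proof proposal for Lemma \ref{lem:inner}.} The plan is to establish this bound via the classical chaining (Dudley's entropy integral) argument, adapted to the empirical inner product $\langle\cdot,\cdot\rangle_n$ and covering numbers in the $\|\cdot\|_\infty$ metric. First I would build a geometric sequence of covers: set $\epsilon_j = u\,2^{-j}$ for $j=0,1,2,\ldots$, and let $N_j$ be a minimal $\epsilon_j$-cover of $\mathcal{F}$ in $\|\cdot\|_\infty$, of cardinality $N(\mathcal{F},\|\cdot\|_\infty,\epsilon_j)$. For each $f\in\mathcal{F}$ with $\|f\|_n\le u$, let $\pi_j(f)\in N_j$ denote a nearest neighbor in $\|\cdot\|_\infty$; then $\|\pi_{j+1}(f)-\pi_j(f)\|_\infty \le 3\epsilon_{j+1}$, and the telescoping identity
\[
\pi_J(f) - \pi_0(f) \;=\; \sum_{j=0}^{J-1}\bigl(\pi_{j+1}(f) - \pi_j(f)\bigr)
\]
reduces control of the supremum of the process to, at each level $j$, a maximum of $\langle\epsilon,\pi_{j+1}(f)-\pi_j(f)\rangle_n$ over at most $|N_j|\cdot|N_{j+1}|$ distinct pairs, plus a boundary contribution at $j=0$ and a tail contribution as $J\to\infty$.

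Next I would apply a sub-Gaussian maximal inequality at each level. Since the $\epsilon_i$ are i.i.d.\ sub-Gaussian, for any fixed $g\in\mathbb{R}^n$ the variable $\langle\epsilon,g\rangle_n=\tfrac{1}{n}\sum_i\epsilon_i g(x_i)$ is sub-Gaussian with proxy $O(\|g\|_n/\sqrt{n})$ by \Cref{lem:hoeffding}; using $\|g\|_n\le\|g\|_\infty$, each increment is $O(\epsilon_{j+1}/\sqrt{n})$-sub-Gaussian, so the standard maximum-of-sub-Gaussians bound yields
\[
\mathbb{E}\max_{f}\,\langle\epsilon,\pi_{j+1}(f)-\pi_j(f)\rangle_n \;\lesssim\; \frac{\epsilon_{j+1}}{\sqrt{n}}\sqrt{\log N\bigl(\mathcal{F},\|\cdot\|_\infty,\epsilon_{j+1}\bigr)}.
\]
Because $\epsilon_{j+1}=\tfrac{1}{2}(\epsilon_j-\epsilon_{j+1})$ and $x\mapsto\sqrt{\log N(\mathcal{F},\|\cdot\|_\infty,x)}$ is non-increasing, each term is at most a constant times $\tfrac{1}{\sqrt{n}}\int_{\epsilon_{j+2}}^{\epsilon_{j+1}}\sqrt{\log N(\mathcal{F},\|\cdot\|_\infty,x)}\,dx$, and summing the telescoped chain produces the claimed Dudley integral $\tfrac{1}{\sqrt{n}}\int_0^u\sqrt{\log N(\mathcal{F},\|\cdot\|_\infty,x)}\,dx$.

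The main obstacle is handling the mismatch between the $\|\cdot\|_n$-localization $\{f:\|f\|_n\le u\}$ and the $\|\cdot\|_\infty$-covering appearing in the right-hand side: this is what forces me to initialize the chain at scale $\epsilon_0=u$ and to choose $\pi_0(f)$ carefully, since $\|\pi_0(f)\|_\infty$ is not a priori controlled by $u$. The resolution is that any $\|\cdot\|_\infty$-cover is also a $\|\cdot\|_n$-cover with the same radius, so every increment (and the tail $f-\pi_J(f)$) is controlled in the $\|\cdot\|_n$ proxy that the sub-Gaussian bound actually needs; the base term at $j=0$ contributes $O(u\sqrt{\log N(\mathcal{F},\|\cdot\|_\infty,u)}/\sqrt{n})$, which is absorbed into the integral, and the tail vanishes as $J\to\infty$ by dominated convergence using the uniform sub-Gaussian tails of the chain. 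A minor auxiliary point is that one must verify the measurability of the supremum (via separability of $\mathcal{F}$ under $\|\cdot\|_\infty$, guaranteed by finite covering numbers), which causes no real difficulty.
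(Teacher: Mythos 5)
Your chaining argument is correct and is exactly the standard proof of Dudley's entropy integral bound; the paper does not prove this lemma itself but imports it as Theorem 2 of \citet{rakhlin2022mathematical}, whose proof proceeds by the same dyadic chaining you describe (with the $\|\cdot\|_\infty$-cover serving as an $\|\cdot\|_n$-cover, a fact the paper itself notes after Lemma~\ref{lem:squareclass}). The only blemish is the harmless constant-factor slip $\epsilon_{j+1}=\tfrac12(\epsilon_j-\epsilon_{j+1})$ (in fact $\epsilon_{j+1}=\epsilon_j-\epsilon_{j+1}$), which does not affect the $\lesssim$ conclusion.
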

\begin{lemma}[Lemma 37 in \citet{rakhlin2022mathematical}]\label{lem:squareclass}
    For any class $\cF=\{f:\cX\rightarrow[-1,1]\}$ of bounded functions, the critical
radius $\bar\delta$ for the class $\cG=\cF^2$ can be upper bounded by a solution to
\[
\frac{12}{\sqrt{n}}\int_{u/16}^1\sqrt{\log N(\cF,\|\cdot\|_\infty,x/2)}dx\le u/4.
\]
\end{lemma}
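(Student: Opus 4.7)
The plan is to control the critical radius $\bar\delta$ of the squared class $\cG = \cF^2$ by (i) relating $\|\cdot\|_\infty$-covers of $\cG$ to those of $\cF$, and (ii) inserting the resulting entropy estimate into a truncated Dudley-chaining bound on the local Rademacher complexity, so that the critical-radius condition (of the form used in Lemma \ref{lem:lemma36}) reduces exactly to the stated inequality.

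First, I would establish the covering-number reduction. The algebraic identity $f^2 - \tilde f^2 = (f - \tilde f)(f + \tilde f)$, together with $\|f + \tilde f\|_\infty \le 2$ (valid since $f, \tilde f \in [-1,1]$), gives $\|f^2 - \tilde f^2\|_\infty \le 2\|f - \tilde f\|_\infty$. Hence squaring the elements of an $\|\cdot\|_\infty$-cover of $\cF$ at scale $x/2$ yields an $\|\cdot\|_\infty$-cover of $\cG$ at scale $x$, so
\[
N(\cG, \|\cdot\|_\infty, x) \le N(\cF, \|\cdot\|_\infty, x/2).
\]

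Next, I would apply a truncated Dudley-chaining bound to the localized complexity $\E \sup_{g \in \cG,\, \|g\|_n \le u} \langle \epsilon, g \rangle_n$ along the lines of Lemma \ref{lem:inner}. Because $\cG \subseteq [0,1]$ has $\|\cdot\|_\infty$-diameter at most $1$, the chain runs up to $1$; the lower truncation is placed at $u/16$ so that chain links below this scale can be absorbed using the localization $\|g\|_n \le u$ through a variance-type estimate. Substituting the covering-number reduction converts the resulting entropy integral into
\[
\frac{12}{\sqrt{n}} \int_{u/16}^{1} \sqrt{\log N(\cF, \|\cdot\|_\infty, x/2)}\,dx,
\]
and the critical-radius condition then amounts to requiring that this quantity be at most $u/4$, which is exactly the stated inequality; any such $u$ therefore upper-bounds $\bar\delta$.

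The hard part will be pinning down the exact truncation scale $u/16$ together with the numerical constants $12$ and $1/4$. These arise from balancing the $\|\cdot\|_\infty$-based sub-Gaussian increments that drive the chain against the $\|\cdot\|_n$-based variance control from $\|g\|_n \le u$ that tames the small-scale tail of the chain. A minor subtlety worth noting is that localizing $\cG$ in $\|\cdot\|_n$ does not directly localize $\cF$ --- only $\|f\|_n \le \sqrt{u}$ is implied by $\|f^2\|_n \le u$ --- so the argument must be carried out via covers of $\cG$ rather than of $\cF$ itself, which is precisely what Step 1 enables.
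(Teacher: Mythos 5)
The paper does not prove this lemma itself --- it imports it verbatim as Lemma 37 of \citet{rakhlin2022mathematical} --- and your sketch reconstructs exactly the standard argument behind that result: the Lipschitz reduction $\|f^2-\tilde f^2\|_\infty\le 2\|f-\tilde f\|_\infty$ giving $N(\cG,\|\cdot\|_\infty,x)\le N(\cF,\|\cdot\|_\infty,x/2)$, followed by a truncated Dudley bound on the localized process for $\cG$, with your correct observation that the localization must be carried out on $\cG$ directly rather than pulled back to $\cF$. Apart from not pinning down the constants $12$, $1/16$, $1/4$ (which you flag yourself), the proposal is sound and takes essentially the same route as the cited source.
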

Here, We use the fact in \Cref{lem:inner,lem:squareclass} that if $\|f_1-f_2\|_\infty\le \epsilon$, then for any given $x_1,...,x_n$, it holds that $\sqrt{\frac{1}{n}\sum_{i=1}^n (f_1(x_i)-f_2(x_i))^2}\le \epsilon$.

\begin{lemma}[Lemma 2.6 in \citet{tsybakov2004introduction}]\label{lem:KL+ineq}
    For two probability distributions $p$, $q$ over space $(\Omega,\cF)$, it holds that for any $A\in \cF$
    \[p(A)+q(A^c)\ge \frac{1}{2}e^{-KL(p\|q)}.\]
\end{lemma}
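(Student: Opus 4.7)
The plan is to reduce the testing error inequality to a lower bound on the Hellinger affinity $\int \sqrt{pq}\,d\mu$, which is then controlled via Jensen's inequality applied to the log-likelihood ratio. Conceptually this is a three-step chain: from the set-indexed quantity $p(A)+q(A^c)$ down to the pointwise minimum $\int\min(p,q)\,d\mu$, over to the Hellinger affinity, and finally to the KL divergence.

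First, I would fix a common dominating measure $\mu$ (for instance $\mu = p + q$) and let $p, q$ denote the corresponding densities. For any measurable $A$, bound each integrand by the pointwise minimum of the two densities:
\[
p(A) + q(A^c) \;\geq\; \int_A \min(p,q)\,d\mu + \int_{A^c} \min(p,q)\,d\mu \;=\; \int \min(p,q)\,d\mu.
\]
This removes the dependence on $A$, so it suffices to prove $\int \min(p,q)\,d\mu \geq \tfrac{1}{2}e^{-KL(p\|q)}$.

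Second, I would relate $\min(p,q)$ to $\sqrt{pq}$ through Cauchy--Schwarz applied to the factorization $\sqrt{pq} = \sqrt{\min(p,q)}\cdot\sqrt{\max(p,q)}$:
\[
\Bigl(\int \sqrt{pq}\,d\mu\Bigr)^{2} \;\leq\; \int \min(p,q)\,d\mu \;\cdot\; \int \max(p,q)\,d\mu \;\leq\; 2 \int \min(p,q)\,d\mu,
\]
where the last step uses $\max(p,q) \leq p + q$. Rearranging gives $\int \min(p,q)\,d\mu \geq \tfrac{1}{2}\bigl(\int\sqrt{pq}\,d\mu\bigr)^{2}$, so only a lower bound on the Hellinger affinity by $e^{-\frac{1}{2}KL(p\|q)}$ remains.

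Finally, I would invoke Jensen's inequality for the concave function $\log$ under the distribution $p$:
\[
\log \int \sqrt{pq}\,d\mu \;=\; \log \E_p\!\bigl[\sqrt{q/p}\bigr] \;\geq\; \E_p\!\bigl[\tfrac{1}{2}\log(q/p)\bigr] \;=\; -\tfrac{1}{2}\,KL(p\|q).
\]
Exponentiating, squaring, and chaining with the Cauchy--Schwarz bound produces the claimed inequality. The main obstacle is really just bookkeeping: when $KL(p\|q) = \infty$ the inequality is vacuous, and on the set $\{p = 0\}$ one must interpret $q/p$ as $\infty$ (which contributes nothing to $\int \sqrt{pq}\,d\mu$). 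No deeper difficulty arises, since each step is a classical manipulation of densities.
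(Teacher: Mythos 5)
Your proof is correct and is essentially the standard argument from the cited source (Tsybakov, Lemma 2.6, via his Lemmas 2.3--2.5): the paper itself imports the result without proof, and your chain $p(A)+q(A^c)\ge\int\min(p,q)\,d\mu\ge\tfrac12\bigl(\int\sqrt{pq}\,d\mu\bigr)^2\ge\tfrac12 e^{-KL(p\|q)}$ is exactly the textbook derivation, with the edge cases ($KL=\infty$, the set $\{p=0\}$) handled appropriately.
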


\section{Omitted Proof in Section \ref{sec:regret}}
\subsection{Market Propositions}\label{app:market}
We first state some propositions of market cutoff points. 
Since $\epsilon_{Dt}$ is strongly-log-concave, we know it's also sub-Gaussian. We assume that $\epsilon_{Dt}$ is an $\eta_D$-sub-Gaussian random variable.
\begin{proposition}\label{prop:truth}
    Myopically considering the surplus in one round, truthfully behaving as the demand $P_{Dt}$ is the optimal policy for the representative buyer.
\end{proposition}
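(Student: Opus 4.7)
The plan is to reduce the buyer's reporting problem to a one-dimensional optimization over the induced equilibrium quantity, and then verify that the first- and second-order optimality conditions single out the truthful equilibrium.

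First, I would observe that although the buyer's strategy space is the set of reported demand curves $P'_{Dt}$, the single-round surplus
\[
\Sur_t \;=\; \int_{0}^{Q_t^e}\bigl(P_{Dt}(Q,x_t)-P_{St}(Q)-a_t\bigr)\,dQ
\]
depends on the report only through the induced equilibrium quantity $Q_t^e$, since the integrand uses the true demand $P_{Dt}$ (the buyer's actual willingness to pay net of actual payment). Because the buyer can pick any $P'_{Dt}$, and for any target $Q\ge 0$ one can construct a reported curve that intersects $P_{St}+a_t$ precisely at $Q$, the feasible set of induced equilibria is all of $\mathbb{R}_{\ge 0}$. Thus myopic optimality reduces to choosing $Q\in\mathbb{R}_{\ge 0}$ to maximize $\Phi(Q):=\int_{0}^{Q}\bigl(P_{Dt}(Q',x_t)-P_{St}(Q')-a_t\bigr)dQ'$.

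Next, I would compute $\Phi'(Q)=P_{Dt}(Q,x_t)-P_{St}(Q)-a_t$ and $\Phi''(Q)=\beta-\alpha_1<0$ (using the standing assumption $\beta<0\le\alpha_1$). Hence $\Phi$ is strictly concave, so any interior critical point is the unique global maximizer. Setting $\Phi'(Q^\star)=0$ yields $P_{Dt}(Q^\star,x_t)=P_{St}(Q^\star)+a_t$, which is exactly the market-clearing condition \eqref{eq:equ} under truthful reporting $P'_{Dt}=P_{Dt}$. Therefore the truthful equilibrium quantity $Q^\star$ maximizes $\Phi$, so reporting the true demand is myopically optimal.

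The only subtlety I foresee is the non-negativity constraint on $Q$: if $\Phi'(0)<0$ the unconstrained optimum $Q^\star$ lies at the boundary $Q=0$, and I would handle this by appealing to the model's convention that negative prices and quantities are truncated to zero (see \Cref{app:market}), so the argument above still identifies a maximum over the feasible region. Because $\Phi$ is strictly concave, no other reported demand curve can strictly increase the buyer's single-round surplus, completing the proof.
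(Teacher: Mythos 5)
Your proposal is correct and follows essentially the same route as the paper: reduce the choice of report to the choice of induced equilibrium quantity, then note that the first-order condition $P_{Dt}(Q,x_t)-P_{St}(Q)-a_t=0$ is exactly the truthful market-clearing condition \eqref{eq:equ}. The paper's proof is just a terser version of this argument; your added verification of strict concavity (via $\beta-\alpha_1<0$) and the boundary case $Q=0$ only makes the same reasoning more explicit.
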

\begin{proof}{Proof}
    Since $\Sur_t=\int_0^{Q_t^e}(P_{Dt}(Q,x_t)-P_{St}(Q)-a_t)dQ$ and $a_t$ is predetermined, we know the optimal $Q_t^e$ satisfies $P_{Dt}(Q_t^e,x_t)-P_{St}(Q_t^e)-a_t=0$. From \Cref{eq:equ}, we know that $P_{Dt}'=P_{Dt}$ maximizes the surplus, showing truthful behavior is the optimal policy. We consider continuous $Q$ in this paper. For a discrete $Q$, the integral can be replaced by a summation without changing any core conclusion.
\end{proof}

\begin{proposition}\label{prop:equ}
    For any misreported demand $P_{Dt}'$ leading to cutoff price and quantity $P_t^e$ and $Q_t^e$, it's equivalent to assume 
    \[
    P_{Dt}'=\beta Q+f(x_t)+\epsilon_{Dt}+\xi_{Dt},
    \]
    where $\xi_{Dt}=P_t^e-\beta Q_t^e-f(x_t)-\epsilon_{Dt}$.
\end{proposition}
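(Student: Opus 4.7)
The plan is to argue that, from the platform's vantage point, the entire effect of the buyer's misreporting is captured by how the reported demand curve passes through the single observed equilibrium point $(Q_t^e, P_t^e)$. Since the platform only records the equilibrium pair, two demand curves that agree at $Q_t^e$ are observationally equivalent; in particular, the specific global shape of the misreported curve $P_{Dt}'$ is irrelevant to the platform's learning problem, and we may replace it by any convenient curve that still passes through $(Q_t^e, P_t^e)$.

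Concretely, I would define $\xi_{Dt} := P_t^e - \beta Q_t^e - f(x_t) - \epsilon_{Dt}$, which is well-defined once the equilibrium $(P_t^e, Q_t^e)$ has been realized. I then form the surrogate demand curve
\[
\tilde P_{Dt}(Q, x_t) := \beta Q + f(x_t) + \epsilon_{Dt} + \xi_{Dt}
\]
and verify the two checks needed for equivalence. First, evaluating at $Q = Q_t^e$ gives $\tilde P_{Dt}(Q_t^e, x_t) = P_t^e$ by the definition of $\xi_{Dt}$. Second, combining this with the equilibrium condition \eqref{eq:equ}, namely $P_{St}(Q_t^e) + a_t = P_t^e = P_{Dt}'(Q_t^e, x_t)$, shows that $\tilde P_{Dt}$ produces exactly the same market-clearing pair $(P_t^e, Q_t^e)$ as $P_{Dt}'$ does against the given supply curve and service fee $a_t$.

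Hence replacing $P_{Dt}'$ by $\tilde P_{Dt}$ leaves every observable to the platform --- namely $(\alpha_0 + \epsilon_{St}, \alpha_1, x_t, P_t^e, Q_t^e)$ --- unchanged, which is the sense in which the two representations are equivalent. The residual $\xi_{Dt}$ absorbs the deviation between the true demand $\beta Q + f(x_t) + \epsilon_{Dt}$ and the reported demand at the equilibrium quantity, thereby isolating the strategic perturbation as a single additive term.

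I do not foresee a real technical obstacle here; the proposition is essentially a definitional rewrite. The only subtlety worth flagging explicitly in the proof is that the equivalence is only with respect to the platform's observation set: nothing in the argument constrains $\xi_{Dt}$ off-equilibrium, because the platform never sees $P_{Dt}'(Q, x_t)$ for $Q \ne Q_t^e$. This is the right level of equivalence for the downstream analysis, since later regret bounds and IV-moment conditions only reference the equilibrium quantities and the additive perturbation $\xi_{Dt}$.
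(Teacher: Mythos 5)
Your argument is correct and follows the same route as the paper's proof: since the platform only observes the equilibrium pair $(P_t^e, Q_t^e)$, one checks that the surrogate curve with the stated $\xi_{Dt}$ reproduces that same equilibrium, which is immediate from \Cref{eq:equ}. Your write-up is simply a more explicit version of the paper's one-line verification, and your closing remark that the equivalence is only with respect to the platform's observables matches the intended reading.
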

\begin{proof}{Proof}
    Since the platform can only observe the market cutoff point $(P_t^e,Q_t^e)$, we only need to prove that such $\xi_{Dt}$ will yield the needed cutoff. Recall that other transactions provide no extra beneficial information to the platform.
    We obtain the expression of $\xi_{Dt}$ from \Cref{eq:equ} directly.
\end{proof}
We then bound the magnitude of $\xi_{Di}$ in the following proposition.
\begin{proposition}\label{prop:bound_strategy}
    Assume that the platform only updates the algorithm for setting the service fee after $s$ rounds, it holds that
    \[
    |\xi_{Dt}|\le \sqrt{\frac{\gamma (\alpha_1-\beta)(B^2+\eta_D^2)}{(1-\gamma)|\beta|}}.
    \]
\end{proposition}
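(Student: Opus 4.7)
}

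The plan is to exploit the buyer's incentive-compatibility: any nonzero $\xi_{Dt}$ must have a deterministic current-round cost that is dominated by the expected discounted future surplus gain; inverting this inequality will give the bound. First, I would quantify the \emph{current-round loss}. By \Cref{prop:truth}, setting the equilibrium at the truthful quantity $Q_t^{\ast}$ is myopically optimal, and the surplus $S(Q)=\int_{0}^{Q}(P_{Dt}(r,x_t)-P_{St}(r)-a_t)\,dr$ is a concave quadratic in $Q$ with leading coefficient $(\beta-\alpha_1)/2$. Misreporting as in \Cref{prop:equ} shifts the realized equilibrium to $Q_t^{\ast}+\xi_{Dt}/(\alpha_1-\beta)$, and a direct Taylor expansion around $Q_t^{\ast}$ (where the linear term vanishes) gives the identity
\[
S(Q_t^{\ast})-S(Q_t^e)\;=\;\frac{\xi_{Dt}^{2}}{2(\alpha_1-\beta)}.
\]

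Next, I would upper bound the \emph{maximum attainable per-round surplus}. Using non-negativity of $P_{S\tau}$ and $a_\tau$ together with $\beta<0$, the per-round surplus is dominated by the one obtained when supply and fee are both zero, which yields the equilibrium $Q=(f(x_\tau)+\epsilon_{D\tau})_+/|\beta|$ and the closed-form bound
\[
S_\tau\;\le\;\frac{(f(x_\tau)+\epsilon_{D\tau})_+^{2}}{2|\beta|}.
\]
Taking expectations and using $\|f\|_\infty\le B$, $\E[\epsilon_{D\tau}]=0$, and $\Var(\epsilon_{D\tau})\le \eta_D^{2}$ (from sub-Gaussianity of the strongly log-concave noise), I obtain $\E[S_\tau]\le (B^{2}+\eta_D^{2})/(2|\beta|)$.

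For the \emph{future-gain aggregation}, the key observation provided by the ``$s$-round freeze'' is that the buyer's deviation at time $t$ cannot influence any service fee within the current window, so every affected round lies strictly in the future. Since the truthful surplus in every round is at least zero (the buyer can always set $Q=0$), the incremental gain per future round is at most $S_\tau$, so the total discounted expected gain is bounded by
\[
\frac{\gamma}{1-\gamma}\cdot\frac{B^{2}+\eta_D^{2}}{2|\beta|}.
\]
Combining this with the rationality inequality
$\frac{\xi_{Dt}^{2}}{2(\alpha_1-\beta)}\le \frac{\gamma(B^{2}+\eta_D^{2})}{2(1-\gamma)|\beta|}$
and solving for $|\xi_{Dt}|$ yields the claimed bound.

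The main obstacle will be Step~2: rigorously justifying the maximum-surplus bound \emph{uniformly} over all counterfactual $(P_{S\tau},a_\tau)$ generated by the algorithm, while still exploiting the non-negativity constraints and the fact that $\epsilon_{D\tau}$ enters quadratically (so that $\Var$, not a deviation tail, is what appears in the bound). A subtler point is that we need to show that comparing the buyer's deviation to the truthful benchmark, rather than to some alternative strategic path, does not lose anything: this follows because if $\xi_{Dt}$ is an optimal strategic choice then replacing it by $0$ (truthful) is a feasible alternative, so optimality certifies the one-shot deviation inequality used above.
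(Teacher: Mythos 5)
Your proposal is correct and follows essentially the same route as the paper: it balances the immediate quadratic surplus loss $\xi_{Dt}^2/(2(\alpha_1-\beta))$ against the discounted future surplus gain, bounded by the maximal per-round expected surplus $(B^2+\eta_D^2)/(2|\beta|)$ summed with discount factor $\gamma/(1-\gamma)$ (the paper keeps the slightly sharper $\gamma^s/(1-\gamma)$ before invoking $s\ge 1$). The only addition is your explicit one-shot-deviation justification for comparing against the truthful benchmark, which the paper leaves implicit.
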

\begin{proof}{Proof}
We know that $f\in \cF$ has $|f(x_t)|\le B$. From \Cref{prop:truth}, it holds that the expected surplus at each round is at most
\[
\Sur_t\le \E[\frac{1}{2|\beta|}(B+\epsilon_{Dt})^2]\le \frac{1}{2|\beta|}(B^2+\eta_D^2).
\]
The last inequality comes from the fact that $\epsilon_{Dt}$ is $\eta_D$-sub-Gaussian.

Since the platform only updates its algorithm for setting service fee after $s$ rounds, the upper bound of future cumulative surplus influenced by the strategic behavior at time $t$ is at most
\[
\sum_{\tau=t+s}^T \gamma^{\tau-t} \frac{1}{2|\beta|}(B^2+\eta_D^2)\le \frac{(B^2+\eta_D^2)\gamma^s}{2(1-\gamma)|\beta|}.
\]
The immediate surplus loss related to $\xi_{Dt}$ is 
\[
\Delta \Sur_t=-\frac{1}{2}(\Delta Q_t^e)^2(\alpha_1-\beta)=-\frac{1}{2(\alpha_1-\beta)}\xi_{Dt}^2.
\]
Therefore, we need 
\[
\frac{(B^2+\eta_D^2)\gamma^s}{2(1-\gamma)|\beta|}-\frac{1}{2(\alpha_1-\beta)}\xi_{Dt}^2\ge 0,
\]
yielding $|\xi_{Dt}|\le \sqrt{\frac{\gamma (\alpha_1-\beta)(B^2+\eta_D^2)}{(1-\gamma)|\beta|}}\lesssim \cO(1)$ as $s\ge 1$.
\end{proof}
\Cref{prop:bound_strategy} provides us with an upper bound on misreporting. Combined with \Cref{prop:truth}, we know that the buyer will approximately behave according to the true demand.

We now bound the magnitude of the cutoff price and quantity. We call a condition a success event if all previous high-probability inequalities hold.
\begin{proposition}
    Under the success event, with probability at least $1-\delta$, it holds that the cutoff prices and quantities are all smaller than $\cO(\sqrt{\log(\frac{T}{\delta})}+\frac{1}{\sqrt{1-\gamma}})$. So, we can simply choose upper bounds of $P_t^e$ and $Q_t^e$ as $\bar P$ and $\bar Q$ scaling as $\cO(\sqrt{\log(\frac{T}{\delta})}+\frac{1}{\sqrt{1-\gamma}})$.
\end{proposition}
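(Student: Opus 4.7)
The plan is to solve the equilibrium condition in closed form and then control each of the resulting summands with either a deterministic bound or an elementary concentration argument. Using \Cref{prop:equ}, the realized demand can be written as $P'_{Dt}=\beta Q+f(x_t)+\epsilon_{Dt}+\xi_{Dt}$, so \Cref{eq:equ} becomes
\begin{equation*}
\alpha_0+\alpha_1 Q_t^e+\epsilon_{St}+a_t \;=\; \beta Q_t^e+f(x_t)+\epsilon_{Dt}+\xi_{Dt}.
\end{equation*}
Since $\alpha_1-\beta>0$ by assumption, we may solve directly:
\begin{equation*}
Q_t^e \;=\; \frac{f(x_t)+\epsilon_{Dt}+\xi_{Dt}-\alpha_0-\epsilon_{St}-a_t}{\alpha_1-\beta}.
\end{equation*}

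Next I would bound each numerator term. The feature-dependent part is deterministically bounded by $|f(x_t)|\le B=\mathcal{O}(1)$. For the two noises, I would apply \Cref{lem:hoeffding} together with a union bound across $t=1,\dots,T$ and across the demand/supply noises, which yields $\max_t\{|\epsilon_{Dt}|,|\epsilon_{St}|\}\lesssim\mathcal{O}(\sqrt{\log(T/\delta)})$ with probability at least $1-\delta$. The strategic distortion $\xi_{Dt}$ is handled by \Cref{prop:bound_strategy}, giving $|\xi_{Dt}|\lesssim\mathcal{O}(1/\sqrt{1-\gamma})$. The constant $\alpha_0$ contributes $\mathcal{O}(1)$. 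Finally, the service fee $a_t=\hat a^*+\epsilon\cdot\mathbf{1}\{\cdot\}$ from \Cref{alg:act} is bounded because, on the success event, the estimates $\hat\alpha_0,\hat\beta,\hat f$ are $\mathcal{O}(1)$-close to the truth, and the oracle optimal fee $a_t^*=(f(x_t)-\alpha_0)/2$ is $\mathcal{O}(1)$; the injected Gaussian noise has variance at most $1$, so it only contributes another $\mathcal{O}(\sqrt{\log(T/\delta)})$ after a union bound.

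Assembling these ingredients and dividing by the $\mathcal{O}(1)$ gap $\alpha_1-\beta$ yields $|Q_t^e|\lesssim\mathcal{O}\bigl(\sqrt{\log(T/\delta)}+1/\sqrt{1-\gamma}\bigr)$ uniformly in $t$. For the price, substituting back into $P_t^e=\alpha_0+\alpha_1 Q_t^e+\epsilon_{St}+a_t$ gives the same order, since each summand has already been controlled. Hence we may take both $\bar P$ and $\bar Q$ of order $\mathcal{O}(\sqrt{\log(T/\delta)}+1/\sqrt{1-\gamma})$, as claimed.

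The only delicate step is bounding the service fee $a_t$: one must verify that the closed-form solution of the optimization in \Cref{alg:act} depends in a Lipschitz manner on the plug-in estimates $(\hat\alpha_0,\hat\beta,\hat f)$ so that $\mathcal{O}(1)$ estimation error translates into $\mathcal{O}(1)$ fee error. Given that $\hat a^*$ has the explicit form $(\hat f(x_t)-\hat\alpha_0)/2$ analogous to the oracle fee, this propagation is linear and causes no complication; the remainder of the argument is just a union bound of sub-Gaussian tails, which is routine.
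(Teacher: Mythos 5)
Your argument is correct in substance, but it takes a more computational route than the paper. You solve the equilibrium in closed form (as in \Cref{prop:equ-init}) and then bound every term in the numerator --- $f$, $\epsilon_{Dt}$, $\epsilon_{St}$, $\xi_{Dt}$, $\alpha_0$, and the service fee $a_t$. The paper instead exploits a one-sided, demand-side-only inequality: since $\beta<0$ and $Q_t^e\ge 0$, the demand relation $P_t^e=\beta Q_t^e+f(x_t)+\epsilon_{Dt}+\xi_{Dt}$ immediately gives $P_t^e\le f(x_t)+\epsilon_{Dt}+\xi_{Dt}$, and then $Q_t^e\lesssim (f(x_t)+\epsilon_{Dt}+\xi_{Dt})/|\beta|$ using non-negativity of the price; after a sub-Gaussian tail bound on $\epsilon_{Dt}$ at level $\delta/(2T)$ and \Cref{prop:bound_strategy} for $\xi_{Dt}$, the claim follows. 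The payoff of the paper's route is that it never needs to control $a_t$, $\epsilon_{St}$, or the quality of $(\hat\alpha_0,\hat\beta,\hat f)$ at all, so the bound holds for \emph{any} non-negative service fee the algorithm might output. Your route, by contrast, hinges on the step you yourself flag as delicate: bounding $\hat a^*=\argmax_{a\ge 0} a\,\hat Q^e(a)$. This is fine when $\alpha_1-\hat\beta>0$ (the argmax is $(\hat f(x_t)-\alpha_0-\epsilon_{St})/2\vee 0$, bounded by $B$ plus the supply-noise tail), but at initialization $\hat\beta=0$ and $\alpha_1$ may be $0$, and more generally if $\hat\beta\ge\alpha_1$ the objective $a\,\hat Q^e(a)$ is unbounded above and the argmax is not finite; you would need to explicitly rule out or handle this degenerate regime (e.g., by truncating $\hat\beta$ below zero, as the experiments do) before your decomposition closes. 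Also, minor point: per \Cref{prop:opta} the oracle fee is $(f(x_t)-\alpha_0-\epsilon_{St})/2$, not $(f(x_t)-\alpha_0)/2$, though on the success event this only adds another $\cO(\sqrt{\log(T/\delta)})$ term and does not change the order. With these repairs your proof goes through and yields the same bound.
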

\begin{proof}{Proof}
        We know the price $P_t^e$ is smaller than $f(x_t)+\epsilon_{Dt}+\xi_{Dt}$ as $\beta<0$. Meanwhile, we know that $Q_t^e\le \frac{P_t^e}{|\beta|}$. Since $\epsilon_{Dt}$ is $\eta_D$-sub-Gaussian, it holds that $\PP(\epsilon_{Dt}\ge x)\le e^{-x^2/(2\eta_D^2)}$. Setting this probability to be $\frac{\delta}{2T}$, we know that $ P_t^e\lesssim \cO(B+\sqrt{2\eta_D^2\log(\frac{2T}{\delta})}+\sqrt{\frac{\gamma (\alpha_1-\beta)(B^2+\eta_D^2)}{(1-\gamma)|\beta|}})\lesssim\cO(\sqrt{\log(\frac{T}{\delta})}+\frac{1}{\sqrt{1-\gamma}})$ since $f$ is bounded by $B$ and $|\xi_{Dt}|$ is bounded in \Cref{prop:bound_strategy}.  Similarly, we know that $ Q_t^e\lesssim\cO(\sqrt{\log(\frac{T}{\delta})}+\frac{1}{\sqrt{1-\gamma}})$ as well.
        So, we can choose $\bar P$ and $\bar Q$ scaling as $\cO(\sqrt{\log(\frac{T}{\delta})}+\frac{1}{\sqrt{1-\gamma}})$.
\end{proof}
Besides, we can presume all quantities are non-negative. In practice, when calculated quantities are negative, the market fails and they will be truncated to zero. Since $\max\{0,\cdot\}$ is 1-Lipschitz, such truncation won't affect the validity of auxiliary inequalities.
Finally, we give the closed-form expression of the cutoff and optimal service fee.
\begin{proposition}\label{prop:equ-init}
    For any given $a_t$, the cutoff price and quantity are 
    \[
    P_t^e=\frac{\alpha_1(f(x_t)+\epsilon_{Dt}+\xi_{Dt})-\beta(\alpha+\epsilon_{St}-a_t)}{\alpha_1-\beta}
    ,\
    Q_t^e=\frac{f(x_t)+\epsilon_{Dt}+\xi_{Dt}-\alpha_0-\epsilon_{St}-a_t}{\alpha_1-\beta},
    \]
    respectively.
\end{proposition}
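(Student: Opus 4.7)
The plan is to treat this as a direct solution of the two-equation market equilibrium system, since Proposition \ref{prop:equ} already reduced any (possibly misreported) realized demand to the additive linear form $P_{Dt}'(Q,x_t) = \beta Q + f(x_t) + \epsilon_{Dt} + \xi_{Dt}$. With supply $P_{St}(Q) = \alpha_0 + \alpha_1 Q + \epsilon_{St}$, the equilibrium condition \eqref{eq:equ} gives a single linear equation in $Q_t^e$, so the result is just a matter of isolating $Q_t^e$ and substituting back.

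First, I would substitute both curves into the equilibrium identity
\[
\alpha_0 + \alpha_1 Q_t^e + \epsilon_{St} + a_t \;=\; \beta Q_t^e + f(x_t) + \epsilon_{Dt} + \xi_{Dt},
\]
collect the $Q_t^e$ terms on the left, and divide by $\alpha_1-\beta$. Since the paper assumes $\alpha_1 \ge 0 > \beta$, the denominator $\alpha_1-\beta$ is strictly positive, so division is valid and the solution for $Q_t^e$ is unique; this yields the stated expression
\[
Q_t^e \;=\; \frac{f(x_t) + \epsilon_{Dt} + \xi_{Dt} - \alpha_0 - \epsilon_{St} - a_t}{\alpha_1-\beta}.
\]

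Next, I would plug this $Q_t^e$ back into either side of the equilibrium identity to obtain $P_t^e$. The cleanest route is to use the demand side, writing $P_t^e = \beta Q_t^e + f(x_t) + \epsilon_{Dt} + \xi_{Dt}$, placing both terms over the common denominator $\alpha_1-\beta$, and grouping coefficients of $f(x_t)+\epsilon_{Dt}+\xi_{Dt}$ (which produces $\alpha_1$) against those of $\alpha_0+\epsilon_{St}+a_t$ (which produces $-\beta$). One should cross-check by substituting into the supply side $P_{St}(Q_t^e) + a_t$ as well; both routes must give the same expression, which serves as a sanity check.

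Honestly, this proposition is essentially a bookkeeping exercise, so I do not expect any substantive obstacle. The only subtle points to be careful about are (i) making sure the sign convention for $\xi_{Dt}$ from Proposition \ref{prop:equ} is used consistently, so that the misreporting perturbation enters the demand side exactly as $+\xi_{Dt}$, and (ii) verifying that $\alpha_1-\beta>0$ so that the linear system has a well-defined unique solution; both are already guaranteed by the model assumptions, so no extra work is required.
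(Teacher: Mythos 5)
Your proposal is correct and is essentially identical to the paper's (one-line) proof: substitute the supply curve and the misreported demand form from Proposition \ref{prop:equ} into the equilibrium condition \eqref{eq:equ}, solve the resulting linear equation for $Q_t^e$ using $\alpha_1-\beta>0$, and back-substitute for $P_t^e$. Note that your (correct) derivation yields $P_t^e=\frac{\alpha_1(f(x_t)+\epsilon_{Dt}+\xi_{Dt})-\beta(\alpha_0+\epsilon_{St}+a_t)}{\alpha_1-\beta}$, which reveals that the displayed formula in the proposition contains a typo ($\alpha$ for $\alpha_0$ and a flipped sign on $a_t$); your cross-check via the supply side $P_{St}(Q_t^e)+a_t$ confirms this.
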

\begin{proof}{Proof}
    Combining \Cref{eq:equ} and \Cref{prop:equ}, we derive the closed-form expression of cutoff $(P_t^e, Q_t^e)$ directly.
\end{proof}
\begin{proposition}\label{prop:opta}
    The optimal service fee given any feature $x_t$ is
    \[
    a_t^*=\frac{f(x_t)-\alpha_0-\epsilon_{St}}{2}.
    \]
\end{proposition}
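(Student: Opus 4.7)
The plan is to argue directly from the closed-form equilibrium derived in the previous proposition and reduce the problem to maximizing a concave quadratic in $a_t$. Recall that $a_t^*$ is defined as the clairvoyant maximizer, so strategic misreporting is absent and we may set $\xi_{Dt}=0$ when computing $a_t^*$. Moreover, the platform picks $a_t$ after observing the supply curve, so $\alpha_0+\epsilon_{St}$ is measurable with respect to the information available at the time of the decision, while $\epsilon_{Dt}$ is the only remaining randomness.

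Substituting Proposition \ref{prop:equ-init} into the revenue $a_t\cdot Q_t^e$ and taking the conditional expectation given $(x_t,\alpha_0+\epsilon_{St})$ yields
\[
\E[a_t\cdot Q_t^e \given x_t,\epsilon_{St}] \;=\; \frac{a_t\bigl(f(x_t)-\alpha_0-\epsilon_{St}-a_t\bigr)}{\alpha_1-\beta},
\]
since $\E[\epsilon_{Dt}]=0$. Because $\alpha_1-\beta>0$ by the market-feasibility assumption $\alpha_1\ge 0>\beta$, the map $a_t\mapsto a_t(f(x_t)-\alpha_0-\epsilon_{St}-a_t)$ is a concave quadratic in $a_t$ with negative leading coefficient, so the unique unconstrained maximizer is found by setting the derivative to zero, giving $a_t^*=(f(x_t)-\alpha_0-\epsilon_{St})/2$.

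The only potential subtlety is the constraint $a_t\ge 0$. When $f(x_t)-\alpha_0-\epsilon_{St}\ge 0$, the unconstrained optimum is feasible and matches the stated formula. In the complementary case, the revenue is non-positive for all $a_t\ge 0$, so the market simply shuts down and the quantity is truncated to zero as noted in \Cref{app:market}; the closed form still describes the interior stationary point and is the expression used throughout the algorithm design. Hence \Cref{alg:act} is justified in computing $\hat a^*$ via $\argmax_{a\ge 0} a\cdot \hat Q^e(a)$ using this formula with $(\alpha_0,f)$ replaced by the estimates $(\hat\alpha_0,\hat f)$. The argument is essentially a one-line first-order condition, so there is no real obstacle; the main care point is only to confirm that the relevant expectation depends on $a_t$ through the clean quadratic displayed above, which follows immediately from \Cref{prop:equ-init}.
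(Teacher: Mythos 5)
Your proposal is correct and follows essentially the same route as the paper's proof: substitute the closed-form equilibrium quantity from \Cref{prop:equ-init} with $\xi_{Dt}=0$, take the expectation over $\epsilon_{Dt}$, and maximize the resulting concave quadratic in $a_t$ via the first-order condition. The extra remarks on the sign of $\alpha_1-\beta$ and the nonnegativity constraint are sensible additions but do not change the argument.
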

\begin{proof}{Proof}
    Note that we define the optimal service fee with truthful report $P_{Dt}$ so $\xi_{Dt}=0$. Then, $\E[a_t\cdot Q_t^e]=\frac{(f(x_t)-\alpha_0-\epsilon_{St}-a_t)a_t}{\alpha_1-\beta}$, where the expectation is taken over $\epsilon_{Dt}$. Thus,
    \[
    a_t^*\leftarrow\argmin_a \frac{(f(x_t)-\alpha_0-\epsilon_{St}-a)a}{\alpha_1-\beta}=\frac{f(x_t)-\alpha_0-\epsilon_{St}}{2},
    \]
    which ends the proof. Consequently, for any $a_t$, it holds that the one-round regret is 
    \[
    \E[ a_t^*\cdot Q_t^e(P_{St},P_{Dt},a_t^*)-a_t\cdot Q_t^e(P_{St},P'_{Dt},a_t)]=\frac{1}{\alpha_1-\beta}(a_t^*-a_t)^2-\frac{\xi_{Dt} a_t}{\alpha_1-\beta}.
    \]
\end{proof}

\subsection{Omitted Proof in Section \ref{sec:upper+constant}}
\subsubsection{Lemmas Supporting Theorem \ref{thm:partially}}
After the $m$-th episode, there are $n=2^m$ tuples in the dataset $\cD$, namely $|\cD|=n$. We first study the accuracy of $\hat\beta$ and $\hat f$.
\begin{lemma}\label{lem:beta_noise}
    Let $\hat\beta$ be the estimate of $\beta$ after the $m$-th episode. Under the success event and \Cref{ass:context}, it holds that with probability at least $1-8\delta$,
    \[
    |\hat \beta-\beta|\lesssim\cO\left(\frac{\sqrt{\eta_S^2\log\frac{\log T}{\delta}}}{\sigma_S^2\sqrt{n}}+\frac{\sqrt{\eta_S^2\log \frac{T}{\delta}}}{(1-\gamma)\sigma_S^2n}\right),
    \]
    whenever $2^m\gtrsim\Omega(\frac{\eta_S^2\log\frac{T}{\delta}\log\frac{\log T}{\delta}}{\sigma_S^4}+\frac{\eta_S^2\log\frac{\log T}{\delta}}{(1-\gamma)\sigma_S^4}+\frac{\eta_S\sqrt{\log\frac{T}{\delta}}}{(1-\gamma)\sigma_S^2})$.
\end{lemma}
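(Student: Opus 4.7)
The plan is to first unpack what the ``instrument'' $W_i:=\hat f(x_i)-2a_i-\hat\alpha_0$ actually is. In the $\sigma_S>0$ branch, \texttt{Act} outputs $a_i=\hat a_i^\ast=(\hat f(x_i)-\alpha_0-\epsilon_{Si})/2$ without artificial noise, and $\hat\alpha_0=\alpha_0+\bar\epsilon_S$ with $\bar\epsilon_S=n^{-1}\sum_j\epsilon_{Sj}$, so algebraic simplification gives $W_i=\epsilon_{Si}-\bar\epsilon_S$, which is mean--zero and satisfies $\sum_i W_i=0$. Thus the effective instrument is the centered supply noise, and $\hat f$ (estimated from the previous episode) is independent of $(\epsilon_{Si},\epsilon_{Di})_{i\le n}$ within the current episode, which is the hook that makes concentration arguments clean.

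Using $P_i^e-\beta Q_i^e=f(x_i)+\epsilon_{Di}+\xi_{Di}$ from \Cref{prop:equ}, I would write
\[
\hat\beta-\beta=\frac{\sum_i[(f-\hat f)(x_i)+\epsilon_{Di}+\xi_{Di}]W_i}{\sum_i Q_i^e W_i},
\]
and analyze the denominator and the three numerator pieces separately. For the denominator, substitute the closed form of $Q_i^e$ from \Cref{prop:equ-init}; the dominant contribution is $-\tfrac{1}{2(\alpha_1-\beta)}\sum_i\epsilon_{Si}(\epsilon_{Si}-\bar\epsilon_S)$, which concentrates around $-n\sigma_S^2/(2(\alpha_1-\beta))$ via sub--exponential concentration on $\epsilon_{Si}^2$ using \Cref{lem:square_subE,lem:subE}. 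The remaining cross terms $u_iW_i$, $\epsilon_{Di}W_i$, $\xi_{Di}W_i$ (with $u_i$ the $\epsilon_{Si}$--independent part of $Q_i^e$) are controlled by \Cref{lem:hoeffding,lem:product+subG,lem:subE}, and the sample--size condition on $2^m$ is precisely what is needed to make each of them a lower--order term, so that the denominator remains $\Theta(n\sigma_S^2/(\alpha_1-\beta))$.

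For the numerator I would bound three terms. (i)~$\sum_i(f-\hat f)(x_i)W_i$: condition on $\hat f$ and apply \Cref{lem:hoeffding} to obtain $\|f-\hat f\|_n\,\eta_S\sqrt{n\log(1/\delta)}$, where $\|f-\hat f\|_n$ is controlled by the ERM analysis of the previous episode. (ii)~$\sum_i\epsilon_{Di}W_i$: since $\epsilon_{Di}\indep\epsilon_{Si}$, \Cref{lem:product+subG} makes each summand $O(\eta_D\eta_S)$--sub--exponential, and \Cref{lem:subE} yields an $O(\eta_D\eta_S\sqrt{n\log(1/\delta)})$ bound. (iii)~$\sum_i\xi_{Di}W_i$ is the delicate term because $\xi_{Di}$ can depend on $\epsilon_{Si}$ through strategic misreporting. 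Here I would refine \Cref{prop:bound_strategy} to a round--wise bound: at round $i$ of episode $m$, the number of ``locked--in'' future rounds is $2^m-i$, so $|\xi_{Di}|\lesssim\gamma^{(2^m-i)/2}/\sqrt{1-\gamma}$; geometric summation yields $\sum_i|\xi_{Di}|\lesssim 1/(1-\gamma)^{3/2}$, and combining with the uniform tail bound $\max_i|\epsilon_{Si}-\bar\epsilon_S|\lesssim\eta_S\sqrt{\log(T/\delta)}$ gives a contribution of order $\eta_S\sqrt{\log(T/\delta)}/((1-\gamma)\sigma_S^2 n)$ after dividing by the denominator.

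Dividing numerator by denominator and collecting the bounds (with a union bound over the eight concentration events, explaining the $1-8\delta$ confidence) delivers the stated inequality. The main obstacle I expect is term~(iii): naively using the worst--case bound $|\xi_{Di}|\lesssim 1/\sqrt{1-\gamma}$ would make this contribution constant in $n$, so the proof must exploit the low--switching design of \Cref{alg:AaaIV} and the geometric decay of per--round strategic slack across an episode. A secondary, more routine, difficulty is making the concentration for $\sum_i(f-\hat f)(x_i)W_i$ rigorous without circularity, which is handled by conditioning on the previous episode's data so that $\hat f$ is frozen while $(\epsilon_{Si},\epsilon_{Di})$ in the current episode are fresh.
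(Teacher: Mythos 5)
Your proposal is correct and follows essentially the same route as the paper's proof: the instrument simplifies to the centered supply noise $\epsilon_{Si}-\bar\epsilon_S$, the error splits into the same three numerator terms over the common denominator $\sum_i Q_i^e W_i$, the denominator is shown to concentrate at $\Theta(n\sigma_S^2)$ via sub-exponential concentration of $\epsilon_{Si}^2$, and the strategic term is handled exactly as you describe, by sharpening \Cref{prop:bound_strategy} to the per-round bound $|\xi_{Di}|\lesssim\gamma^{(n-i+1)/2}/\sqrt{1-\gamma}$ and summing the geometric series within the episode. The only cosmetic difference is that the paper bounds $|f-\hat f|$ crudely by $2B$ in term (i) rather than retaining $\|f-\hat f\|_n$, which makes no difference to the stated rate.
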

\begin{proof}{Proof}
From the implementation of \Cref{alg:AaaIV}, it holds that
$$\begin{aligned}
\hat\beta&=\frac{\sum_{i=1}^{n}(P_{i}^e-\hat f(x_i))(\hat f(x_i)-2a_i-\hat\alpha_0)}{\sum_{i=1}^{n}Q_i^e(\hat f(x_i)-2a_i-\hat\alpha_0)}\\
&=\beta+
\underbrace{
\frac{\frac{1}{n}\sum (f(x_i)-\hat f(x_i))(\epsilon_{Si}+\alpha_0-\hat\alpha_0)}{\frac{1}{n}\sum Q_i^e(\epsilon_{Si}+\alpha_0-\hat\alpha_0)}
}_{q_1}
+
\underbrace{\frac{\frac{1}{n}\sum \epsilon_{Di}(\epsilon_{Si}+\alpha_0-\hat\alpha_0)}{\frac{1}{n}\sum Q_i^e(\epsilon_{Si}
+\alpha_0-\hat\alpha_0)}
}_{q_2}
+
\underbrace{
\frac{\frac{1}{n}\sum \xi_{Di}(\epsilon_{Si}+\alpha_0-\hat\alpha_0)}{\frac{1}{n}\sum Q_i^e(\epsilon_{Si}+\alpha_0-\hat\alpha_0)}
}_{q_3}.
\end{aligned}$$
Here, we use the fact that $P_i^e=\beta Q_i^e+f(x_i)+\epsilon_{Di}+\xi_{Di}$ and 
due to \Cref{prop:opta}, it holds that $\hat f(x_i)-2a_i-\hat\alpha_0=\epsilon_{Si}+\alpha_0-\hat\alpha_0$. 

Then, we bound $|\alpha_0-\hat\alpha_0|$. It holds that, with probability at least $1-\delta$, due to \Cref{lem:hoeffding},
\[
|\alpha_0-\hat\alpha_0|=|\frac{1}{n}\sum_{i=1}^n\epsilon_{Si}|\le\sqrt{\frac{2\eta_S^2\log\frac{2\lceil \log_2 T \rceil}{\delta}}{n}},
\]
for all $m\le\lceil \log_2 T \rceil$.

For the numerator of $q_1$, it holds that
\[
|\frac{1}{n}\sum (f(x_i)-\hat f(x_i))(\alpha_0-\hat\alpha_0)|\le 2B\sqrt{\frac{2\eta_S^2\log\frac{2\lceil \log_2 T \rceil}{\delta}}{n}},
\]
since all functions in $\cF$ are uniformly bounded by $B$. Also, it holds that with probability at least $1-\delta$,
\[
|\frac{1}{n}\sum (f(x_i)-\hat f(x_i))\epsilon_{Si}|\le 2B\sqrt{\frac{2\eta_S^2\log\frac{2\lceil \log_2 T \rceil}{\delta}}{n}},
\]
where we use \Cref{lem:hoeffding} and set $a_i=\frac{1}{n}(f(x_i)-\hat f(x_i))$. Here, the convergence rate implicitly depends on the doubly-robust correction.

For the numerator of $q_2$, it holds that with probability at least $1-\delta$,
\[
|\frac{1}{n}\sum \epsilon_{Di}(\alpha_0-\hat\alpha_0)|\le \sqrt{\frac{2\eta_S^2\log\frac{2\lceil \log_2 T \rceil}{\delta}}{n}}
\sqrt{\frac{2\eta_D^2\log\frac{2\lceil \log_2 T \rceil}{\delta}}{n}},
\]
where we use \Cref{lem:hoeffding} for $\epsilon_{Di}$.
For $\frac{1}{n}\sum\epsilon_{Di}\epsilon_{Si}$, from \Cref{lem:product+subG}, we know that $\epsilon_{Di}\epsilon_{Si}$ is $\sqrt{3}\eta_D\eta_S$-sub-exponential. Here we use independence conditional on the filtration, and the same argument applies throughout. With \Cref{lem:subE}, it holds that with probability at least $1-\delta$,
\[
|\frac{1}{n}\sum\epsilon_{Di}\epsilon_{Si}|\le \sqrt{\frac{6\eta_D^2\eta_S^2\log\frac{2\lceil \log_2 T \rceil}{\delta}}{n}},
\]
as long as $n\ge 2\log\frac{2\lceil \log_2 T \rceil}{\delta}$.

For the numerator of $q_3$, we know that for $\xi_{Di}$, the buyer needs to wait for $n-i+1$ rounds for the next update. Hence, from  the proof of \Cref{prop:bound_strategy}, it holds that
\[
    |\xi_{Di}|\le \sqrt{\frac{\gamma^{n-i+1} (\alpha_1-\beta)(B^2+\eta_D^2)}{(1-\gamma)|\beta|}}.
\]
Then, it holds that
\[
|\frac{1}{n}\sum \xi_{Di}(\alpha_0-\hat\alpha_0)|\lesssim \cO(\frac{1}{n(1-\gamma)}\sqrt{\frac{2\eta_S^2\log\frac{2\lceil \log_2 T \rceil}{\delta}}{n}})\lesssim\cO(\frac{\eta_S}{(1-\gamma)n^{3/2}}\sqrt{\log\frac{\log T}{\delta}}).
\]
Moreover, with probability at least $1-\delta$, it holds that $|\epsilon_{Si}|\le\sqrt{2\eta_S^2\log\frac{2T}{\delta}}$ due to \Cref{lem:hoeffding}. Then, we have
\[
|\frac{1}{n}\sum \xi_{Di}\epsilon_{Si}|\lesssim \cO(\frac{\eta_S\sqrt{\log\frac{T}{\delta}}}{n(1-\gamma)}).
\]
Combining these two parts, we know that
\[
|\frac{1}{n}\sum \xi_{Di}(\epsilon_{Si}+\alpha_0-\hat\alpha_0)|\lesssim\cO(\frac{\eta_S\sqrt{\log\frac{T}{\delta}}}{n(1-\gamma)}).
\]

For the denominator, we know that
$$\begin{aligned}
   -\frac{1}{n}\sum Q_i^e(\epsilon_{Si}+\alpha_0-\hat\alpha_0)\ge& -\frac{1}{n}\sum Q_i^e \epsilon_{Si}-\bar Q\sqrt{\frac{2\eta_S^2\log\frac{2\lceil \log_2 T \rceil}{\delta}}{n}}\\
   =&\frac{1}{n}\sum\frac{\epsilon_{Si}^2}{2(\alpha_1-\beta)}-\frac{1}{n}\sum\frac{(2f(x_i)-\hat f(x_i)+2\epsilon_{Di}+2\xi_{Di}-\alpha_0)\epsilon_{Si}}{2(\alpha_1-\beta)}\\
   &-\bar Q\sqrt{\frac{2\eta_S^2\log\frac{2\lceil \log_2 T \rceil}{\delta}}{n}},
\end{aligned}$$
since $Q_i^e$, which is determined by \Cref{prop:equ,prop:opta}, is bounded by $\bar Q\lesssim\cO(\sqrt{\log(\frac{T}{\delta})}+\frac{1}{\sqrt{1-\gamma}})$.

From the above, we know that with probability at least $1-2\delta$ (one $\delta$ is for $\frac{1}{n}\sum (2f(x_i)-\hat f(x_i))\epsilon_{Si}$ and the other is for $\frac{1}{n}\sum \alpha_0\epsilon_{Si}$),
\[
|\frac{1}{n}\sum\frac{(2f(x_i)-\hat f(x_i)+2\epsilon_{Di}+2\xi_{Di}-\alpha_0)\epsilon_{Si}}{2(\alpha_1-\beta)}|\lesssim\cO\left(\sqrt{\frac{\eta_S^2\log\frac{\log T}{\delta}}{n}}+\frac{\sqrt{\eta_S^2\log\frac{2T}{\delta}}}{n(1-\gamma)}\right).
\]

Meanwhile, from \Cref{lem:square_subE}, we know that $\epsilon_{Si}^2-\sigma_S^2$ follows a $16\eta_S$-sub-exponential distribution. Therefore, we know that with probability at least $1-\delta$,
\[
|\frac{1}{n}\sum \epsilon_{Si}^2-\sigma_S^2|\le \sqrt{\frac{512\eta_S^4\log\frac{2\lceil \log_2 T \rceil}{\delta}}{n}},
\]
as long as $n\ge 2\log\frac{2\lceil \log_2 T \rceil}{\delta}$ due to \Cref{lem:subE}.
It then holds that
\[
\frac{1}{n}\sum\frac{\epsilon_{Si}^2}{2(\alpha_1-\beta)}\ge \frac{1}{2(\alpha_1-\beta)}(\sigma_S^2-\sqrt{\frac{512\eta_S^4\log\frac{2\lceil \log_2 T \rceil}{\delta}}{n}}).
\]

Combining these parts, we know that 
\[
-\frac{1}{n}\sum Q_i^e(\epsilon_{Si}+\alpha_0-\hat\alpha_0)\ge\frac{\sigma_S^2}{4(\alpha_1-\beta)},
\]
as long as $n\gtrsim\Omega(\frac{\eta_S^2\log\frac{T}{\delta}\log\frac{\log T}{\delta}}{\sigma_S^4}+\frac{\eta_S^2\log\frac{\log T}{\delta}}{(1-\gamma)\sigma_S^4}+\frac{\eta_S\sqrt{\log\frac{T}{\delta}}}{(1-\gamma)\sigma_S^2})$.

Consequently, it holds that
\[
|\hat\beta-\beta|\lesssim\cO(\frac{\sqrt{\eta_S^2\log\frac{\log T}{\delta}}}{\sigma_S^2\sqrt{n}}+\frac{\sqrt{\eta_S^2\log \frac{T}{\delta}}}{(1-\gamma)\sigma_S^2n}),
\]
as long as $n\gtrsim\Omega(\frac{\eta_S^2\log\frac{T}{\delta}\log\frac{\log T}{\delta}}{\sigma_S^4}+\frac{\eta_S^2\log\frac{\log T}{\delta}}{(1-\gamma)\sigma_S^4}+\frac{\eta_S\sqrt{\log\frac{T}{\delta}}}{(1-\gamma)\sigma_S^2})$. Here, we use the fact that as long as $n\gtrsim\Omega(\log\frac{\log T}{\delta})$, it holds that $\frac{\sqrt{\log\frac{\log T}{\delta}}}{\sqrt{n}}\gtrsim\Omega(\frac{\log\frac{\log T}{\delta}}{n})$.

When $\delta\eqsim\Theta(\frac{1}{T})$, we then obtain that
\[
|\hat\beta-\beta|\lesssim\cO(\frac{\eta_S\sqrt{\log T}}{\sigma_S^2\sqrt{n}}+\frac{\eta_S\sqrt{\log T}}{(1-\gamma)\sigma_S^2 n}),
\]
as long as $n\gtrsim\Omega(\frac{\eta_S^2\log ^2 T}{\sigma_S^4}+\frac{\eta_S^2\log T}{(1-\gamma)\sigma_S^4})$.
\end{proof}

\begin{lemma}\label{lem:bound_f_square}
    Let $\hat f$ be the estimate of $f$ after the $m$-th episode. Under the success event and \Cref{ass:context}, it holds that with probability at least $1-2\delta$,
    \[
    \E(\hat f(x)-f(x))^2\lesssim \cO(\frac{dim}{n}\log n+\frac{\eta_S^2\log\frac{\log T}{\delta}\log\frac{ T}{\delta}}{n\sigma_S^4}+\frac{\eta_S^2\log\frac{\log T}{\delta}}{n\sigma_S^4(1-\gamma)}
    +\frac{\eta_S^2\log^2(\frac{T}{\delta})}{n^2\sigma_S^4(1-\gamma)^2}
    +\frac{\eta_S^2\log\frac{T}{\delta}}{n^2\sigma_S^4(1-\gamma)^3}),
    \]
    whenever $2^m\gtrsim\Omega(dim\log(dim)+\frac{\eta_S^2\log\frac{T}{\delta}\log\frac{\log T}{\delta}}{\sigma_S^4}+\frac{\eta_S^2\log\frac{\log T}{\delta}}{(1-\gamma)\sigma_S^4}+\frac{\eta_S\sqrt{\log\frac{T}{\delta}}}{(1-\gamma)\sigma_S^2})$. Here, the expectation is taken over the distribution of market features.
\end{lemma}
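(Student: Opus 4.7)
\textbf{Proof plan for Lemma \ref{lem:bound_f_square}.}

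My plan is to start from the ERM basic inequality for $\hat f$ applied against the true $f$, then decompose the resulting cross term into three pieces corresponding to (i) demand noise $\epsilon_{Di}$, (ii) the plug-in error $(\beta-\hat\beta)Q_i^e$, and (iii) the strategic-reporting term $\xi_{Di}$. Concretely, writing $r_i=(\beta-\hat\beta)Q_i^e+\epsilon_{Di}+\xi_{Di}$ so that $P_i^e-\hat\beta Q_i^e-f(x_i)=r_i$ and $P_i^e-\hat\beta Q_i^e-\hat f(x_i)=r_i+(f-\hat f)(x_i)$, ERM optimality of $\hat f$ immediately gives
\begin{equation*}
\|\hat f-f\|_n^2\;\le\;\frac{2}{n}\sum_{i=1}^n \epsilon_{Di}(\hat f-f)(x_i)\;+\;\frac{2}{n}\sum_{i=1}^n (\beta-\hat\beta)Q_i^e(\hat f-f)(x_i)\;+\;\frac{2}{n}\sum_{i=1}^n \xi_{Di}(\hat f-f)(x_i).
\end{equation*}
The last two contributions are controlled crudely: Cauchy--Schwarz plus \Cref{lem:beta_noise} turns the $\hat\beta$--term into $\|\hat f-f\|_n$ times a factor of order $\eta_S\sqrt{\log T}/(\sigma_S^2\sqrt{n})+\eta_S\sqrt{\log T}/((1-\gamma)\sigma_S^2 n)$, while \Cref{prop:bound_strategy} with the geometric wait time $n-i+1$ bounds $\frac{1}{n}\sum|\xi_{Di}|^2$ by $O(1/((1-\gamma)^2 n^2)+1/((1-\gamma)^3 n^2))$; applying $2ab\le a^2/2+2b^2$ absorbs $\|\hat f-f\|_n^2/2$ on the left and produces the last three summands of the stated bound.

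The interesting piece is the noise term. I would apply the standard localization argument: for each radius $u$, control $\sup_{g\in\mathcal{F}-f:\;\|g\|_n\le u}\langle\epsilon_D,g\rangle_n$ using \Cref{lem:inner} with the covering-number bound $\log N(\mathcal{F}-f,\|\cdot\|_\infty,x)\lesssim dim\cdot\log(1/x)$, giving a Dudley integral of order $u\sqrt{dim/n}$. Solving the fixed-point inequality $\bar\delta^2\gtrsim \bar\delta\sqrt{dim/n}$ yields the critical radius $\bar\delta^2\asymp (dim/n)\log n$, which is exactly the first term in the statement. Turning the empirical noise bound into $\|\hat f-f\|_n^2\lesssim \bar\delta^2$ is routine once localization is in place.

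The main obstacle, as flagged in the paper, is that this localization usually requires $\mathcal{F}-f$ to be star-shaped around $0$, which we have not assumed. My workaround is the homeomorphic construction sketched in the technical-challenges section: I embed $\lambda\mathcal{F}+(1-\lambda)f$ (for $\lambda\in[0,1]$) into an auxiliary class $\widetilde{\mathcal{F}}$ that is star-shaped around $f$ and whose $\|\cdot\|_\infty$-covering number satisfies $\log N(\widetilde{\mathcal{F}}-f,\|\cdot\|_\infty,\epsilon)\lesssim (dim+1)\log(1/\epsilon)$. Since every $g=\hat f-f$ we care about lies in this star-shaped enlargement, the critical-radius machinery applies with effective dimension $dim+1$, costing only an absolute constant factor.

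Finally, to pass from the empirical norm $\|\hat f-f\|_n^2$ to the population $L_2$ error $\E(\hat f-f)^2$, I apply \Cref{lem:lemma36} to the squared-loss class $(\widetilde{\mathcal{F}}-f)^2$, whose critical radius is bounded by the same quantity via \Cref{lem:squareclass}. Collecting the contributions $\bar\delta^2\asymp (dim/n)\log n$ from the noise/localization step, the $\hat\beta$ term of order $\eta_S^2\log(\log T/\delta)\log(T/\delta)/(n\sigma_S^4)+\eta_S^2\log(\log T/\delta)/(n\sigma_S^4(1-\gamma))$, and the strategic term of order $\eta_S^2\log^2(T/\delta)/(n^2\sigma_S^4(1-\gamma)^2)+\eta_S^2\log(T/\delta)/(n^2\sigma_S^4(1-\gamma)^3)$, and verifying that the lower bound $n\gtrsim dim\log(dim)+\cdots$ is exactly what is needed for the fixed-point and for \Cref{lem:beta_noise} to kick in, produces the claimed bound with failure probability $2\delta$ (one $\delta$ for \Cref{lem:beta_noise}, one for the uniform empirical-process bound).
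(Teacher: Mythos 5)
Your plan follows the paper's proof essentially step for step: the same ERM basic inequality with the three-way decomposition into $\epsilon_{Di}$, $(\beta-\hat\beta)Q_i^e$, and $\xi_{Di}$ cross terms, the same absorption trick, the same star-shaped auxiliary class $\cF_f=\{\lambda(\tilde f-f)\}$ with covering exponent $dim+1$ to run localization without star-shapedness of $\cF-f$, and the same empirical-to-population transfer via \Cref{lem:lemma36} and \Cref{lem:squareclass}. The only discrepancies are bookkeeping: the four non-$dim$ terms in the bound actually arise from $(\hat\beta-\beta)^2\bar Q^2$ rather than from $\xi_{Di}$ (the paper bounds the $\xi$ cross term directly by $\cO(B/(n(1-\gamma)))$ using the geometric decay of $|\xi_{Di}|$, which is tighter than your Cauchy--Schwarz route whose second moment is $\cO(1/(n(1-\gamma)^2))$, not $\cO(1/(n^2(1-\gamma)^2))$), and the ``routine'' high-probability step requires the Lipschitz concentration of $Z(u,\cF_f)$ about its mean via the strong log-concavity of $\epsilon_D$.
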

\begin{proof}{Proof}
    We define a new function class $\cG=\{\frac{(\Tilde{f}-f)^2}{4B^2}:\Tilde{f}\in \cF\}$. Since we know all functions in $\cF$ are bounded by $B$, we know that with probability at least $1-\delta$,
    \[
    \E g-\frac{2}{n}\sum g(x_i)\lesssim\cO(\bar \delta(\cG)^2+\frac{\log\frac{\log T}{\delta}+\log\log n}{n})
    \]
    due to \Cref{lem:lemma36}. Here, $\bar\delta(\cG)$ is the critical radius for the class $\cG$.

    We know that $\frac{\Tilde{f}-f}{2B}\in[-1,1]$, then from \Cref{lem:squareclass}, it holds that $\bar\delta(\cG)$ is upper bounded by a solution to
    \[
    \frac{12}{\sqrt{n}}\int_{u/16}^1\sqrt{\log N(\frac{\cF}{2B},\|\cdot\|_\infty,x/2)}dx\le u/4.
    \]
    Here, we use the fact that for any given $f$, $\cF-f$ and $\cF$ have the same covering number. Then, we know that
    \[
    \frac{12}{\sqrt{n}}\int_{u/16}^1\sqrt{\log N(\frac{\cF}{2B},\|\cdot\|_\infty,x/2)}dx\lesssim\cO(\frac{1}{\sqrt{n}}\int_{u/16}^1\sqrt{dim\log\frac{1}{x}}dx)\lesssim\cO(\frac{1}{\sqrt{n}}\sqrt{dim\log\frac{1}{u}}).
    \]
    Therefore, we know that there exists a solution satisfying $u\lesssim\cO(\sqrt{\frac{dim\log n}{n}})$. Hence, we know that 
    \[
    \bar\delta(\cG)\lesssim \cO(\sqrt{\frac{dim\log n}{n}}).
    \]
    Then, we know that 
    \[
    \E g-\frac{2}{n}\sum g(x_i)\lesssim\cO(\frac{dim\log n+\log\frac{\log T}{\delta}}{n}),
    \]
    for all $g\in\cG$, specially $\frac{(\hat f-f)^2}{4B^2}$,
    and we only need to bound $\frac{1}{n}\sum g(x_i)$.

    Note that $\hat f$ is the ERM, it holds that
    \[
    \frac{1}{n}\sum (\hat f(x_i)-f(x_i)-\epsilon_{Di}-\xi_{Di}-(\beta-\hat\beta)Q_i^e)^2\le \frac{1}{n}\sum (-\epsilon_{Di}-\xi_{Di}-(\beta-\hat\beta)Q_i^e)^2.
    \]
    Therefore, we know that
    \[
    \frac{1}{n}\sum(\hat f(x_i)-f(x_i))^2\le \frac{2}{n}\sum (\hat f(x_i)-f(x_i))(\epsilon_{Di}+\xi_{Di}+(\beta-\hat\beta)Q_i^e).
    \]
    First, we know that
    \[
     |\frac{2}{n}\sum(\hat f(x_i)-f(x_i))\xi_{Di}|\lesssim\cO(\frac{B}{n(1-\gamma)}),
    \]
    and due to $ab\le\frac{a^2}{4}+b^2$,
    \[
    |\frac{2}{n}\sum (\hat f(x_i)-f(x_i))(\beta-\hat\beta)Q_i^e|\le \frac{\frac{1}{n}\sum (\hat f(x_i)-f(x_i))^2}{2}+\frac{2}{n}\sum (\beta-\hat\beta)^2\bar Q^2. 
    \]
    It yields that
    $$\begin{aligned}
        & \quad\; \frac{1}{n}\sum(\hat f(x_i)-f(x_i))^2-4\langle \hat f-f,\epsilon_{D}\rangle_n \\
        & \lesssim\cO(\frac{\eta_S^2\log\frac{\log T}{\delta}\log\frac{ T}{\delta}}{n\sigma_S^4}+\frac{\eta_S^2\log\frac{\log T}{\delta}}{n\sigma_S^4(1-\gamma)}
    +\frac{\eta_S^2\log^2(\frac{T}{\delta})}{n^2\sigma_S^4(1-\gamma)^2}
    +\frac{\eta_S^2\log\frac{T}{\delta}}{n^2\sigma_S^4(1-\gamma)^3}
    ).
    \end{aligned}$$
    Here, $\langle\cdot,\cdot\rangle$ is associated with $x_1,...,x_n$ and $\epsilon_{D1},...,\epsilon_{Dn}$.

    We define $q_4=\sup_{h\in\cF-f}8\langle h,\epsilon_D\rangle_n-\|h\|_n^2$. Besides, we construct an auxiliary function class $\cF_f=\{\lambda(\Tilde{f}-f):\Tilde{f}\in\cF\ \text{and }\lambda\in[0,1]\}$. Note that it's homeomorphic to $\lambda \cF+(1-\lambda)f$ as we can always add it by $f$. 

    For every $\Tilde{f}\in\cF$, we assume that there exists $h_{\Tilde{f}}\in\cF$ such that $\|\Tilde{f}-h_{\Tilde{f}}\|_\infty\le\frac{\epsilon}{2}$. Then, considering $S=\{\frac{k\epsilon}{4B}:k\in\{0,1,...,\lfloor\frac{4B}{\epsilon}\rfloor\}\}$, it holds that for every $\lambda\in[0,1]$, there exists $\lambda_S\in S$ such that $|\lambda_S-\lambda|\le\frac{\epsilon}{4B}$. Therefore, we know
    \[
    \|\lambda(\Tilde{f}-f)-\lambda_S(h_{\Tilde{f}}-f)\|_\infty\le |\lambda-\lambda_S|*\|\Tilde{f}-f\|_\infty+\lambda_S\|\Tilde{f}-h_{\Tilde{f}}\|_\infty\le \frac{\epsilon}{4B}*2B+1*\frac{\epsilon}{2}=\epsilon.
    \]
    Consequently, it holds that
    \[
    N(\cF_f,\|\cdot\|_\infty,\epsilon)\le N(\cF,\|\cdot\|_\infty,\frac{\epsilon}{2})*\frac{4B}{\epsilon}\lesssim\cO((\frac{1}{\epsilon})^{dim+1}).
    \]
    Note that it's only slightly larger than $\cF$, namely, the effective dimension becomes $dim+1$. This approximate equivalence allows us to bypass the star-shapedness assumption.

    We define $Z(u,\cF)=\sup_{h\in\cF, \|h\|_n\le u}\langle h,\epsilon_D\rangle_n$ and $G(u,\cF)=\E Z(u,\cF)$, where the expectation is taken over $\epsilon_{D}$. Notice that
    \[
    |\sup_{h\in\cF, \|h\|_n\le u}\langle h,\epsilon_D\rangle_n-\sup_{h\in\cF, \|h\|_n\le u}\langle h,\tilde\epsilon_D\rangle_n|\le \frac{u}{\sqrt n}\sqrt{\sum(\epsilon_{Di}-\tilde\epsilon_{Di})^2}.
    \]
    It tells us that $Z(u,\cF)$ is $\frac{u}{\sqrt{n}}$-Lipschitz with respect to vector $\epsilon_{D}$ and its Euclidean norm. We assume that $\epsilon_D$ is $l$-log-concave, then we know from \Cref{lemma:3.16}
    \begin{equation}\label{eq:GandZ}
            \PP(|Z(u,\cF)-G(u,\cF)|\ge \sqrt{\frac{4u^2\log\frac{2\lceil \log_2 T \rceil}{\delta}}{nl}})\le \frac{\delta}{2\lceil \log_2 T \rceil},
    \end{equation}
    for any fixed class $\cF$.
    We define critical radius $\bar\delta(\cF)$ to be the infimum of all positive solutions satisfying $G(u,\cF)\le\frac{u^2}{2}$. Actually, in the following proof, we can choose any solution as long as $G(\bar\delta(\cF),\cF)\le\frac{(\bar\delta(\cF))^2}{2}$.
    Moreover, since $\cF-f\subseteq \cF_f$, we know that $Z(u,\cF-f)\le Z(u,\cF_f)$ and $G(u,\cF-f)\le G(u,\cF_f)$.

    When $\|h\|_n\le\bar\delta(\cF_f)$, it holds that 
    \[
    q_4\le 8 Z(\bar\delta(\cF_f),\cF-f)\le 8Z(\bar\delta(\cF_f),\cF_f).
    \]
    We know from \Cref{eq:GandZ} that with probability at least $1-\delta$, it holds that
    \[
    Z(\bar\delta(\cF_f),\cF_f)\le G(\bar\delta(\cF_f),\cF_f)+\sqrt{\frac{4\log\frac{2\lceil \log_2 T \rceil}{\delta}}{nl}}\bar\delta(\cF_f).
    \]
    Then, we know that
    \[
    q_4\le 4(\bar\delta(\cF_f))^2+8\sqrt{\frac{4\log\frac{2\lceil \log_2 T \rceil}{\delta}}{nl}}\bar\delta(\cF_f).
    \]
    Otherwise, for the case $\|h\|_n\ge \bar\delta(\cF_f)$, we assume that $r=\frac{\bar\delta(\cF_f)}{\|h\|_n}\in[0,1]$.
    It then holds that
    \[
    q_4=\frac{8}{r}\langle \epsilon_D, \frac{\bar\delta(\cF_f)}{\|h\|_n}h\rangle_n-\frac{(\bar\delta(\cF_f))^2}{r^2}\le \frac{8}{r}Z(\bar\delta(\cF_f),\cF_f)-\frac{(\bar\delta(\cF_f))^2}{r^2}\le 16(\frac{Z(\bar\delta(\cF_f),\cF_f)}{\bar\delta(\cF_f)})^2.
    \]
    Here, we use the fact that $\frac{\bar\delta(\cF_f)}{\|h\|_n}h\in\cF_f$ and $\|\frac{\bar\delta(\cF_f)}{\|h\|_n}h\|_n=\bar\delta(\cF_f)$. Given the event that $ Z(\bar\delta(\cF_f),\cF_f)\le G(\bar\delta(\cF_f),\cF_f)+\sqrt{\frac{4\log\frac{2\lceil \log_2 T \rceil}{\delta}}{nl}}\bar\delta(\cF_f)$ and $G(\bar\delta(\cF_f),\cF_f)\le \frac{(\bar\delta(\cF_f))^2}{2}$, it holds that
    \[
    q_4\le 16(\frac{\bar\delta(\cF_f)}{2}+\sqrt{\frac{4\log\frac{2\lceil \log_2 T \rceil}{\delta}}{nl}})^2.
    \]
    Therefore, we know that 
    \[
    q_4\lesssim\cO((\bar\delta(\cF_f))^2+\frac{4\log\frac{2\lceil \log_2 T \rceil}{\delta}}{nl}).
    \]

    We now only need to estimate the magnitude of $\bar\delta(\cF_f)$. From \Cref{lem:inner}, we know that
    \[
    G(u,\cF_f)\lesssim\cO(\frac{1}{\sqrt{n}}\int_0^{u}\sqrt{\log N(\cF_f,\|\cdot\|_\infty,x)}dx)\lesssim\cO(\frac{\sqrt{dim}}{\sqrt n}\int_0^{u}1+\sqrt{\log\frac{1}{x}}dx),
    \]
    where we use the fact that $N(\cF_f,\|\cdot\|_\infty,\epsilon)\lesssim\cO((\frac{1}{\epsilon})^{dim+1})$. Notice that for $u\le 1$,
    \[
    \int_0^{u}\sqrt{\log\frac{1}{x}}dx=\int_{\log(1/u)}^\infty \sqrt{x}e^{-x}dx=\Gamma(\frac{3}{2},\log(\frac{1}{u})),
    \]
    where $\Gamma(\cdot,\cdot)$ is the upper incomplete gamma function. From the knowledge of analysis~\citep{abramowitz1968handbook}, we know that $\Gamma(\frac{3}{2},x)\lesssim\cO(\sqrt{x}e^{-x})$ as long as $x\gtrsim\Omega(1)$.
    Hence, we know that
    \[    G(u,\cF_f)\lesssim\cO\left(\frac{\sqrt{dim}}{\sqrt{n}}u\sqrt{\log(\frac{1}{u}})\right).
    \]
    Consequently, there exists an $u\lesssim\cO(\sqrt{\frac{dim}{n}\log n})$ such that $G(u,\cF_f)\le\frac{u^2}{2}$. So, it means that we can choose $\bar\delta(\cF_f)\lesssim\cO(\sqrt{\frac{dim}{n}\log n})$. Here, we only need $n\gtrsim\Omega(dim\log(dim))$.

    Finally, we obtain that
    \[
    q_4\lesssim\cO(\frac{dim}{n}\log n+\frac{\log\frac{\log T}{\delta}}{n}),
    \]
    which yields
    $$\begin{aligned}
        & \quad\; \frac{1}{n}\sum(\hat f(x_i)-f(x_i))^2 \\
        & \lesssim\cO(\frac{dim}{n}\log n+\frac{\eta_S^2\log\frac{\log T}{\delta}\log\frac{ T}{\delta}}{n\sigma_S^4}+\frac{\eta_S^2\log\frac{\log T}{\delta}}{n\sigma_S^4(1-\gamma)} +\frac{\eta_S^2\log^2(\frac{T}{\delta})}{n^2\sigma_S^4(1-\gamma)^2} +\frac{\eta_S^2\log\frac{T}{\delta}}{n^2\sigma_S^4(1-\gamma)^3}).
    \end{aligned}$$

    Since we know $\E \frac{(\hat f(x)-f(x))^2}{4B^2}-\frac{2}{n}\sum\frac{(\hat f(x_i)-f(x_i))^2}{4B^2}\lesssim\cO(\frac{dim\log n+\log\frac{\log T}{\delta}}{n})$, it holds that
    \[
    \E (\hat f(x)-f(x))^2\lesssim\cO(\frac{dim}{n}\log n+\frac{\eta_S^2\log\frac{\log T}{\delta}\log\frac{ T}{\delta}}{n\sigma_S^4}+\frac{\eta_S^2\log\frac{\log T}{\delta}}{n\sigma_S^4(1-\gamma)}
    +\frac{\eta_S^2\log^2(\frac{T}{\delta})}{n^2\sigma_S^4(1-\gamma)^2}
    +\frac{\eta_S^2\log\frac{T}{\delta}}{n^2\sigma_S^4(1-\gamma)^3}),
    \]
    which ends the proof.
\end{proof}

\subsubsection{Proof of Theorem \ref{thm:partially}}
We first bound the regret in the $(m+1)$-th episode, denoted by $\text{Regret}_{m+1}$. The length of this episode is $2^{m+1}=2n$. 
The start round is $s=2^{m+1}-1$ and the end round is $e=2^{m+2}-2$.
Recall that 
\[
\text{Regret}_{m+1}=\sum_{i=1}^{2n}\frac{1}{\alpha_1-\beta}(a_i^*-a_i)^2-\frac{\xi_{Di} a_i}{\alpha_1-\beta},
\]
and $|a_i^*-a_i|\lesssim\cO(|f(x_i)-\hat f(x_i)|)$. 

We first bound $\sum_{i=1}^{2n}|\xi_{Di}a_i|$. Given the success event that $|\epsilon_{Si}|\le\sqrt{2\eta_S^2\log\frac{2T}{\delta}}$, it holds that
\[
|\sum_{i=1}^{2n}\frac{\xi_{Di} a_i}{\alpha_1-\beta}|\lesssim \cO(\frac{\eta_S\sqrt{\log\frac{T}{\delta}}}{1-\gamma}),
\]
due to \Cref{prop:bound_strategy}.

For the other term $\sum_{i=1}^{2n} (f(x_i)-\hat f(x_i))^2$, we first bound its expectation over the feature vector that
$$\begin{aligned}
    \E \sum_{i=1}^{2n} (f(x_i)-\hat f(x_i))^2 & \lesssim\cO\left(dim\log T\log \frac{e}{s}+\frac{\eta_S^2\log\frac{\log T}{\delta}\log\frac{ T}{\delta}\log \frac{e}{s}}{\sigma_S^4}+\frac{\eta_S^2\log\frac{\log T}{\delta}\log \frac{e}{s}}{\sigma_S^4(1-\gamma)}\right. \\
    & \quad \quad \left. +\frac{\eta_S^2\log^2(\frac{T}{\delta})(\frac{1}{s}-\frac{1}{e})}{\sigma_S^4(1-\gamma)^2}
    +\frac{\eta_S^2\log\frac{T}{\delta}(\frac{1}{s}-\frac{1}{e})}{\sigma_S^4(1-\gamma)^3}\right),
\end{aligned}$$
where we use the fact that $2n\le T$. We use $X_i$ to denote $\frac{(f(x_i)-\hat f(x_i))^2}{4B^2}$. Then, we know that $X_1,...,X_{2n}$ are i.i.d. and bounded by $[0,1]$. Also, it holds that $\E X_i\lesssim\cO(\frac{dim}{n}\log n+\frac{\eta_S^2\log\frac{\log T}{\delta}\log\frac{ T}{\delta}}{n\sigma_S^4}+\frac{\eta_S^2\log\frac{\log T}{\delta}}{n\sigma_S^4(1-\gamma)}
    +\frac{\eta_S^2\log^2(\frac{T}{\delta})}{n^2\sigma_S^4(1-\gamma)^2}
    +\frac{\eta_S^2\log\frac{T}{\delta}}{n^2\sigma_S^4(1-\gamma)^3})$. Note that when $X_i\in[0,1]$, we have $\Var(X_i)=\E[X_i^2]-(\E[X_i])^2\le \E[X_i]$. It means that $\Var(X_i)\lesssim\cO(\frac{dim}{n}\log n+\frac{\eta_S^2\log\frac{\log T}{\delta}\log\frac{ T}{\delta}}{n\sigma_S^4}+\frac{\eta_S^2\log\frac{\log T}{\delta}}{n\sigma_S^4(1-\gamma)}
    +\frac{\eta_S^2\log^2(\frac{T}{\delta})}{n^2\sigma_S^4(1-\gamma)^2}
    +\frac{\eta_S^2\log\frac{T}{\delta}}{n^2\sigma_S^4(1-\gamma)^3})$. Then, from \Cref{lem:theorem8}, it holds that with probability at least $1-\delta$,
    $$\begin{aligned}
    &\quad\frac{1}{2n}\sum_{i=1}^{2n}X_i-\E [\frac{1}{2n}\sum_{i=1}^{2n}X_i]\le \sqrt{\frac{2\log\frac{\lceil \log_2 T \rceil}{\delta}\Var(X_i)}{2n}}+\frac{\log\frac{\lceil \log_2 T \rceil}{\delta}}{6n}\\
    &\lesssim\cO\left(
\frac{\sqrt{dim\log\frac{\log T}{\delta}\log T}}{n}+
\frac{\eta_S\log\frac{\log T}{\delta}\sqrt{\log\frac{T}{\delta}}}{n\sigma_S^2}+
\frac{\eta_S\log\frac{\log T}{\delta}}{n\sigma_S^2\sqrt{1-\gamma}}+  
\frac{\eta_S\log\frac{T}{\delta}\sqrt{\log\frac{\log T}{\delta}}}{n^{3/2}\sigma_S^2(1-\gamma)}
+\frac{\eta_S\sqrt{\log\frac{T}{\delta}\log\frac{\log T}{\delta}}}{n^{3/2}\sigma_S^2(1-\gamma)^{3/2}}
    \right).
    \end{aligned}$$
    Hence, we know that 
    $$\begin{aligned}
    &\sum_{i=1}^{2n} (f(x_i)-\hat f(x_i))^2\\
    &\lesssim\cO(dim\log T\log \frac{e}{s}+\frac{\eta_S^2\log\frac{\log T}{\delta}\log\frac{ T}{\delta}\log \frac{e}{s}}{\sigma_S^4}+\frac{\eta_S^2\log\frac{\log T}{\delta}\log \frac{e}{s}}{\sigma_S^4(1-\gamma)}
    +\frac{\eta_S^2\log^2(\frac{T}{\delta})(\frac{1}{s}-\frac{1}{e})}{\sigma_S^4(1-\gamma)^2}
    +\frac{\eta_S^2\log\frac{T}{\delta}(\frac{1}{s}-\frac{1}{e})}{\sigma_S^4(1-\gamma)^3}\\
    &+\frac{\eta_S\log\frac{T}{\delta}\sqrt{\log\frac{\log T}{\delta}}}{n^{1/2}\sigma_S^2(1-\gamma)}
+\frac{\eta_S\sqrt{\log\frac{T}{\delta}\log\frac{\log T}{\delta}}}{n^{1/2}\sigma_S^2(1-\gamma)^{3/2}}+\sqrt{dim\log\frac{\log T}{\delta}\log T}).
    \end{aligned}$$
    Therefore, it holds that
    $$\begin{aligned}
            &\text{Regret}_{m+1}\\
            &\lesssim\cO(dim\log T\log \frac{e}{s}+\frac{\eta_S^2\log\frac{\log T}{\delta}\log\frac{T}{\delta}\log\frac{ e}{ s}}{\sigma_S^4}+\frac{\eta_S^2\log\frac{\log T}{\delta }\log\frac{e}{s}}{\sigma_S^4(1-\gamma)}
    +\frac{\eta_S^2\log^2(\frac{T}{\delta})(\frac{1}{s}-\frac{1}{e})}{\sigma_S^4(1-\gamma)^2}
    +\frac{\eta_S^2\log\frac{T}{\delta}(\frac{1}{s}-\frac{1}{e})}{\sigma_S^4(1-\gamma)^3}\\
    &+\frac{\eta_S\sqrt{\log\frac{T}{\delta}}}{1-\gamma}
    +\frac{\eta_S\log\frac{T}{\delta}\sqrt{\log\frac{\log T}{\delta}}}{2^{m/2}\sigma_S^2(1-\gamma)}
+\frac{\eta_S\sqrt{\log\frac{T}{\delta}\log\frac{\log T}{\delta}}}{2^{m/2}\sigma_S^2(1-\gamma)^{3/2}}+\sqrt{dim\log\frac{\log T}{\delta}\log T}
    )
    \end{aligned}$$
    for all $m\le\lceil \log_2 T \rceil$. Then, we know that the regret due to estimation error is at most
    $$\begin{aligned}
    &\sum_{m=1}^{\lceil \log_2 T \rceil}\text{Regret}_{m}\\
    &\lesssim
    \cO(dim\log^2 T+\frac{\eta_S^2\log\frac{\log T}{\delta}\log\frac{ T}{\delta}\log T}{\sigma_S^4}+\frac{\eta_S^2\log\frac{\log T}{\delta}\log T}{\sigma_S^4(1-\gamma)}
    +\frac{\eta_S^2\log^2(\frac{T}{\delta})}{\sigma_S^4(1-\gamma)^2}
    +\frac{\eta_S^2\log\frac{T}{\delta}}{\sigma_S^4(1-\gamma)^3}\\
    &+\frac{\log T\sqrt{\log\frac{T}{\delta}}}{1-\gamma}+\sqrt{dim\log\frac{\log T}{\delta}\log T}\log T).
    \end{aligned}$$
Note that we also need $2^m\gtrsim\Omega(dim\log(dim)+\frac{\eta_S^2\log\frac{T}{\delta}\log\frac{\log T}{\delta}}{\sigma_S^4}+\frac{\eta_S^2\log\frac{\log T}{\delta}}{(1-\gamma)\sigma_S^4}+\frac{\eta_S\sqrt{\log\frac{T}{\delta}}}{(1-\gamma)\sigma_S^2})$ from \Cref{lem:bound_f_square}, then it holds that
$$\begin{aligned}
&\text{Regret}(T)\\
&\lesssim\cO(dim\log(dim)+dim\log^2 T+\frac{\eta_S^2\log\frac{\log T}{\delta}\log\frac{ T}{\delta}\log T}{\sigma_S^4}+\frac{\eta_S^2\log\frac{\log T}{\delta}\log T}{\sigma_S^4(1-\gamma)}
    +\frac{\eta_S^2\log^2(\frac{T}{\delta})}{\sigma_S^4(1-\gamma)^2}\\
    &+\frac{\eta_S^2\log\frac{T}{\delta}}{\sigma_S^4(1-\gamma)^3}+\frac{\log T\sqrt{\log\frac{T}{\delta}}}{1-\gamma}+\sqrt{dim\log\frac{\log T}{\delta}\log T}\log T),
\end{aligned}$$
as we only suffer from $\cO(1)$ regret every round if some inequalities are violated.

By choosing $\delta=\frac{1}{12T}$, we know that with probability at least $1-\frac{1}{T}$, it holds that
\[
\text{Regret}(T)\lesssim\cO\left(dim\log(dim)+dim\log^2 T+
\frac{\eta_S^2\log^3 T}{\sigma_S^4}
    +\frac{\eta_S^2\log^2T}{\sigma_S^4(1-\gamma)^2}
    +\frac{\eta_S^2\log T}{\sigma_S^4(1-\gamma)^3}\right),
\]
which ends the proof.

\subsubsection{Lemmas Supporting Theorem \ref{thm:no_noise}}
Analogously, we consider the case after the $m$-th episode and recall that there are $n=2^m$ tuples in $\cD$.
\begin{lemma}\label{lem:beta_no_noise}
    Let $\hat\beta$ be the estimate of $\beta$ after the $m$-th episode. Under the success event and \Cref{ass:context}, it holds that with probability at least $1-6\delta$,
    \[
    |\hat \beta-\beta|\lesssim\cO\left(\frac{\sqrt{\log\frac{\log T}{\delta}}}{n^{1/4}}+\frac{\sqrt{\log \frac{T}{\delta}}}{(1-\gamma)n^{3/4}}\right),
    \]
    whenever $2^m\gtrsim\Omega((\frac{\log\frac{T}{\delta}}{(1-\gamma)^2})^{2/3}+(\log\frac{\log T}{\delta})^2)$.
\end{lemma}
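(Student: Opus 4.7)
The key simplification comes from $\sigma_S = 0$: since the supply noise vanishes, $\hat{\alpha}_0 = \alpha_0$ exactly, and by Proposition~\ref{prop:opta} the greedy service fee is $\hat{a}_i^* = (\hat{f}(x_i) - \alpha_0)/2$. Because the Act algorithm sets $a_i = \hat{a}_i^* + \epsilon_i$ with $\epsilon_i \sim \mathcal{N}(0, 1/\sqrt{i+1})$ independent of everything else, a direct computation collapses the instrument to $\hat{f}(x_i) - 2a_i - \hat{\alpha}_0 = -2\epsilon_i$. Combined with $P_i^e = \beta Q_i^e + f(x_i) + \epsilon_{Di} + \xi_{Di}$, this yields the clean decomposition
\[
\hat{\beta} - \beta
= \frac{\sum_{i=1}^{n}\bigl(f(x_i) - \hat{f}(x_i) + \epsilon_{Di} + \xi_{Di}\bigr)(-2\epsilon_i)}
       {\sum_{i=1}^{n} Q_i^e (-2\epsilon_i)}.
\]
The plan is to lower bound the denominator and upper bound the numerator using sub-Gaussian/martingale concentration tailored to the non-uniform variances $\mathrm{Var}(\epsilon_i) = 1/\sqrt{i+1}$.

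For the \emph{denominator}, substituting the equilibrium formula for $Q_i^e$ produces a leading term proportional to $\sum_{i=1}^n \epsilon_i^2$ plus cross terms linear in $\epsilon_i$. Since $\mathbb{E}[\epsilon_i^2] = 1/\sqrt{i+1}$, we have $\sum_i \mathbb{E}[\epsilon_i^2] \asymp \sqrt{n}$, and by Lemma~\ref{lem:subE} applied to the (independent) sub-exponential summands $\epsilon_i^2 - 1/\sqrt{i+1}$ (cf.\ Lemma~\ref{lem:square_subE}), the partial sum concentrates at $\Theta(\sqrt{n})$ with high probability. The linear cross terms have conditional variance $\sum c_i^2/\sqrt{i+1} = O(\sqrt{n})$ for bounded deterministic coefficients $c_i$, so Lemma~\ref{lem:hoeffding} bounds them by $O(n^{1/4}\sqrt{\log(1/\delta)})$; once $n \gtrsim (\log(T/\delta))^2$, this is dominated by the leading $\sqrt{n}$ term, giving $|\text{denominator}| \gtrsim \sqrt{n}/(\alpha_1-\beta)$.

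For the \emph{numerator}, I will treat the three contributions separately. The approximation-error term $\sum_{i}(f(x_i) - \hat{f}(x_i))\epsilon_i$ is, conditional on the dataset of the previous episode, a sum of independent mean-zero sub-Gaussians with conditional variance at most $4B^2 \sum_i 1/\sqrt{i+1} = O(\sqrt{n})$, so Lemma~\ref{lem:hoeffding} bounds it by $O(n^{1/4}\sqrt{\log(\log T/\delta)})$. The same reasoning, together with Lemma~\ref{lem:product+subG} and Lemma~\ref{lem:subE} for the product of two independent sub-Gaussians, controls $\sum \epsilon_{Di}\epsilon_i$ by $O(n^{1/4}\sqrt{\log(\log T/\delta)})$. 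Dividing by the denominator yields the advertised $O(n^{-1/4}\sqrt{\log(\log T/\delta)})$ term.

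The \emph{main obstacle} is the strategic contribution $\sum_i \xi_{Di}\epsilon_i$, because $\xi_{Di}$ is chosen by the buyer after observing $a_i$ (and hence $\epsilon_i$), so this is not a sum of independent variables. I will handle it via a martingale argument with respect to the filtration that reveals $(x_i, \epsilon_i, \epsilon_{Di}, \xi_{Di})$ sequentially, using Proposition~\ref{prop:bound_strategy} to get $|\xi_{Di}| \lesssim \sqrt{\gamma^{n-i+1}/(1-\gamma)}$. The weighted squared-bound is
\[
\sum_{i=1}^n \frac{\xi_{Di}^2}{\sqrt{i+1}}
\;\lesssim\; \frac{1}{1-\gamma}\sum_{i=1}^n \frac{\gamma^{n-i+1}}{\sqrt{i+1}}
\;\lesssim\; \frac{1}{(1-\gamma)^2\sqrt{n}},
\]
because the geometric mass sits near $i = n$. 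A conditional sub-Gaussian bound (treating $\xi_{Di}$ as $\epsilon_i$-dependent but bounded) then yields $|\sum \xi_{Di}\epsilon_i| \lesssim n^{-1/4}\sqrt{\log(T/\delta)}/(1-\gamma)$. Dividing by $\sqrt{n}$ produces the $\sqrt{\log(T/\delta)}/((1-\gamma)n^{3/4})$ term in the statement. Finally, a union bound over the $O(\log T)$ episodes, with $\delta$ rescaled accordingly, and collecting the low-probability complements (and enforcing $n$ large enough that the denominator lower bound kicks in, which gives the stated $n \gtrsim (\log(T/\delta)/(1-\gamma)^2)^{2/3} + (\log(\log T/\delta))^2$ sample-size requirement) completes the proof.
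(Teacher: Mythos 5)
Your decomposition is exactly the paper's: with $\sigma_S=0$ the instrument $\hat f(x_i)-2a_i-\hat\alpha_0$ collapses to $-2\epsilon_i$, and the error splits into the approximation term, the $\epsilon_{Di}\epsilon_i$ term, and the strategic term over the denominator $\sum Q_i^e\epsilon_i$, whose leading part $\sum\epsilon_i^2\asymp\sqrt n$ is handled by the same sub-exponential concentration (Lemmas~\ref{lem:square_subE} and~\ref{lem:subE}) and whose cross terms are absorbed once $n$ exceeds the stated threshold. The bounds you give for the first two numerator terms and for the denominator match the paper's both in method and in order.

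The one place your argument as written does not go through is the strategic term $\sum_i\xi_{Di}\epsilon_i$. You correctly observe that $\xi_{Di}$ is chosen after the buyer sees $a_i$, hence after $\epsilon_i$ is realized; but this is precisely why the term is \emph{not} a martingale-difference sum with respect to the filtration you describe: $\mathbb E[\xi_{Di}\epsilon_i\mid\mathcal F_{i-1}]$ need not vanish, since $\xi_{Di}$ is not predictable, so Azuma-type concentration with the variance proxy $\sum_i\xi_{Di}^2/\sqrt{i+1}$ cannot be invoked. The paper avoids this entirely with a cruder but valid step: condition on the event $|\epsilon_i|\le\sigma_i\sqrt{2\log(2T/\delta)}$ for all $i\le T$, bound $|\sum_i\xi_{Di}\epsilon_i|\le\sum_i|\xi_{Di}|\,|\epsilon_i|$ termwise, and exploit that the bound on $|\xi_{Di}|$ from Proposition~\ref{prop:bound_strategy} decays geometrically away from $i=n$ (where $\sigma_i\asymp n^{-1/4}$), which already yields $O\bigl(\sqrt{\log(T/\delta)}/((1-\gamma)n^{1/4})\bigr)$ with no cancellation needed. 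Your final bound for this term is therefore correct, but the route to it should be the deterministic one; with that substitution the proof is complete and coincides with the paper's.
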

\begin{proof}{Proof}
    Here, since there is no noise in the supply, we know that $\hat\alpha_0=\alpha_0$, it then holds that
    \[
    \hat\beta-\beta=\underbrace{\frac{\sum (f(x_i)-\hat f(x_i))\epsilon_i}{\sum Q_i^e \epsilon_i}}_{q_5}+\underbrace{\frac{\sum \epsilon_{Di}\epsilon_i}{\sum Q_i^e \epsilon_i}}_{q_6}+\underbrace{\frac{\sum \xi_{Di}\epsilon_i}{\sum Q_i^e \epsilon_i}}_{q_7},
    \]
    where $\epsilon_i$ is the corresponding artificial randomness.

    For the numerator of $q_5$, we assume that the variance of $\epsilon_i$ is $\sigma_i^2$. It then holds that with probability at least $1-\delta$,
    \[
    |\sum (f(x_i)-\hat f(x_i))\epsilon_i|=|\sum (f(x_i)-\hat f(x_i))\sigma_i\frac{\epsilon_i}{\sigma_i}|\lesssim\cO(\sqrt{\sum \sigma_i^2\log\frac{\log T}{\delta}})\lesssim\cO(\sqrt{\sqrt{n}\log\frac{\log T}{\delta}}),
    \]
    due to \Cref{lem:hoeffding} and $\sigma_i^2=\frac{1}{\sqrt{i}}$.

    For the numerator of $q_6$, we know that due to \Cref{lem:product+subG}, $\epsilon_{Di}\epsilon_i$ is $\sqrt{3}\eta_D\sigma_i$-sub-exponential. Using the independence concerning time index $i$, we know that $\sum \epsilon_{Di}\epsilon_i$ is $\sqrt{\sum 3\eta_D^2\sigma_i^2}$-sub-exponential. Hence, it holds that with probability at least $1-\delta$,
    \[
    |\sum \epsilon_{Di}\epsilon_i|\le\max\{\sqrt{6\eta_D^2\sum\sigma_i^2\log\frac{2\lceil\log_2 T\rceil}{\delta}},2\log\frac{2\lceil\log_2 T\rceil}{\delta}\max_j \sqrt{3}\eta_D\sigma_j\}\lesssim\cO(\sqrt{\sqrt{n}\log\frac{\log T}{\delta}}),
    \]
due to \Cref{lem:subE}. For the last inequality, we need $n\gtrsim\Omega((\log\frac{\log T}{\delta})^2)$. 

    For the numerator of $q_7$, it holds that with probability at least $1-\delta$,
    $$\begin{aligned}
        |\sum \xi_{Di}\epsilon_{i}|\le\sum |\xi_{Di}|\sigma_i\sqrt{2\log\frac{2T}{\delta}} & \le\sum_i \sqrt{\frac{\gamma^{n-i+1} (\alpha_1-\beta)(B^2+\eta_D^2)}{(1-\gamma)|\beta|}}\frac{\sum_j \sigma_j}{n}\sqrt{2\log\frac{2T}{\delta}} \\
        & \lesssim\cO(\frac{\sqrt{\log\frac{T}{\delta}}}{(1-\gamma)n^{1/4}}).
    \end{aligned}$$
    The second inequality holds because the upper bound of $\xi_{Di}$, say $\sqrt{\frac{\gamma^{n-i+1} (\alpha_1-\beta)(B^2+\eta_D^2)}{(1-\gamma)|\beta|}}$, is increasing and $\sigma_i$ is decreasing.

    For the denominator, we know that 
    \[
    -\sum Q_i^e\epsilon_i=\sum\frac{\epsilon_i^2}{\alpha_1-\beta} -\sum\frac{(2f(x_i)-\hat f(x_i)+2\epsilon_{Di}+2\xi_{Di}-\alpha_0)}{2(\alpha_1-\beta)}\epsilon_i.   \]
    Similarly, we know that with probability at least $1-2\delta$ (one $\delta$ is for $\sum (2f(x_i)-\hat f(x_i))\epsilon_i$ and the other is for $\sum \alpha_0\epsilon_i $),
    \[
    |\sum\frac{(2f(x_i)-\hat f(x_i)+2\epsilon_{Di}+2\xi_{Di}-\alpha_0)}{2(\alpha_1-\beta)}\epsilon_i|\lesssim\cO(\sqrt{\sqrt{n}\log\frac{\log T}{\delta}}+\frac{\sqrt{\log\frac{T}{\delta}}}{(1-\gamma)n^{1/4}}).
    \]
    Note that $\sum\epsilon_i^2-\sigma_i^2$ is $\sqrt{256\sum \sigma_i^4}$-sub-exponential due to \Cref{lem:square_subE}, it then holds that with probability at least $1-\delta$,
    \[
    |\sum\epsilon_i^2-\sigma_i^2|\le\max\{\sqrt{512\sum\sigma_i^4\log\frac{2\lceil \log_2 T\rceil}{\delta}},2\log\frac{\lceil 2\log_2 T\rceil}{\delta}\max_j 16\sigma_j^2\}.
    \]
    Consequently, it holds that 
    \[
    \sum\epsilon_i^2\gtrsim\Omega(\sqrt{n}),
    \]
    as long as $n\gtrsim\Omega((\frac{\log\frac{T}{\delta}}{(1-\gamma)^2})^{2/3}+(\log\frac{\log T}{\delta})^2)$.

    Therefore, combining all these parts, it holds that
    \[
    |\hat\beta-\beta|\lesssim\cO\left(\frac{\sqrt{\log\frac{\log T}{\delta}}}{n^{1/4}}+\frac{\sqrt{\log \frac{T}{\delta}}}{(1-\gamma)n^{3/4}}\right),
    \]
    with probability at least $1-6\delta$, which ends the proof.
\end{proof}
\begin{lemma}
    Let $\hat f$ be the estimate of $f$ after the $m$-th episode. Under the success event and \Cref{ass:context}, it holds that with probability at least $1-2\delta$,
    \[
    \E(\hat f(x)-f(x))^2\lesssim \cO(\frac{dim}{n}\log n+\frac{\log\frac{\log T}{\delta}\log\frac{ T}{\delta}}{\sqrt{n}}+\frac{\log\frac{\log T}{\delta}}{\sqrt{n}(1-\gamma)}
    +\frac{\log^2(\frac{T}{\delta})}{n^{3/2}(1-\gamma)^2}
    +\frac{\log\frac{T}{\delta}}{n^{3/2}(1-\gamma)^3}),
    \]
    whenever $2^m\gtrsim\Omega(dim\log(dim)+(\frac{\log\frac{T}{\delta}}{(1-\gamma)^2})^{2/3}+(\log\frac{\log T}{\delta})^2)$. Here, the expectation is taken over the distribution of market features.
\end{lemma}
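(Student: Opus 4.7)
The plan is to mirror the structure of the previous estimation lemma (the $\sigma_S>0$ case) and substitute the no-noise bound on $|\hat\beta-\beta|$ from Lemma~\ref{lem:beta_no_noise} at the end. First, I introduce the normalized class $\cG=\{(\Tilde f-f)^2/(4B^2):\Tilde f\in\cF\}$, whose elements take values in $[0,1]$, and apply Lemma~\ref{lem:lemma36} to translate between $\E g$ and the empirical average $\frac{1}{n}\sum g(x_i)$, up to an error controlled by the critical radius of $\cG$ together with $\cO(\log(\log T/\delta)/n)$. Lemma~\ref{lem:squareclass} and the covering-number bound from Assumption~\ref{ass:context} give $\bar\delta(\cG)\lesssim\sqrt{(dim/n)\log n}$, which contributes the $\frac{dim}{n}\log n$ term in the final bound. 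This reduces the problem to controlling $\frac{1}{n}\sum(\hat f(x_i)-f(x_i))^2$.

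Next I use the ERM optimality of $\hat f$: since $P_i^e-\hat\beta Q_i^e=f(x_i)+\epsilon_{Di}+\xi_{Di}+(\beta-\hat\beta)Q_i^e$, comparing $\hat f$ to $f$ yields
\[
\frac{1}{n}\sum(\hat f(x_i)-f(x_i))^2\le \frac{2}{n}\sum(\hat f(x_i)-f(x_i))\bigl(\epsilon_{Di}+\xi_{Di}+(\beta-\hat\beta)Q_i^e\bigr).
\]
The $\xi_{Di}$ term is bounded using $|\xi_{Di}|\lesssim\gamma^{(n-i+1)/2}/\sqrt{1-\gamma}$ from Proposition~\ref{prop:bound_strategy}, whose geometric sum produces the $\cO(1/(n(1-\gamma)))$ contribution. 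The $(\beta-\hat\beta)Q_i^e$ term is handled by the elementary inequality $2ab\le a^2/2+2b^2$: half of $\frac{1}{n}\sum(\hat f-f)^2$ is absorbed on the left, leaving a residual $2(\beta-\hat\beta)^2\bar Q^2$. Substituting Lemma~\ref{lem:beta_no_noise}'s bound $(\beta-\hat\beta)^2\lesssim \log(\log T/\delta)/\sqrt{n}+\log(T/\delta)/((1-\gamma)^2 n^{3/2})$ and $\bar Q^2\lesssim\log(T/\delta)+1/(1-\gamma)$ produces exactly the four remaining terms in the claimed bound by cross-multiplying.

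The main work, and the only nonroutine step, is bounding the noise cross term $\frac{1}{n}\sum(\hat f(x_i)-f(x_i))\epsilon_{Di}$ without assuming star-shapedness of $\cF-f$. Following the previous lemma's argument, I define the auxiliary class $\cF_f=\{\lambda(\Tilde f-f):\Tilde f\in\cF,\ \lambda\in[0,1]\}$, verify the covering bound $N(\cF_f,\|\cdot\|_\infty,\epsilon)\lesssim(1/\epsilon)^{dim+1}$ by discretizing the interval $[0,1]$ of $\lambda$'s on a grid of resolution $\epsilon/(4B)$, and observe that this class is star-shaped around $0$. Defining $Z(u,\cF_f)=\sup_{h\in\cF_f:\|h\|_n\le u}\langle h,\epsilon_D\rangle_n$, its $u/\sqrt n$-Lipschitzness in $\epsilon_D$ combined with the strong log-concavity of $\epsilon_{Dt}$ (via Lemma~\ref{lemma:3.16}) gives a high-probability deviation inequality between $Z$ and its expectation $G$. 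A peeling / rescaling argument (splitting into $\|h\|_n\le\bar\delta(\cF_f)$ and $\|h\|_n>\bar\delta(\cF_f)$) then yields $\sup_{h\in\cF-f}\{8\langle h,\epsilon_D\rangle_n-\|h\|_n^2\}\lesssim\bar\delta(\cF_f)^2+\log(\log T/\delta)/n$, where Lemma~\ref{lem:inner} and the incomplete gamma asymptotics give $\bar\delta(\cF_f)\lesssim\sqrt{(dim/n)\log n}$ (requiring $n\gtrsim dim\log(dim)$).

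The main obstacle is the noise cross term above: the usual route requires $\cF-f$ to be star-shaped, which is not assumed here. The homeomorphic embedding into $\cF_f$ resolves this at the cost of only one extra unit of effective dimension, which is harmless. Combining the ERM decomposition, the noise-term control, the $\hat\beta$ error transfer, and the empirical-to-population conversion via Lemma~\ref{lem:lemma36} gives the stated bound. The sample-size requirement $n\gtrsim\Omega(dim\log(dim)+(\log(T/\delta)/(1-\gamma)^2)^{2/3}+(\log(\log T/\delta))^2)$ arises as the union of the condition $n\gtrsim dim\log(dim)$ from the critical-radius estimate and the conditions inherited from Lemma~\ref{lem:beta_no_noise}.
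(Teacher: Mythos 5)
Your proposal is correct and follows essentially the same route as the paper: the paper's own proof of this lemma is just a two-line ``diff'' against the proof of the corresponding $\sigma_S>0$ lemma, replacing the bound on $\frac{2}{n}\sum(\hat\beta-\beta)^2\bar Q^2$ with the one from Lemma~\ref{lem:beta_no_noise}, and your cross-multiplication of the squared $\hat\beta$ error with $\bar Q^2$ reproduces exactly the four new terms. The full machinery you spell out (the normalized class $\cG$, the auxiliary star-shaped class $\cF_f$ with effective dimension $dim+1$, the peeling argument, and the sample-size condition as the union of the critical-radius and Lemma~\ref{lem:beta_no_noise} requirements) is precisely the argument the paper imports by reference.
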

\begin{proof}{Proof}
    Compared to the proof of \Cref{lem:bound_f_square}, we only need to change the bound for $\frac{2}{n}\sum(\hat\beta-\beta)^2\bar Q^2$. From \Cref{lem:beta_no_noise}, it now holds that
    \[
    \frac{2}{n}\sum(\hat\beta-\beta)^2\bar Q^2\lesssim\cO(
\frac{\log\frac{\log T}{\delta}\log\frac{T}{\delta}}{\sqrt{n}}
    +\frac{\log\frac{\log T}{\delta}}{\sqrt{n}(1-\gamma)}
    +\frac{(\log\frac{T}{\delta})^2}{n^{3/2}(1-\gamma)^2}
    +\frac{\log \frac{T}{\delta}}{n^{3/2}(1-\gamma)^3}
    ).
    \]
    Therefore, we know that
    \[
    \E(\hat f(x)-f(x))^2\lesssim\cO(\frac{dim}{n}\log n+\frac{\log\frac{\log T}{\delta}\log\frac{T}{\delta}}{\sqrt{n}}
    +\frac{\log\frac{\log T}{\delta}}{\sqrt{n}(1-\gamma)}
    +\frac{(\log\frac{T}{\delta})^2}{n^{3/2}(1-\gamma)^2}
    +\frac{\log \frac{T}{\delta}}{n^{3/2}(1-\gamma)^3}
    ),
    \]
    where we need an extra condition that $n\gtrsim\Omega(dim\log (dim))$. Together with the condition in \Cref{lem:beta_no_noise}, we need $n\gtrsim\Omega(dim\log(dim)+(\frac{\log\frac{T}{\delta}}{(1-\gamma)^2})^{2/3}+(\log\frac{\log T}{\delta})^2)$, which finishes our proof.
\end{proof}

\subsubsection{Proof of Theorem \ref{thm:no_noise}}
We also consider the regret in the $(m+1)$-episode and the start point and end point are $s$ and $e$, respectively.
Regret now comes from three sources. The first part is the estimation error of the optimal service fee, while the second part is from the artificial randomness. The last part is from the buyer's misreport. It holds that
\[
\text{Regret}_{m+1}\lesssim\cO(\sum_{i=1}^{2n}(\hat f(x_i)-f(x_i))^2+\epsilon_i^2+|\xi_{Di}a_i|).
\]

Analogous to the proof of \Cref{thm:partially}, we know that 
\[
\sum_{i=1}^{2n}|\xi_{Di}a_i|\lesssim\cO(\frac{\sqrt{\log\frac{T}{\delta}}}{1-\gamma}).
\]
Meanwhile, it holds that
\[
\sum_{i=1}^{2n}\epsilon_i^2\lesssim\cO(\sqrt{n}+\sqrt{\log n\log\frac{\log T}{\delta}}+\log\frac{\log T}{\delta}).
\]
Finally, we bound the term with respect to $\hat f$. It holds that
$$\begin{aligned}
    \sum_{i=1}^{2n}\E (\hat f(x_i)-f(x_i))^2 & \lesssim\cO\left(dim\log T\log\frac{e}{s}+(\log\frac{\log T}{\delta}\log\frac{T}{\delta}+\frac{\log\frac{\log T}{\delta}}{1-\gamma})(\sqrt{e}-\sqrt{s}) \right. \\
& \quad\quad \left. + (\frac{(\log\frac{T}{\delta})^2}{(1-\gamma)^2}
    +\frac{\log \frac{T}{\delta}}{(1-\gamma)^3})(\frac{1}{\sqrt{s}}-\frac{1}{\sqrt{e}})
\right).
\end{aligned}$$
Also, we know from \Cref{lem:theorem8} that with probability at least $1-\delta$,
$$\begin{aligned}
    &\quad \frac{1}{2n}\sum (\hat f(x_i)-f(x_i))^2-\E (\hat f(x_i)-f(x_i))^2\\
    &\lesssim\cO\left(\frac{\sqrt{dim\log\frac{\log T}{\delta}\log T}}{n}+\sqrt{\frac{\log^2(\frac{\log T}{\delta})\log\frac{T}{\delta}}{n^{3/2}}}+\sqrt{\frac{\log^2(\frac{\log T}{\delta})}{(1-\gamma)n^{3/2}}} \right. \\
    & \quad\quad \left.+\sqrt{\frac{\log\frac{\log T}{\delta}(\log\frac{T}{\delta})^2}{(1-\gamma)^2n^{5/2}}}+\sqrt{\frac{\log\frac{\log T}{\delta}\log\frac{ T}{\delta}}{(1-\gamma)^3n^{5/2}}}\right).
\end{aligned}$$

Combining all these terms, it holds that the total regret is bounded by

$$\begin{aligned}
    \sum_{i=1}^{\lceil\log_2 T\rceil}\text{Regret}_m & \lesssim\cO\left(dim\log^2 T+\sqrt{dim\log\frac{\log T}{\delta}\log T}\log T\right. \\
    & \quad\quad \left.+(\log\frac{\log T}{\delta}\log\frac{T}{\delta}+\frac{\log\frac{\log T}{\delta}}{1-\gamma})\sqrt{T} + \frac{(\log\frac{T}{\delta})^2}{(1-\gamma)^2}
    +\frac{\log \frac{T}{\delta}}{(1-\gamma)^3}\right).
\end{aligned}$$
Here, we need $2^{m}\gtrsim\Omega(dim\log(dim)+(\frac{\log\frac{T}{\delta}}{(1-\gamma)^2})^{2/3}+(\log\frac{\log T}{\delta})^2)$. As we've considered regret which originates from artificial randomness, violating this condition only brings $\cO(1)$ regret every round.
It then holds that 
$$\begin{aligned}
    \text{Regret}(T)\lesssim &\cO(dim\log^2 T+dim\log(dim)+\sqrt{dim\log\frac{\log T}{\delta}\log T}\log T\\
    & +(\log\frac{\log T}{\delta}\log\frac{T}{\delta}+\frac{\log\frac{\log T}{\delta}}{1-\gamma})\sqrt{T} + \frac{(\log\frac{T}{\delta})^2}{(1-\gamma)^2}
    +\frac{\log \frac{T}{\delta}}{(1-\gamma)^3}).
\end{aligned}$$

Setting $\delta=\frac{1}{10T}$, we know that with probability at least $1-\frac{1}{T}$,
\[
\text{Regret}(T)\lesssim\cO\left(dim\log(dim)+dim\log^2 T+\log^2 T\sqrt{T}+\frac{\log T\sqrt{T}}{1-\gamma}+\frac{\log^2 T}{(1-\gamma)^2}+\frac{\log T}{(1-\gamma)^3}
\right),
\]
which ends the proof.

\subsubsection{Proof of Theorem \ref{thm:lower_no_noise}}
We assume that the demand curve is now $P_{Dt}=\beta_0+\beta Q+\epsilon_{Dt}$ where $f$ is a constant function and $\epsilon_{Dt}$ is Gaussian. It's a special case for our general demand class. We also assume the buyer is myopic, say $\gamma=0$, as it's the loosest case.

Let's consider the following case. We assume the supply curve is $P_{St}=Q$ and the family of the demand curve parameterized by $\epsilon$ is $\{P_{\epsilon D}=20+5\epsilon-(1+\epsilon)Q+\epsilon_{\epsilon D}\}$ where $\epsilon_{\epsilon D}\sim\cN(0,(\frac{2+\epsilon}{2})^2)$.

The system is depicted completely by $(a_t,P_{\epsilon t}^e)$, where $Q_{\epsilon t}^e=P_{\epsilon t}^e-a_t$. Recall that $a_t$ depends on history up to $t-1$, namely, $\cH_{t-1}=\{h_\tau\}_{\tau=1}^{t-1}$. From \Cref{prop:equ}, we know that for any service fee $a_t$, transacted quantity $Q^e_{\epsilon t}=\frac{20+5\epsilon-a_t+\epsilon_{\epsilon Dt}}{2+\epsilon}$ and cutoff price $P^e_{\epsilon t}=\frac{20+5\epsilon+(1+\epsilon)a_t+\epsilon_{\epsilon Dt}}{2+\epsilon}$.
Then, we know that $P^e_{\epsilon t}\sim\cN(\frac{20+5\epsilon+(1+\epsilon)a_t}{2+\epsilon},\frac{1}{4})$ due to the definition of $\epsilon_{\epsilon Dt}$. Moreover, the ex-ante optimal service fee is $a^*_t=\frac{20+5\epsilon}{2}$. 

We now consider two situations when $\epsilon=0$ and $\epsilon=T^{-\frac{1}{4}}$, and we use $\PP_\epsilon$ to denote associated probability measure. Due to Gaussianity, it holds that
\[KL(\PP_0\|\PP_{\epsilon})=\E_a[KL(\PP_0(\cdot\given a)\|\PP_\epsilon(\cdot\given a))\given a]=\E_a[\frac{\epsilon^2(10-a)^2}{2(2+\epsilon)^2}],\]
where we use the law of iterated expectations.

During the whole $T$ rounds, we calculate the number of rounds with $|10-a_t|\ge 5\epsilon$ denoted by $T_0$. With a slight abuse of notation, we use $[T_0]$ to denote the corresponding index set. Let's consider the total regret associated with $T_0$ case by case.

On the one hand, when $T_0\ge \frac{T}{2}$, it holds that one-round suboptimality for $\epsilon=0$ is at least $\frac{(10-a_t)^2}{2}$ when $t\in[T_0]$. Then, we have
\[
    \text{Regret}_0(T)+\text{Regret}_\epsilon(T)\ge \text{Regret}_0(T)\ge T_0 \frac{25\epsilon^2}{2}\ge \frac{T}{2}\frac{25}{2\sqrt{T}}\gtrsim \Omega(\sqrt{T}).
\]
The first inequality holds due to the positivity of regret while the second inequality holds because when $t$ belongs to $[T_0]$, we have that $|10-a_t|\ge 5\epsilon$. The third inequality holds due to the assumption on $T_0$.

On the other hand, when $T_0<\frac{T}{2}$, there exist more than $\frac{T}{2}$ rounds such that $|10-a_t|\le 5\epsilon$. In these rounds, denoted by $[T-T_0]$, it holds that $KL(\PP_0\|\PP_\epsilon)\le \frac{25\epsilon^4}{2(2+\epsilon)^2}\le \frac{25\epsilon^4}{18}$ when $\epsilon\le 1$. The first inequality holds with no need to consider the distribution of $a_t$ because we use a union bound over all candidate $a_t$. Then, the total $KL$ among these $T-T_0$ rounds is no larger than $(T-T_0)\frac{25\epsilon^4}{18}\le \frac{25}{18}\lesssim\cO(1)$ as $\epsilon=T^{-1/4}$ in leverage of properties of KL divergence.
We define an event $A$ that in more than $\frac{T}{4}$ rounds, $a_t\ge \frac{40+5\epsilon}{4}$. Therefore, $A^c$ contains at least $\frac{T}{4}$ rounds that $a_t<\frac{40+5\epsilon}{4}$. Therefore,
$$\begin{aligned}
    \E_{[\cdot|T_0<\frac{T}{2}]}\left[\text{Regret}_0(T)+\text{Regret}_\epsilon(T)\right]&\ge \PP_0(A\given T_0<\frac{T}{2})\frac{T}{4}\frac{25\epsilon^2}{32}+\PP_\epsilon(A^c\given T_0<\frac{T}{2})\frac{T}{4}\frac{25\epsilon^2}{16(2+\epsilon)}\\
    &\ge \frac{25\epsilon^2 T}{192}(\PP_0(A\given T_0<\frac{T}{2})+\PP_\epsilon(A^c\given T_0<\frac{T}{2}))\\
    &\ge \frac{25\sqrt T}{192}\frac{1}{2}e^{-\frac{25}{18}}\gtrsim\Omega(\sqrt{T}).
\end{aligned}$$
The first inequality holds because the optimal service fee is $\frac{20+5\epsilon}{2}$ and one-round loss is $\frac{[a_t-(20+5\epsilon)/2]^2}{2+\epsilon}$. The second inequality holds due to $\epsilon\le 1$ while the third inequality holds due to \Cref{lem:KL+ineq}.

Therefore, it holds that by combining the two cases

$$\begin{aligned}
    & \quad \max\{\E\text{Regret}_0(T),\E\text{Regret}_\epsilon(T)\}  \ge \frac{1}{2}(\E\text{Regret}_0(T)+\E\text{Regret}_\epsilon(T))\\
    & \ge\min\{\E_{[\cdot|T_0\ge\frac{T}{2}]}\frac{\text{Regret}_0(T)+\text{Regret}_\epsilon(T)}{2},\E_{[\cdot|T_0<\frac{T}{2}]}\frac{\text{Regret}_0(T)+\text{Regret}_\epsilon(T)}{2}\}\\
    & \ge \frac{1}{2}\min\{\frac{25\sqrt{T}}{4},\frac{25\sqrt T}{384}e^{-\frac{25}{18}}\} \gtrsim\Omega(\sqrt{T}),
\end{aligned}$$
due to $\epsilon=T^{-1/4}$. Therefore, for any algorithm against $\epsilon=0$ and $\epsilon=T^{-1/4}$, at least one regret is no smaller than $\Omega(\sqrt{T})$, which ends the proof.
\subsection{Omitted Proof in Section \ref{sec:adaptive}}
\Cref{ass:eta} is equivalent to the following assumption. We present it only for the convenience of proof.
\begin{assumption}
    Assume that 
    there exists a constant $C$ such that $\eta_S\le C\sigma_S$.
\end{assumption}
\subsubsection{Lemmas Supporting Theorem \ref{thm:upper_ada}}
We first state the following lemma on hypothesis testing. We assume we output $\HH_0$ when the sample variance is smaller than $\frac{2}{\sqrt{T}}$ and $\HH_1$ otherwise.
\begin{lemma}\label{lem:T0}
    With probability at least $1-\frac{1}{2T}$, by setting $T_0=2048C^4\log(8T)$, it holds that
     when $\sigma_S^2\in[0,\frac{1}{\sqrt{T}}]$ will the Hypothesis Test output $\HH_0$. Similarly, when $\sigma_S^2\ge \frac{5}{\sqrt{T}}$, the Hypothesis Test will output $\HH_1$.
\end{lemma}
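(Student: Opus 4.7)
The plan is to prove that the sample variance
\[
\hat\sigma^2 := \frac{1}{T_0}\sum_{i=1}^{T_0}(\epsilon_{Si}-\bar\epsilon_S)^2 = \frac{1}{T_0}\sum_{i=1}^{T_0}\epsilon_{Si}^2 - \bar\epsilon_S^2,
\]
with $\bar\epsilon_S := \frac{1}{T_0}\sum_i \epsilon_{Si}$, concentrates tightly enough around $\sigma_S^2$ that the rule ``output $\HH_0$ iff $\hat\sigma^2 \le 2/\sqrt{T}$'' is correct in both regimes with overall failure probability at most $1/(2T)$. The unknown intercept $\alpha_0$ cancels out of the centered variance, so only the centered noise needs to be controlled. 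The tools will be \Cref{lem:square_subE} (to upgrade the $\eta_S$-sub-Gaussianity of $\epsilon_{Si}$ into $16\eta_S^2$-sub-exponentiality of $\epsilon_{Si}^2 - \sigma_S^2$), the Bernstein-type bound in \Cref{lem:subE} for the sum of squares, and the Hoeffding bound in \Cref{lem:hoeffding} for $\bar\epsilon_S$; I will union-bound the two resulting ``bad'' events at level $1/(4T)$ each.

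I will handle the two regimes separately, since each calls for a different deviation target. In the \emph{null regime} $\sigma_S^2 \le 1/\sqrt{T}$, \Cref{ass:eta} forces $\eta_S^2 \le C^2/\sqrt{T}$, so the sub-exponential scale $16\eta_S^2$ is itself $\cO(1/\sqrt{T})$; I will target an additive deviation $t = 1/(2\sqrt{T})$, verify that $T_0 = 2048C^4\log(8T)$ drives the Bernstein exponent above $\log(8T)$ in \emph{both} branches of the $\min$ in \Cref{lem:subE}, and observe that $\bar\epsilon_S^2 \le \eta_S^2\cdot 2\log(8T)/T_0$ is of even lower order. This yields $\hat\sigma^2 \le \sigma_S^2 + 1/\sqrt{T} \le 2/\sqrt{T}$, producing $\HH_0$. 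In the \emph{alternative regime} $\sigma_S^2 \ge 5/\sqrt{T}$, since $\sigma_S^2$ is no longer small I switch to a relative target $t = \sigma_S^2/2$; the general fact $\eta_S \ge \sigma_S$ (true for any sub-Gaussian variable, since $\mathbb{E}[X^2]$ is at most the squared sub-Gaussian parameter) places $t$ in the sub-Gaussian branch of the sub-exponential tail, and \Cref{ass:eta} yields a Bernstein exponent of $T_0\sigma_S^4/(2048\eta_S^4) \ge T_0/(2048C^4) = \log(8T)$. The resulting bound $\hat\sigma^2 \ge \sigma_S^2(1/2 - o(1))$ exceeds the threshold $2/\sqrt{T}$ because the buffer between $5\sigma_S^2/2$ and $2$ absorbs the lower-order error.

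The main obstacle will be the constant bookkeeping that ensures the chosen $t$ sits in the correct branch of the $\min$ in \Cref{lem:subE} uniformly in $\sigma_S$; I will address this by checking both $t^2/\eta^2$ and $t/\eta$ exponents and noting that the factor $2048C^4$ in $T_0$ is slack enough to dominate whichever one is active. The degenerate case $\sigma_S = 0$ is trivial, since then $\epsilon_{Si} \equiv 0$ almost surely and $\hat\sigma^2 = 0 < 2/\sqrt{T}$ deterministically, so no concentration argument is needed there.
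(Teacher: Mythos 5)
Your proposal is correct and follows essentially the same route as the paper's proof: the identical decomposition of the sample variance into $\frac{1}{T_0}\sum_i\epsilon_{Si}^2-\bar\epsilon_S^2$, the same use of \Cref{lem:square_subE}, \Cref{lem:subE}, and \Cref{lem:hoeffding} with a union bound at level $\frac{1}{4T}$ per event, and the same bookkeeping showing $T_0=2048C^4\log(8T)$ suffices under \Cref{ass:eta} in both regimes. The only cosmetic difference is that in the alternative regime you phrase the deviation target relatively as $\sigma_S^2/2$ and argue via $\sigma_S\le\eta_S$ that the quadratic branch of the Bernstein bound is active, whereas the paper writes out the $\max$ over both branches explicitly; the arithmetic is equivalent.
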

\begin{proof}{Proof}
    Since the threshold of the empirical variance is $\frac{2}{\sqrt{T}}$, we only need to prove that
    \[
    \frac{1}{T_0}\sum_{t=1}^{T_0}(\epsilon_{St}-\frac{1}{T_0}\sum\epsilon_{St})^2-\sigma_S^2\le\frac{1}{\sqrt{T}}.
    \]
    Note that $\E \epsilon_{St}=0$, it holds that we only need to prove when $\sigma_S^2\in[0,\frac{1}{\sqrt{T}}]$,
    \[
    \frac{1}{T_0}\sum_{t=1}^{T_0}\epsilon_{St}^2-\sigma_S^2+(\frac{1}{T_0}\sum_{t=1}^{T_0}\epsilon_{St})^2\le\frac{1}{\sqrt{T}}.
    \]
    For the first term, we know that $\epsilon_{St}^2-\sigma_S^2$ is $16\eta_S^2$-sub-exponential due to \Cref{lem:square_subE}. Therefore, it holds that with probability at least $1-\frac{1}{4T}$,
    \[
    |\frac{1}{T_0}\sum_{t=1}^{T_0}\epsilon_{St}^2-\sigma_S^2|\le \frac{1}{2\sqrt{T}},
    \]
    when $T_0\ge \max\{2\log(8T)*256\eta_S^4*4T,2\log(8T)*16\eta_S^2*2\sqrt{T}\}$ because of \Cref{lem:subE}.

    For the second term, it holds that when $T_0\ge 4\sqrt{T}\eta_S^2\log (8T)$, with probability at least $1-\frac{1}{4T}$,
    \[
    |\frac{1}{T_0}\sum_{t=1}^{T_0}\epsilon_{St}|\le\frac{1}{2\sqrt{T}}.
    \]
    Here, we use \Cref{lem:hoeffding}.

    Combining these two parts, we know that the error is at most $\frac{1}{\sqrt{T}}$. Since the threshold is $\frac{2}{\sqrt{T}}$, we know that as long as $\sigma_S^2\le\frac{1}{\sqrt{T}}$, it will be categorized correctly. Replacing $\sigma_S^2$ by its upper bound $\frac{1}{\sqrt{T}}$ and using the fact that $\eta_S\le C\sigma_S$, it then holds that when $T_0= 2048C^4\log(8T)$, the statement in \Cref{lem:T0} is correct with probability at least $1-\frac{1}{2T}$.

    Analogously, we know that the lower bound of the empirical variance is at least 
    \[
        \sigma_S^2-\frac{2\eta_S^2\log(8T)}{T_0}-\max\left\{\sqrt{\frac{2*256\eta_S^4\log(8T)}{T_0}},\frac{2*16\eta_S^2\log(8T)}{T_0}\right\}.
    \]
    Considering $C\ge 1$ and $T_0=2048C^4\log(8T)$, it holds that when $\sigma_S^2\ge \frac{5}{\sqrt{T}}$, the empirical variance will be larger than $\frac{2}{\sqrt{T}}$ with probability at least $1-\frac{1}{2T}$ and the test will output $\HH_1$.
\end{proof}

\subsubsection{Proof of Theorem \ref{thm:upper_ada}}
From \Cref{lem:T0}, we know that when $\sigma_S^2\in[0,\frac{1}{\sqrt{T}}]$, the test will output $\HH_0$; when $\sigma_S^2\in[\frac{1}{\sqrt{T}},\frac{5}{\sqrt{T}}]$, the test will output either $\HH_0$ or $\HH_1$; and when $\sigma_S^2\ge\frac{5}{\sqrt{T}}$, the test will output $\HH_1$.

When the outcome of the hypothesis testing is $\HH_1$, we know that from \Cref{thm:partially}, with probability at least $1-\frac{1}{2T}$, the regret of the following $T-T_0$ rounds is at most $\cO(dim\log(dim)+dim\log^2 T+
\frac{\log^3 T}{\sigma_S^2}
    +\frac{\log^2T}{\sigma_S^2(1-\gamma)^2}
    +\frac{\log T}{\sigma_S^2(1-\gamma)^3})$ (since $\sigma_S\simeq\eta_S$).
    For the first $T_0$ rounds, if we set service fee as $\frac{B-\alpha_0-\epsilon_{St}}{2}$, the one-round regret is at most $\cO(B^2)$.
    Therefore, it holds that
    $$
    \begin{aligned}
    \text{Regret}(T) 
    & \lesssim\cO\left(dim\log(dim)+dim\log^2 T + \frac{\log^3 T}{\sigma_S^2} +\frac{\log^2T}{\sigma_S^2(1-\gamma)^2} + \frac{\log T}{\sigma_S^2(1-\gamma)^3}\right)+\cO(T_0) \\
    & \lesssim\cO(\frac{\log^3 T}{\sigma_S^2}).
    \end{aligned}
    $$
The total probability of the success event is at least $1-\frac{1}{T}$.

We now consider the situation when the outcome is $\HH_0$. We know that with probability at least $1-\frac{1}{2T}$, the supply variance $\sigma_S^2\le\frac{5}{\sqrt{T}}$. Given this, we first bound $|\hat\beta-\beta|$. It holds that
$$
\begin{aligned}
\hat\beta-\beta = &\; 
\frac{\frac{1}{n}\sum (f(x_i)-\hat f(x_i))(\epsilon_{Si}+\alpha_0-\hat\alpha_0)}{\frac{1}{n}\sum Q_i^e(\epsilon_{Si}+\alpha_0-\hat\alpha_0-2\epsilon_i)}
+
\frac{\frac{1}{n}\sum \epsilon_{Di}(\epsilon_{Si}+\alpha_0-\hat\alpha_0)}{\frac{1}{n}\sum Q_i^e(\epsilon_{Si}
+\alpha_0-\hat\alpha_0-2\epsilon_i)}
\\
& +
\frac{\frac{1}{n}\sum \xi_{Di}(\epsilon_{Si}+\alpha_0-\hat\alpha_0)}{\frac{1}{n}\sum Q_i^e(\epsilon_{Si}+\alpha_0-\hat\alpha_0-2\epsilon_i)}
+\frac{\frac{1}{n}\sum (f(x_i)-\hat f(x_i))(-2\epsilon_i)}{\frac{1}{n}\sum Q_i^e(\epsilon_{Si}+\alpha_0-\hat\alpha_0-2\epsilon_i)}
\\
& +
{\frac{\frac{1}{n}\sum \epsilon_{Di}(-2\epsilon_i)}{\frac{1}{n}\sum Q_i^e(\epsilon_{Si}+\alpha_0-\hat\alpha_0-2\epsilon_i)}}+
\frac{\frac{1}{n}\sum \xi_{Di}(-2\epsilon_i)}{\frac{1}{n}\sum Q_i^e(\epsilon_{Si}+\alpha_0-\hat\alpha_0-2\epsilon_i)}.
\end{aligned}
$$
From the proof of \Cref{lem:beta_noise}, we know that the sum of the numerators of the first three terms is bounded by $\cO(\sqrt{\frac{\log T}{n\sqrt{T}}})$. Here, we use the fact that $\eta_S^2\le C^2\sigma_S^2\le\frac{5C^2}{\sqrt{T}}$. From the proof of \Cref{lem:beta_no_noise}, it holds that the sum of the numerator of the last three terms is at most $\cO(\sqrt{n^{-3/2}\log T})$.

We now turn to calculate the denominator $\frac{1}{n}\sum Q_i^e(\epsilon_{Si}+\alpha_0-\hat\alpha_0-2\epsilon_i)$. We only need to replace $\epsilon_{Si}$ in the proof of \Cref{lem:beta_noise} by $\epsilon_{Si}-2\epsilon_i$ which is a $\sqrt{\sigma_S^2+4\sigma_i^2}$-sub-Gaussian random variable. Similarly, we know the denominator is at least $\Omega(\sigma_S^2+\frac{1}{\sqrt{n}})$.

Consequently, we know that 
\[
|\hat\beta-\beta|\lesssim\cO(\sqrt{n^{-1/2}\log T}),
\]
with probability $1-\Theta(\frac{1}{T})$.

Second, we employ the same process as the proof of \Cref{thm:no_noise}. It then holds that 
\[
\text{Regret}(T)\lesssim\cO(T_0)+\cO(\sqrt{T}\log^2 T)\lesssim\cO(\sqrt{T}\log^2 T).
\]
By adjusting the constants hidden in the $\cO(\cdot)$, we can achieve the first bound with probability at least $1-\frac{1}{2T}$. Also, we know the probability related to the second term is at least $1-\frac{1}{2T}$. The total success event happens with probability at least $1-\frac{1}{T}$, which ends the proof.

Recall that when $\sigma_S^2\in[\frac{1}{\sqrt{T}},\frac{5}{\sqrt{T}}]$, it can be categorized to either $\HH_0$ or $\HH_1$. So, we summarize the relationship between regret and noise variance $\sigma_S^2$ as follows. With probability at least $1-\frac{1}{T}$, it holds that
\[
\text{Regret}(T)\lesssim\left\{\begin{array}{cc}
\cO(\sqrt{T}\log^2 T) & \text { when } \sigma_S^2\in[0,\frac{1}{\sqrt{T}}] \\
\cO(\sqrt{T}\log^2 T) & \text { when } \sigma_S^2\in[\frac{1}{\sqrt{T}},\frac{5}{\sqrt{T}}]\text{ and } \HH_0 \\
\cO(\sqrt{T}\log^3 T) & \text { when } \sigma_S^2\in[\frac{1}{\sqrt{T}},\frac{5}{\sqrt{T}}]\text{ and } \HH_1 \\
\cO(\frac{\log^3 T}{\sigma_S^2}) & \text { when } \sigma_S^2\in[\frac{5}{\sqrt{T}},\infty).
\end{array}\right.
\]

Notice that when $\sigma_S^2\in[\frac{1}{\sqrt{T}},\frac{5}{\sqrt{T}}]$, adding artificial randomness or not results in slightly different results. It may inspire us to change the threshold of hypothesis testing from $\Theta(\frac{1}{\sqrt{T}})$ to $\Theta(\frac{\log T}{\sqrt{T}})$. However, since the logarithmic terms highly depend on the form of used concentration inequalities, we ignore this little mismatch for clearness. Interested readers may refer to \citet{audibert2009minimax} for a method to remove the extraneous logarithmic factor.

\subsubsection{Proof of Theorem \ref{thm:lower}}
Similarly, we assume that the demand curve is a linear function and function class $\cF$ contains the intercept, namely, $f(x)=\beta_0$. Let $\tilde\beta=(\beta_0, \beta)$ and $\epsilon_{Dt}\sim\mathcal N(0, \sigma_D^2)$ with $\sigma_D=(\alpha_1-\beta)\sigma$ for some $\sigma>0$. To express concisely, we use subscript $\tilde\beta$ and superscript $\pi$ to denote parameter and policy, respectively. We know that
$$
    a_t^*(\tilde\beta) = \frac{\beta_0-\alpha_0-\epsilon_{St}}{2}.
$$
Also, when the decision maker sets any $a_t$, we observe
$$
    Q_t^e=\frac{\beta_0+\epsilon_{Dt}-a_t-\alpha_0-\epsilon_{St}}{\alpha_1-\beta}
$$
Note that the only unknown randomness comes from $\epsilon_{Dt}$ since $\alpha_0+\epsilon_{St}$ is always observable. Meanwhile, upon deciding $a_t$ and observing $Q_t^e$ and $\alpha_0+\epsilon_{St}$, the price from both the demand and supply side can be uniquely decided. Therefore, the log-likelihood prior to time $t$ can be calculated as:
$$
    \mathcal L_{t-1}(\tilde\beta) = \sum_{i=1}^{t-1}-\frac{1}{2\sigma^2}\left(Q_i^e-\frac{\beta_0-a_i-\alpha_0-\epsilon_{Si}}{\alpha_1-\beta}\right)^2 + C,
$$
where $C$ is a constant only dependent on $\{a_i, Q_i^e\}_{i=1}^{t-1}$ and \emph{not} dependent on $\tilde\beta$. The Fisher information matrix prior to time $t$ can be calculated as
$$
    \mathcal I_{t-1}^\pi(\tilde\beta) = \mathbb E_{\tilde\beta}^\pi\left[-\partial^2\mathcal L_{t-1}(\tilde\beta) / \partial \Tilde{\beta}^2\right] = \frac{1}{\sigma^2}\mathbb E_{\Tilde{\beta}}^\pi\left[\sum_{i=1}^{t-1}\left[
    \begin{matrix}
        \frac{1}{(\alpha_1-\beta)^2} & -\frac{\beta_0-a_i-\alpha_0-\epsilon_{Si}}{(\alpha_1-\beta)^2} \\
        -\frac{\beta_0-a_i-\alpha_0-\epsilon_{Si}}{(\alpha_1-\beta)^2} & \frac{\left(\beta_0-a_i-\alpha_0-\epsilon_{Si}\right)^2}{(\alpha_1-\beta)^2}
    \end{matrix}
    \right]\right].
$$

Let $\lambda$ be an absolutely continuous density on $\Theta$, taking positive values on the interior of $\Theta$ and zero on its boundary (see, e.g., \citet{keskin2014dynamic}). Then, the multivariate Van Trees inequality
\citep{gill1995applications} implies that
\begin{equation} \begin{aligned} \label{eq:van-trees}
    \mathbb E_\lambda\left[\mathbb E_{\tilde\beta}^\pi\left[(a_t-a_t^*(\tilde\beta))^2\right]\right] \geq \frac{\left(\mathbb E_\lambda\left[C(\tilde\beta)(\partial a_t^*(\tilde\beta)/\partial \tilde\beta)^\top\right]\right)^2}{\mathbb E_\lambda\left[C(\tilde\beta)\mathcal I_{t-1}^\pi(\tilde\beta)C(\tilde\beta)^\top\right] + {\mathcal I}(\lambda)},
\end{aligned} \end{equation}
where ${\mathcal I}(\lambda)$ is the Fisher information corresponding to $\lambda$.  

Let $C(\tilde\beta) = [(\beta_0-\alpha_0)/2, 1]$. Then we have
$$
    C(\tilde\beta)(\partial a_t^*(\tilde\beta)/\partial \tilde\beta)^\top = [(\beta_0-\alpha_0)/2, 1][1/2, 0]^\top = (\beta_0-\alpha_0)/4 \eqsim \Omega(1)
$$
and
$$
\begin{aligned}
    C(\tilde\beta)\mathcal I_{t-1}^\pi(\tilde\beta)C(\tilde\beta)^\top & = \frac{1}{\sigma^2(\alpha_1-\beta)^2}\mathbb E_{\tilde\beta}^\pi\left[\sum_{i=1}^{t-1}\left(\frac{\beta_0-\alpha_0}{2} - (\beta_0-a_i-\alpha_0-\epsilon_{Si})\right)^2\right] \\
    & = \frac{1}{\sigma^2(\alpha_1-\beta)^2}\sum_{i=1}^{t-1}\mathbb E_{\tilde\beta}^\pi\left[\left(a_i-\frac{\beta_0-\alpha_0-\epsilon_{Si}}{2} + \frac{\epsilon_{Si}}{2}\right)^2\right] \\
    & \lesssim \cO\left(\sum_{i=1}^{t-1}\left(\mathbb E_{\tilde\beta}^\pi\left[(a_i-a_i^*(\tilde\beta))^2\right] + \sigma_S^2\right)\right).
\end{aligned}
$$
Plugging the inequalities above into Inequality \eqref{eq:van-trees}, we know that there exists a positive constant $c$ such that
$$
    \mathbb E_{\lambda}\left[\mathbb E_{\tilde\beta}^\pi\left[(a_t-a_t^*(\tilde\beta))^2\right]\right] \geq \frac{2c}{1+\sum_{i=1}^{t-1}\mathbb E_{\lambda}\left[\mathbb E_{\tilde\beta}^\pi\left[(a_i-a_i^*(\tilde\beta))^2\right]\right] + (t-1)\sigma_S^2}.
$$
From now on, for notation simplicity, we denote $\Delta^{[t, T]}=\sum_{i=t}^T\mathbb E_{\lambda}\left[\mathbb E_{\tilde\beta}^\pi\left[(a_i-a_i^*(\tilde\beta))^2\right]\right]$. Summing the formula above from $t$ to $T$, we know that
\begin{equation} \begin{aligned} \label{eq:van-trees-simplified}
    \Delta^{[t, T]} \geq \sum_{i=t-1}^{T-1}\frac{2c}{1+\Delta^{[1, i]}+\sigma_S^2\cdot i}.
\end{aligned} \end{equation}
Then $\sup_{\tilde\beta}{\text{Regret}}_{\tilde\beta}^\pi(T)\eqsim\Theta(\Delta^{[1, T]})$. We show the lower bound in two cases.

\noindent \underline{(a). $T\leq 1+1/\sigma_S^4$.} Then from Inequality \eqref{eq:van-trees-simplified} we have
$$
    \Delta^{[1, T]} \geq \sum_{i=1}^{T-1}\frac{2c}{1+\Delta^{[1, i]}+\sigma_S^2\cdot i} \geq \frac{2c(T-1)}{1+\Delta^{[1, T]} + \sqrt{T}} \geq \frac{cT}{\Delta^{[1, T]} + 2\sqrt{T}}.
$$
Therefore, 
$$
    (\Delta^{[1, T]}+\sqrt{T})^2 \geq (1+c)T,
$$
which indicates that there exists a constant $c_0=\sqrt{1+c}-1>0$ such that $\Delta^{[1, T]}\geq c_0\sqrt{T}$.

\noindent \underline{(b). $T > 1+1/\sigma_S^4$.} Let $K$ be the smallest positive integer such that $2^K \geq 1+1/\sigma_S^4$. Then $2^{K-1} < 1 + 1/\sigma_S^4$, which indicates that
$$
    K+1 = K-1 + 2 < \log_2(1+1/\sigma_S^4) + 2 \leq 6(1+\log _+(1/\sigma_S)),
$$
where we use $\log_+(\cdot)$ to denote $\max\{0,\log(\cdot)\}$.

From (a), we know that for $t=2^K$, we have
$$
    \Delta^{[1, t]} \geq c_0 \sqrt{1+1/\sigma_S^4} \geq c_0\frac{1}{\sigma_S^2} \geq \frac{c_1}{1+\log _+(1/\sigma_S)}\frac{K}{\sigma_S^2},
$$
where $c_1$ is a small positive constant irrelevant with $K$ and $\sigma_S$ such that $c_1(2+6c_1) \leq c$.

Now we use induction. Suppose we have
\begin{equation} \begin{aligned} \label{eq:lower-noise}
    \Delta^{[1, 2^k]} \geq \frac{c_1}{1+\log _+(1/\sigma_S)}\frac{k}{\sigma_S^2}
\end{aligned} \end{equation}
for some $k\geq K$. From Inequality \eqref{eq:van-trees-simplified} we have
$$
    \Delta^{[2^k+1, 2^{k+1}]}\geq \frac{(2^{k+1}-2)c}{1+\Delta^{[1, 2^{k+1}]}+\sigma_S^2\cdot (2^{k+1}-1)} \geq \frac{c}{2^{-k}+\frac{\Delta^{[1, 2^{k+1}]}}{2^k} + \frac{3}{2}\sigma_S^2}\geq \frac{c}{\frac{\Delta^{[1, 2^{k+1}]}}{2^k} + 2\sigma_S^2}.
$$
Note that we have utilized the following inequality:
$$
    2^{-k} \leq \frac{\sigma_S^4}{1+\sigma_S^4}\leq \frac{\sigma_S^4}{2\sigma_S^2} = \sigma_S^2 / 2.
$$
As a result, 
$$
    \Delta^{[1, 2^{k+1}]}\geq \frac{c_1}{1+\log _+(1/\sigma_S)}\frac{k}{\sigma_S^2} + \frac{c}{\frac{\Delta^{[1, 2^{k+1}]}}{2^k} + 2\sigma_S^2}.
$$
If $\Delta^{[1, 2^{k+1}]} < \frac{c_1}{1+\log _+(1/\sigma_S)}\frac{k+1}{\sigma_S^2}$, then
$$
    \frac{c}{\frac{\Delta^{[1, 2^{k+1}]}}{2^k} + 2\sigma_S^2}  > \frac{c}{\frac{c_1}{1+\log _+(1/\sigma_S)}\frac{k+1}{2^k\sigma_S^2} + 2\sigma_S^2} \geq \frac{c}{\frac{c_1}{1+\log _+(1/\sigma_S)}\frac{K+1}{2^K\sigma_S^2} + 2\sigma_S^2}  \geq \frac{c}{(2+6c_1)\sigma_S^2}  \geq \frac{c_1}{\sigma_S^2}.
$$
This causes a contradiction. Therefore, for any $k\geq K$, Inequality \eqref{eq:lower-noise} holds. Now select any $T > 1+ 1/\sigma_S^4$. If $T< 2^K$, then
$$
    \Delta^{[1, T]}\geq c_0\sqrt{1+1/\sigma_S^4} \geq \frac{c_1}{1+\log _+(1/\sigma_S)}\frac{K}{\sigma_S^2} \geq \frac{c_1}{1+\log _+(1/\sigma_S)}\frac{\log  T}{\sigma_S^2}.
$$
If $T\geq 2^K$, choose $k=\lfloor \log_2 T\rfloor$, then
$$
    \Delta^{[1, T]} \geq \Delta^{[1, 2^k]} \geq \frac{c_1}{1+\log _+(1/\sigma_S)}\frac{k}{\sigma_S^2} \geq \frac{c_1/2}{1+\log _+(1/\sigma_S)}\frac{\log  T}{\sigma_S^2}.
$$
Combining all these cases finishes the proof.

\section{Omitted Proof in Section \ref{sec:practice}}
\subsection{Omitted Proof in Section \ref{sec:linear}}
\subsubsection{Proof of Theorem \ref{thm:linear_class}}
Note that $x_t\in\cX=[0,1]^d$, when $\|\theta_f\|\le \frac{B}{\sqrt{d}}$, it holds that
\[
|f(x)|\le\|x\|*\|\theta_f\|\le\sqrt{d}*\frac{B}{d}=B,
\]
because of $f(x)=\langle x,\theta_f\rangle$ and the triangle inequality.

For the effective dimension, let's consider the following set 
\[
\cS=\left\{-\lceil \frac{B\sqrt{d}}{\epsilon}\rceil*\frac{\epsilon}{d},...,-\frac{\epsilon}{d},0,\frac{\epsilon}{d},\frac{2\epsilon}{d},...,\lceil \frac{B\sqrt{d}}{\epsilon}\rceil*\frac{\epsilon}{d}\right\}.
\]
We restrict all entries of $\theta_f$ can only be a lattice point in $\cS$. Therefore, the cardinality is at most $(1+2\lceil \frac{B\sqrt{d}}{\epsilon}\rceil)^d$. Then, for every unrestricted $\theta_f$, there exists a restricted $\tilde\theta_f$ such that the distance between their $i$-th components is at most $\frac{\epsilon}{d}$. Therefore, we have $\|\langle x,\theta_f\rangle-\langle x,\tilde\theta_f\rangle\|_\infty\le d*\frac{\epsilon}{d}=\epsilon$. It yields that $\log(N(\cF,\|\cdot\|_\infty,\epsilon))\lesssim \cO(d\log(1/\epsilon))$.

For the lower bound, notice that for a $d$-dimensional ball with radius $r$, the volume is proportional to $r^d$. Therefore, we know that $N(\cF,\|\cdot\|_\infty,\epsilon)\gtrsim\Omega((1/\epsilon)^d)$, finishing our proof.

We remark that this is the worst-case effective dimension. When $\cX=[0,1]^d$, this bound is tight. However, when $\cX$ has volume 0 in this $d$-dimensional space, the effective dimension may be smaller than $d$.

\subsection{Omitted Proof in Section \ref{sec:MLP}}
\subsubsection{Proof of Theorem \ref{thm:MLP}}
Since $x_t\in[0,1]^d$, we know that $\|x_t\|_\infty\le 1$. Then, it holds that $\|h_1(x_t)\|_\infty=\|W_1x_t+b_1\|_\infty\le d_0b+b$. Recall that $ReLU$ is a truncation towards zero, so $\|ReLU\circ h_1(x_t)\|_\infty\le (d_0+1)b\le (W+1)b$. Consequently, we've contained the upper bound for 1-layer networks. Assuming for $l$-layer networks, we have $\|h_l\circ ReLU\circ\cdots\circ ReLU\circ h_1(x_t)\|_\infty\le (W+1)^lb^l$, it then holds that for a $(l+1)$-layer network, $\|h_{l+1}\circ ReLU\circ h_l\circ ReLU\circ\cdots\circ ReLU\circ h_1(x_t)\|_\infty\le d_l*b*(W+1)^lb^l+b\le (W+1)^{l+1}b^{l+1}$. We hence conclude that for a $L$-layer network, it holds that $\|f\|_\infty=\|h_L\circ ReLU\circ\cdots\circ ReLU\circ h_1(\cdot)\|_\infty\le (W+1)^Lb^L\le B$.

As for the covering number, when $\epsilon\in(0,\frac{1}{2})$, we know from Theorem 2.1 in \citet{ou2024covering} that $\log(N(\cF,\|\cdot\|_\infty,\epsilon))\le 30W^2 L\log(\frac{B}{\epsilon})$. Note that all functions in $\cF$ are bounded by $B$, when $\epsilon\ge 2B$, the covering number will be one. When $\epsilon\in[\frac{1}{2},2B]$, it holds that $\log(N(\cF,\|\cdot\|_\infty,\epsilon))\le \log(N(\cF,\|\cdot\|_\infty,\frac{1}{4}))\le 30W^2L\log(\frac{8B^2}{\epsilon})$. Therefore, we know that $dim\le 30W^2L$.

When both $W$ and $L$ are no less than 60, we know that from Theorem 2.1 in \citet{ou2024covering}, $\log(N(\cF,\|\cdot\|_\infty,\epsilon))\ge \frac{1}{92160000}W^2 L\log(\frac{B}{\epsilon})$. Hence, $dim\ge\frac{W^2L}{92160000}$, yielding $dim\eqsim\Theta(W^2L)$, and this finishes the proof.
\section{Details of Experiments}
\subsection{Simulation Setup}\label{app:exp}
In the simulation, we choose the supply curve $P_{St}=Q+\epsilon_{St}$, where $\epsilon_{St}\sim\cN(0,\sigma_S^2)$. We randomly choose a $f\in\cF$ and set $P_{Dt}=f(x_t)-Q+\epsilon_{Dt}$, where $\epsilon_{Dt}\sim\cN(0,1)$. To initialize $f$, we randomly sample from $\cN(0,1)$ for the weights of the first two layers. To guarantee the cutoff price is positive most of the time, we sample the last-layer weights from $\text{Unif}(0,1)$ without bias. Moreover, we sample $x_t$ uniformly between $[0,1]$ along each dimension. 
For the \texttt{Adam}, we choose the learning rate to be 0.001 and optimize 100 epochs every time.
Since the misreport only brings exponentially small nuisances, we set $\gamma=0$ and focus on the relationship between regret and supply noise level. 
We also test the case with $\gamma = 0.9$. Following the proof of \Cref{prop:bound_strategy}, we add a perturbation $\xi_{Dt} \sim \text{Unif}(-\gamma^{s/2},\gamma^{s/2})$ to the demand, and obtain similar results as shown in \Cref{fig:gamma}.
Finally, we set the random seed to 42 to ensure reproducibility.

\begin{figure}[!ht]
\centering
\begin{minipage}[t]{0.465\textwidth}
\centering
    \includegraphics[width=\linewidth]{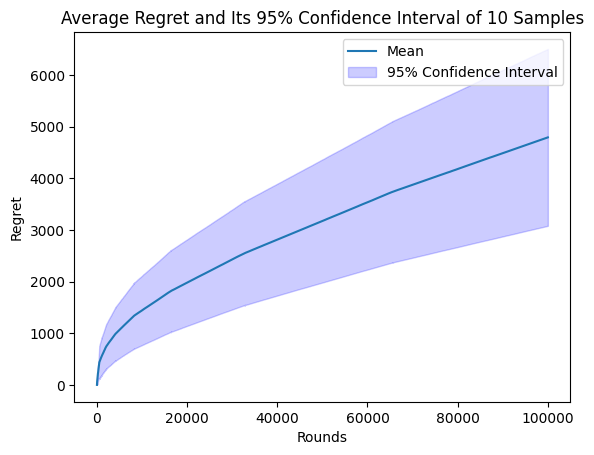}
\end{minipage}
\hspace{0.5cm}
\begin{minipage}[t]{0.485\textwidth}
\centering
    \includegraphics[width=\linewidth]{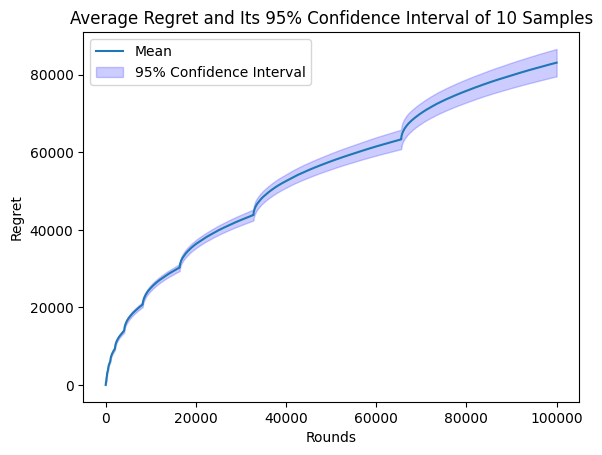}
\end{minipage}
\caption{Regret of \Cref{alg:AaaIV} when $\gamma=0.9$. Left: $\sigma_S^2 = 1$; Right: $\sigma_S^2 = 0$. }
\label{fig:gamma}
\end{figure}

In the first simulation (c.f. \Cref{fig:log,fig:sqrt}), since we know $\sigma_S^2$ in advance, there is no need to do hypothesis testing. Hence, we only use 10 rounds to warm up. Later, when we update our service fee policy, we need at least 100 data points to preserve stability. We use the fact that $\beta$ is negative, and we use zero to truncate it when the estimated $\hat\beta$ is positive.

In the second simulation (c.f. \Cref{fig:phase_sigmaS}), we now set $T=10{,}000$ and, as we don't know $\sigma_S^2$ now, we set $T_0=100$, which is around $10\log(T)$, to do hypothesis testing. When the empirical variance is larger than $\frac{10}{\sqrt{T}}$, we output $\HH_1$ and otherwise $\HH_0$. When categorized in $\HH_0$, we add artificial randomness sampled from $\cN(0,\frac{10^2}{\sqrt{\text{size}(\cD)+1}})$. 
Consistent with the high-probability guarantee, 198 of the 200 simulated trajectories (99\%) exhibit the predicted phase-transition pattern. The two exceptional trajectories, at $\sigma_S^2=0.1560$ and $\sigma_S^2=0.2704$ with regrets 56435.92 and 75731.14, are omitted from the visualization to preserve the scale of the typical trajectories.
Besides, for the LOWESS, we choose moderate 0.3 as the hyperparameter, namely, fraction. 
This follows the standard LOWESS convention in \citet{cleveland1979robust}.

\subsection{Empirical Study Setup}\label{app:emp}
The Talabat dataset contains individual orders in 21 Egyptian cities, including Al Mahallah Al Kubra, Alexandria, Assiut, Banha, Cairo, Damanhour, Damietta, Damietta New, Hurghada, Ismailia, Mansoura, Minya, Port Said, Ras El Bar, Shebeen El Koom, Suez, Suhag, Talkha, Tanta, Zagazig and ``Other''. The dataset includes both successful and failed orders. In our study, we retain only successful orders and exclude failed ones.
We gather over hours the successful orders to simulate the market transacted $Q_t^e$. We use the amount a customer pays for an order as the price and the difference between it and the total value of the goods included in the order, including taxes, before any discounts, as the service fee.
To account for occasional extreme values in real-world transaction data, we approximate both the price and the service fee used in our analysis by averaging them within each hour.
Note that the service fee includes a platform fee (as of May 2026, E\pounds8 per order), delivery partner fee, restaurant charges, and any applicable coupons. We also use simple averages to construct features. For example, if in one hour, 30\% of orders are in Alexandria while 70\% are in Cairo, the Alexandria dummy will be 0.3. Although Talabat is a food delivery app, it actually contains seven kinds of items, including pharmacy, cosmetics, electronics, flowers, grocery, pet shop and food (constituting 83.47\%). We also include dummy variables indicating discounts and first-time customers. Besides, we control for the delivery time and drop-off distance. Finally, we remove observations with missing values.

We use the service fee as an instrument to estimate the demand curve. For ordinary least squares, the estimated $\hat\beta$ is -0.02 with p-value 0.81. It is unreliable, highlighting the importance of using actions as instrumental variables in demand learning. Moreover, to calibrate the influence of the service fee on quantity, we conduct a linear regression using the test set. It serves as a robustness check for our algorithm under a potential distribution shift between the training and test sets. Since, for example, the platform fee is fixed, we regard the confounding to be negligible, as little strategic pricing is observed in the dataset between service fees and transacted quantities. The estimated result is $Q_t^e=-0.12 a_t+\text{controls}$, and we also perform a calibration using the residuals. This aligns with the economic fact that higher service fees are shared between sellers and buyers. 

Since we do not have access to supply data, we construct an offline, model-based counterfactual within the maintained linear market-clearing model. We calibrate the supply intercept to match the observed test-set average quantity (28.92) and choose counterfactual service fees to maximize model-implied revenue subject to the observed average service fee (E\pounds4.90). Under this linear specification, market-clearing quantity is affine in the service fee, so fixing the average fee also fixes average quantity. The exercise therefore isolates the redistributive component of the fitted model, but does not isolate revenue causally attributable to demand learning.


Under the fitted model, average hourly counterfactual revenue is approximately 600 rather than approximately 100 in the observed data.
More specifically, our algorithm raises the service fee during peak demand hours such as dinnertime, which is consistent with our theoretical framework and thereby improves overall revenue. One potential explanation is that Talabat might be competing for customer traffic, so revenue may not be the platform’s primary concern during that period. This may also account for the relatively low proportion of the service fee. 
Moreover, variations in supply, such as increased availability during dinner hours, may affect service fees; ignoring such factors could lead to an overestimation of our algorithm's impact.
Also, our definition of service fee includes both the platform’s revenue and a portion of the restaurant’s income, which may lead to an overestimation of the algorithm’s performance due to the underlying allocation.

\section{Additional Experiments} \label{sec:add-exp}

\subsection{Ablation: How Doubly-Robust Machine Learning Helps}\label{app:ablation}

In \Cref{sec:method}, we show that we can use either \Cref{eq:IV}  $\E [(P_{t}^e-\beta Q_t^e-f(x_t))a_t]=0$ or \Cref{eq:IV_robust}   $ \E [(P_{t}^e-\beta Q_t^e-f(x_t))(a_t-\E[a_t\given x_t])]=0$  as the structural function. The latter method utilizes the idea of so-called doubly-robust machine learning that, as long as one estimate is asymptotically correct, the estimation of $\hat\beta$ will be accurate. Here, we set $T=10,000$, so we use 4096 samples to estimate $\hat\beta$ in the last update. The true value of $\beta$ is -1. 


For the ablation study (c.f. \Cref{fig:beta_doubly_no,fig:beta_action}), for each setup, we sample 1000 estimates $\hat\beta$ to gather the empirical distribution. We choose $\sigma_S^2=1$ and use 10 rounds to warm up.
When we draw the histogram of $\hat\beta$, we use 50 bins. For the kernel density estimation, we directly use the default \texttt{sns.kdeplot} without further fine-tuning. We also test the case using zero truncation but without doubly-robust machine learning (c.f. \Cref{fig:beta_action_trun}) and find that the empirical distribution is biased towards zero and inferior to the one with doubly-robustness.


\begin{figure}[!ht]
\begin{minipage}[t]{0.48\textwidth}
\centering
    \includegraphics[width=\linewidth]{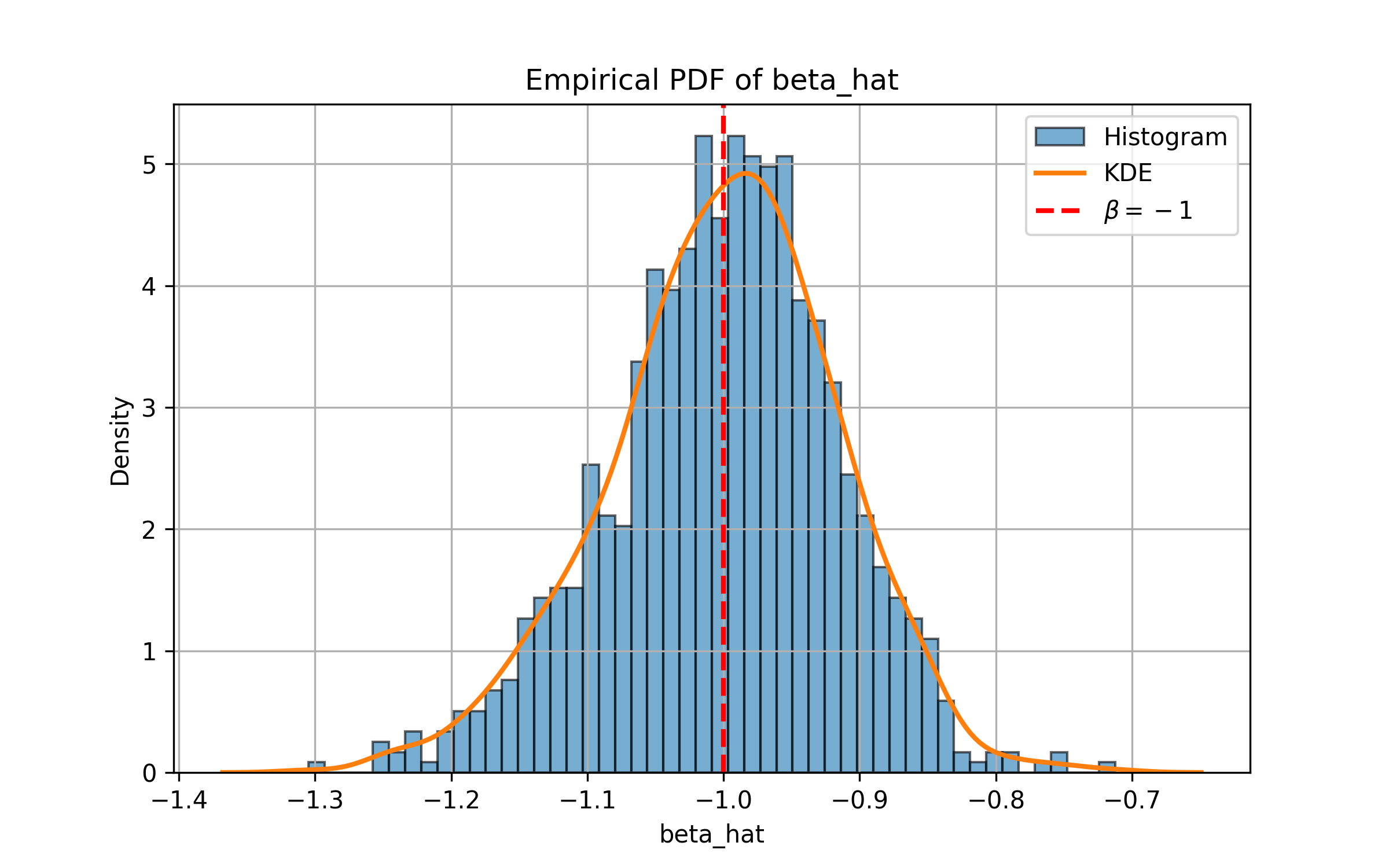}
    \caption{Estimating $\hat\beta$ with doubly-robust ML.}
    \label{fig:beta_action}
\end{minipage}
\hspace{0.5cm}
\begin{minipage}[t]{0.48\textwidth}
\centering
    \includegraphics[width=\linewidth]{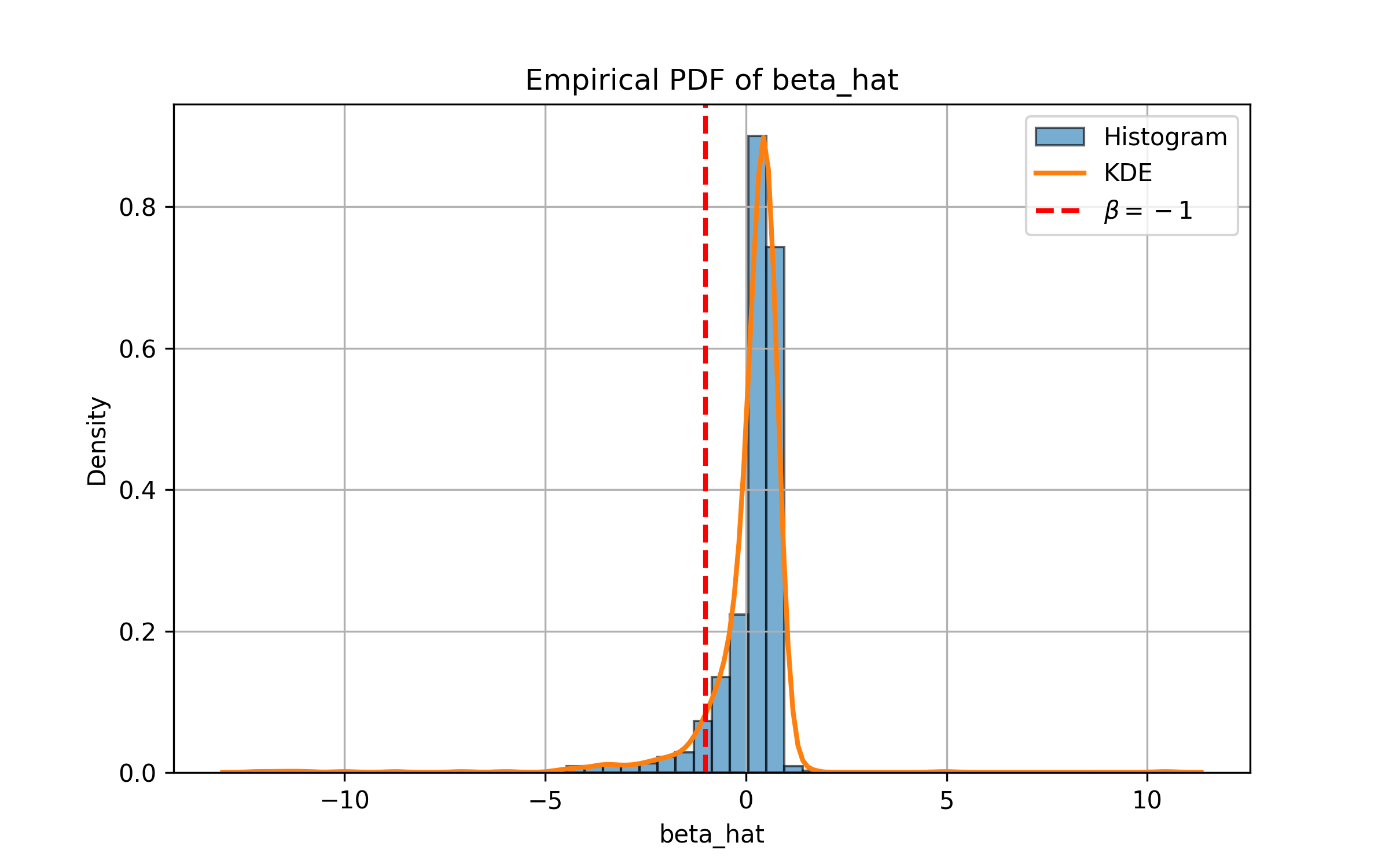}
    \caption{Estimating $\hat\beta$ without doubly-robust ML.}
    \label{fig:beta_doubly_no}
\end{minipage}
\end{figure}

We visualize the empirical distribution of $\hat\beta$ using doubly-robust machine learning in \Cref{fig:beta_action} and conduct the kernel density estimation (KDE). It shows that the empirical distribution is approximately a normal distribution centered at the true value -1, aligned with our theoretical analysis.  
Without the doubly-robust correction, the estimation of $\beta$ is unstable and biased.
As shown in \Cref{fig:beta_doubly_no}, the mode of the empirical $\hat\beta$ is even positive. Hence, in practice, adopting doubly-robust machine learning in demand learning is necessary and quite beneficial. 

\begin{figure}[!ht]
\begin{minipage}[t]{0.48\textwidth}
\centering
    \includegraphics[width=\linewidth]{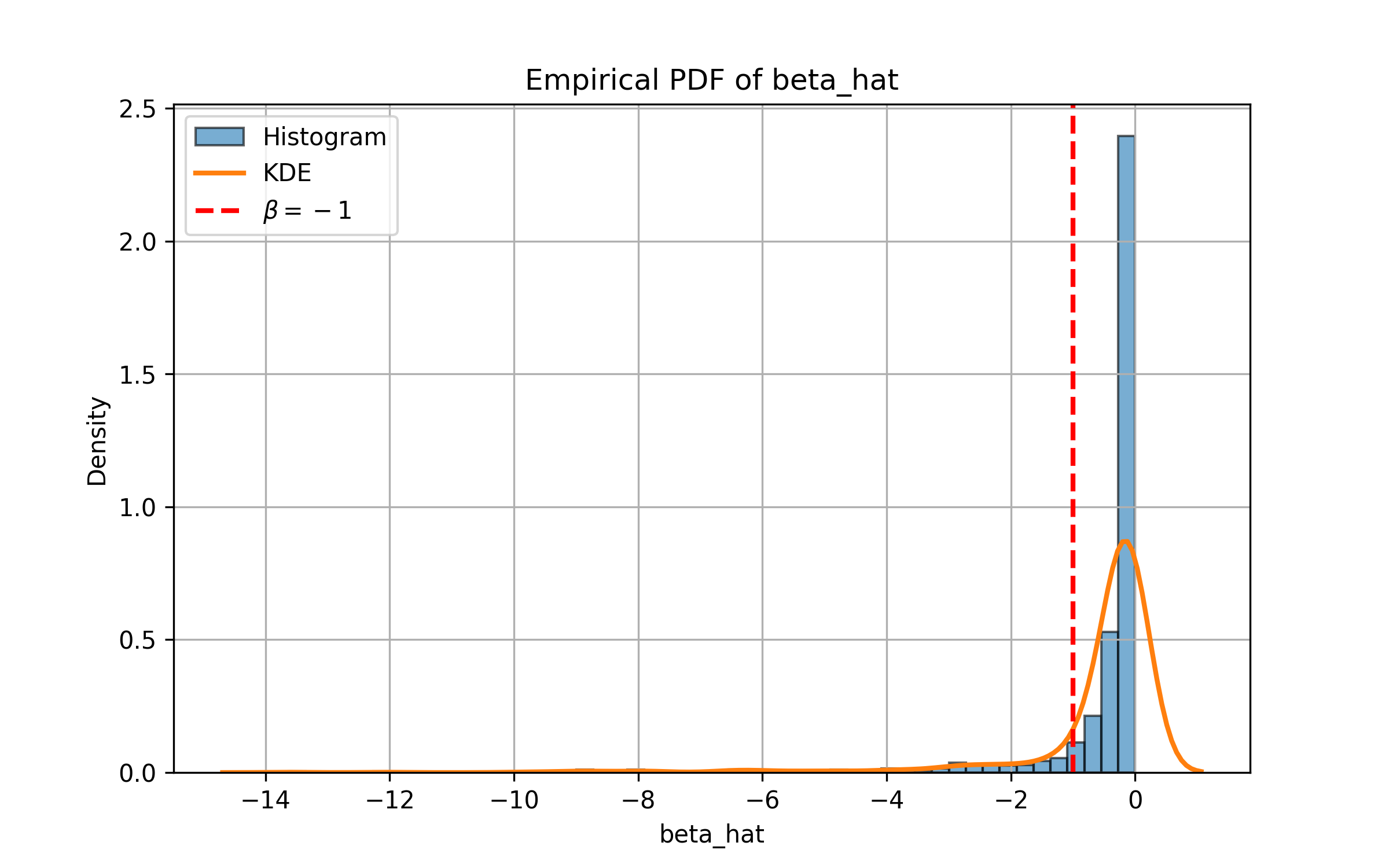}
    \caption{Estimating $\hat\beta$ with zero truncation but without doubly-robust ML.}
    \label{fig:beta_action_trun}
\end{minipage}
\hspace{0.5cm}
\begin{minipage}[t]{0.48\textwidth}
\centering
    \includegraphics[width=\linewidth]{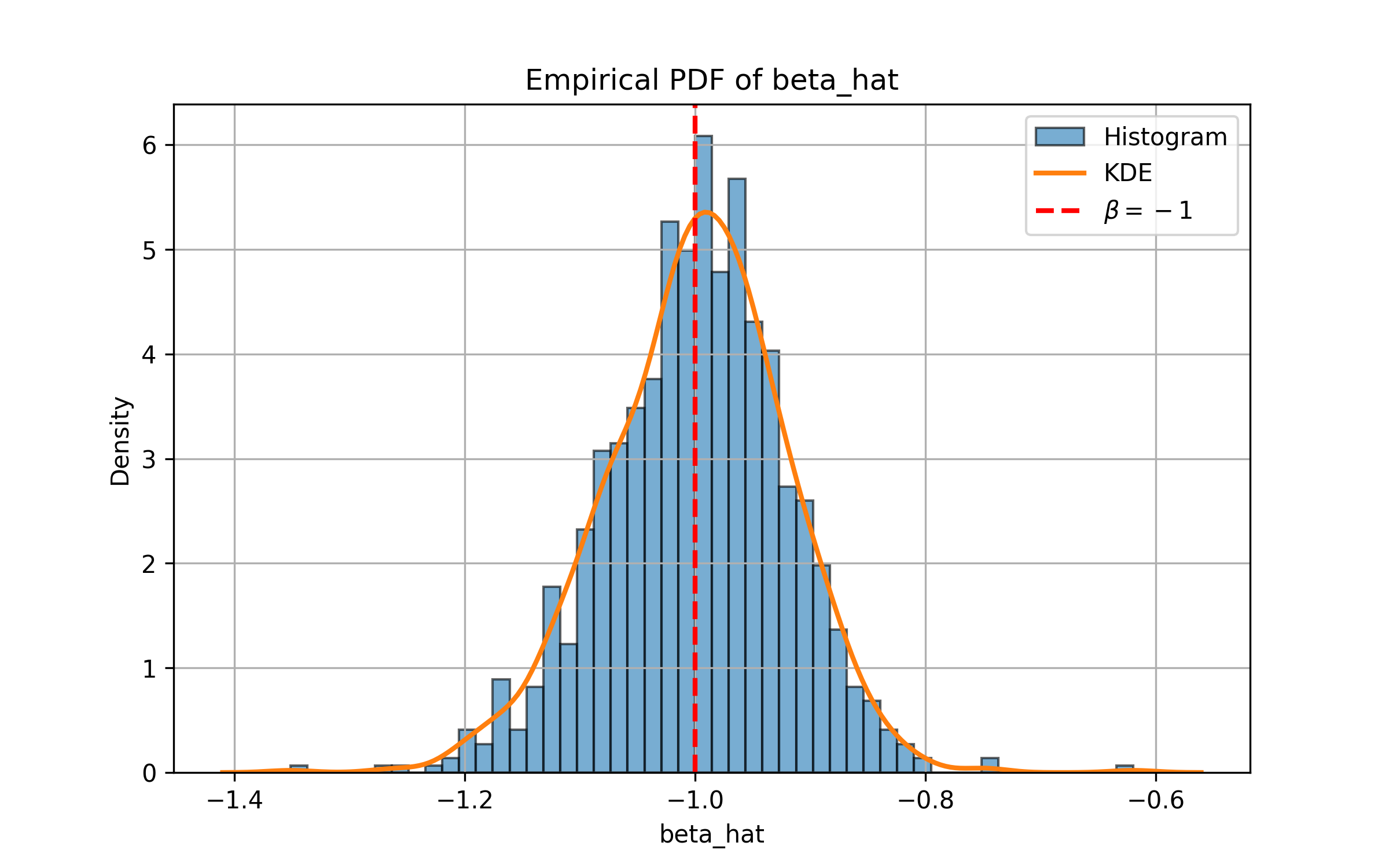}
    \caption{Estimating $\hat\beta$ with doubly-robust ML and zero truncation.}
    \label{fig:beta_doubly_trun}
\end{minipage}
\end{figure}

We also conduct another ablation study on the sign of $\beta$. Since we know $\beta$ is negative, we can use 0 as a truncation for $\hat\beta$. However, as we notice that the doubly-robust estimator is quite accurate, none of them is positive. 
Thus, adopting zero truncation in \Cref{fig:beta_doubly_trun} has little effect on the inference.
Although there are some theoretical assumptions on the network, we actually don't limit the parameters of the network used. In the meantime, 
we replace ERM with \texttt{Adam}, a more practical optimization algorithm compatible with common deep-learning libraries.
We observe strong performance, demonstrating the broad applicability of deep neural networks in real-world dynamic pricing.


\subsection{Empirical Study on Ride Hailing Platforms}\label{sec:ride}
Finally, we use Lyft data from November 26, 2018, to December 18, 2018, in Boston, Massachusetts\footnote{\url{https://www.kaggle.com/datasets/brllrb/uber-and-lyft-dataset-boston-ma}.}, to provide a deep learning workflow with the idea in \Cref{sec:ext}. Ride-hailing platforms typically also charge a booking fee or other forms of service fees per trip as their revenue mechanism. We aggregate the number of orders with the same origin and destination on an hourly basis, for example, from Haymarket Square to North Station, as a proxy for the market total volume of transactions. We use the weather information included in the dataset as features for demand prediction. Since many of the weather descriptions are in textual form, simply using dummy variables would not yield good results, as terms like Drizzle and Light Rain are clearly more similar to each other than to Clear. We first embed the weather descriptions using GloVe~\citep{pennington2014glove}, and then use a Transformer architecture~\citep{vaswani2017attention,joseph2021pytorch} to model the function $f(Q,x_t)$. 

For the Lyft study, we treat it as a time series to avoid information leakage in the test set. We group the data based on time and trip origin, and use group means to construct features. 
The data report the realized trip fare and surge multiplier. We recover the implied base fare by dividing the realized fare by the surge multiplier, and define the service-fee proxy as the difference between the realized fare and this recovered base fare.
We don't use Uber data as it doesn't contain this type of multiplier. For textual information, we use a 25-dimensional GloVe embedding, resulting in a 154-dimensional vector $x_t$. In the first stage, we use a transformer to learn how $x_t$ and the service fee affect the transacted quantity on the training set, namely $F(Q_t^e|x_t,a_t)$. In the second stage, we learn to predict the expected price $f(Q,x_t)$ leveraging the estimated quantity along with $x_t$. Finally, we use a third transformer to predict how the transacted quantity changes on the test set after adjusting the service fee. 
Similar to the Talabat instance, due to the lack of supply-side information and the presence of goals beyond revenue, we keep the average service fee unchanged and report an illustrative model-implied counterfactual from the nonlinear workflow.
Because of the non-linearity of transformers, the average $Q_t^e$ will now change with the linear approximation, compared to the original.

For each transformer, we use the following hyperparameters for training. The input embedding dimension is set to 32, and the embedding weights are initialized using the He initialization method. We apply dropout with a rate of 0.1 to the attention weights, the residual connections after layer normalization, and the feedforward network to prevent overfitting. The hidden dimension of the feedforward network is set to be four times the embedding size, and we set the batch size to be 32. We show the total parameters of the used neural networks in \Cref{tab:para}.

\begin{table}[htbp]
\centering
\begin{minipage}{0.52\textwidth}
\centering
\begin{tabular}{c|cccc}
\toprule
 head\textbackslash block & 2 & 4 & 6 & 8 \\
\midrule
2  & 41.6  &  82.9  & 124 &  165 \\
4  & 58.0 &  115 & 173  & 230 \\
8  & 90.7 & 181 &  271  &  362  \\
16 & 156 & 312  & 468  &   624 \\
\bottomrule
\end{tabular}
\caption{Total parameters of different network structures (in K).}
\label{tab:para}
\end{minipage}\hfill
\begin{minipage}{0.48\textwidth}
\centering
\begin{tabular}{c|cccc}
\toprule
 head\textbackslash block & 2 & 4 & 6 & 8 \\
\midrule
2  & 7.66 & 7.75 & 7.22 & 7.04 \\
4  & 7.19 & 7.53 & 7.58 & 7.53 \\
8  & 7.16 & 7.68 & 7.56 & 7.63 \\
16 & 7.23 & 7.09 & 7.21 & 7.58 \\
\bottomrule
\end{tabular}
\caption{Average revenue with different network structures.}
\label{tab:attention}
\end{minipage}
\end{table}

Besides the concern raised in the Talabat example, there are two additional concerns in this experiment. First, since the number of parameters far exceeds the number of samples, training may suffer from overfitting or get stuck in local optima. Second, the surge may not be a valid instrumental variable. For example, when there are sporting events in Boston, such as games by the Celtics or the Red Sox, certain areas experience not only a reduction in driver supply due to traffic congestion but also an increase in rider demand. As a result, the service fee we use may act as a confounder, which could explain the limited improvement in revenue.

We have a total of 23,540 data points. We use the first 80\% of the data as the training set and the remaining 20\% as the test set. We experiment with various network architectures, with the number of attention blocks ranging from 2 to 8, and the number of attention heads ranging from 2 to 16. The total number of parameters ranges from 41.7 K to 624 K. For a fair comparison, we also fix the average service fee. Please refer to \Cref{tab:attention} for the average revenue of our algorithm on the test set. The average revenue in the real data is 7.09. 
Among the 16 network architectures tested, 15 yield higher model-implied counterfactual revenue, with relative differences ranging from -0.65\% to 9.37\%.
Compared to the Talabat case, the model-implied Lyft differences are more modest, consistent with a mature U.S. ride-hailing market in which existing surge pricing already performs substantial dynamic adjustment. The exercise complements the Talabat analysis by illustrating how the nonlinear workflow operates in a more developed pricing environment.

Thus, a typical dynamic service fee pricing workflow involves first encoding various types of market information, such as images, text, and structured data, followed by using the service fee as an instrument to model the market demand curve with a neural network. Finally, based on the observed supply-side information, the service fee is adjusted to maximize revenue.

\end{document}